\documentclass[10pt,reqno]{amsart}

\usepackage{companypaper}
\microtypesetup{expansion=false}
\usepackage{natbib}
\usepackage{hyphenat}
\usepackage{enumitem}
\usepackage{verbatim}
\usepackage{textcomp}
\usepackage{xurl}
\makeatletter
\pretocmd{\@settitle}{\let\uppercasenonmath\@gobble}{}{}
\patchcmd{\@settitle}{\bfseries}{\bfseries\LARGE}{}{}
\pretocmd{\@setauthors}{\let\MakeUppercase\@firstofone}{}{}
\patchcmd{\@setauthors}{\centering\footnotesize}{\centering\normalsize}{}{}
\renewcommand{\@setaddresses}{}
\makeatother
\usepackage{pifont}
\usepackage{booktabs}
\usepackage{multirow}
\usepackage{threeparttable}
\usepackage{array}
\newcolumntype{L}[1]{>{\raggedright\arraybackslash}p{#1}}
\usepackage{nicefrac}

\DeclareMathOperator*{\argmax}{argmax}

\DeclareMathOperator{\softmax}{softmax}
\DeclareMathOperator{\simop}{sim}

\DeclareMathOperator{\TopK}{TopK}

\newcommand{\logos}{\textnormal{\textsc{Logos}}\xspace}
\newcommand{\maas}{\textnormal{\textsc{MaAS}}\xspace}
\newcommand{\aflow}{\textnormal{\textsc{AFlow}}\xspace}
\newcommand{\genome}{Agent Genome\xspace}
\newcommand{\pack}{Agent Pack\xspace}
\newcommand{\askf}{\mathrm{Ask\text{-}F_1}}
\DeclareMathOperator{\UCB}{UCB}
\DeclareMathOperator{\Beta}{Beta}

\newcommand{\reals}{\mathbb{R}}
\newcommand{\indic}[1]{\mathbf{1}\left\{#1\right\}}
\CompanyLogo{company_logo.png}
\CompanyLogoHeight{11mm}
\CompanyHeaderText{Fujitsu Limited}
\CompanyDocType{}
\CompanyClassification{}
\CompanySubtitle{}
\CompanyRunningTitle{LOGOS: A Living Logic for AI Agent Teams That Evolve With Humans}

\title{LOGOS: A Living Logic for AI Agent Teams That \\ Evolve With Humans}

\author{Yuma Ichikawa\textsuperscript{1,2},
Yamato Arai\textsuperscript{1,3},
Kosaku Kimura\textsuperscript{1},
Akira Sakai\textsuperscript{1,4},
Hiromichi Kobashi\textsuperscript{1}}
\thanks{\textsuperscript{1}Fujitsu Limited,
\textsuperscript{2}RIKEN Center for AIP,
\textsuperscript{3}The University of Tokyo,
\textsuperscript{4}Tokai University\\
\textbf{Correspondence}: Yuma Ichikawa at
\textcolor{blue}{\texttt{ichikawa.yuma@fujitsu.com}}}
\date{}

\begin{document}
\raggedbottom

\maketitle
\enlargethispage{4pt}

\begingroup
\makeatletter\let\maketitle\@empty\makeatother
\begin{abstract}
    AI agents are evolving from answer engines into persistent teams that use tools, delegate work, learn from experience, and modify the artifacts that shape their future behavior. The defining question for deployment is no longer merely what agents can do, but who controls what they are allowed to become. We introduce logos, a pluggable layer for self-evolution and governance that strengthens existing multiagent frameworks rather than replacing them. logos compiles heterogeneous multimodal inputs, including documents, images, audio, tables, databases, APIs, and human instructions into versioned agent packs containing agents, tools, knowledge, tests, permissions, and policies. During operation, it transforms agent activity into portable, auditable event traces and applies fail-closed verification across frameworks and backends. Every learned prompt, memory, skill, tool, role, or workflow remains an untrusted release candidate until held-out execution evidence, human-controlled policy, and explicit authorization permit its promotion. This architecture enables ``verifiable human-agent loop engineering'': agents can act, ask, learn, and propose improvements, while humans can steer objectives, permissions, approvals, and irreversible actions without interrupting continuous operation. logos provides a living logic for accountable automation. Agents may evolve at machine speed, but only evidence and human authority can close the loop.
\end{abstract}
\endgroup
\section{Introduction}
\label{sec:intro}

\begin{figure}[tb]
\centering
\includegraphics[width=1.0\linewidth]{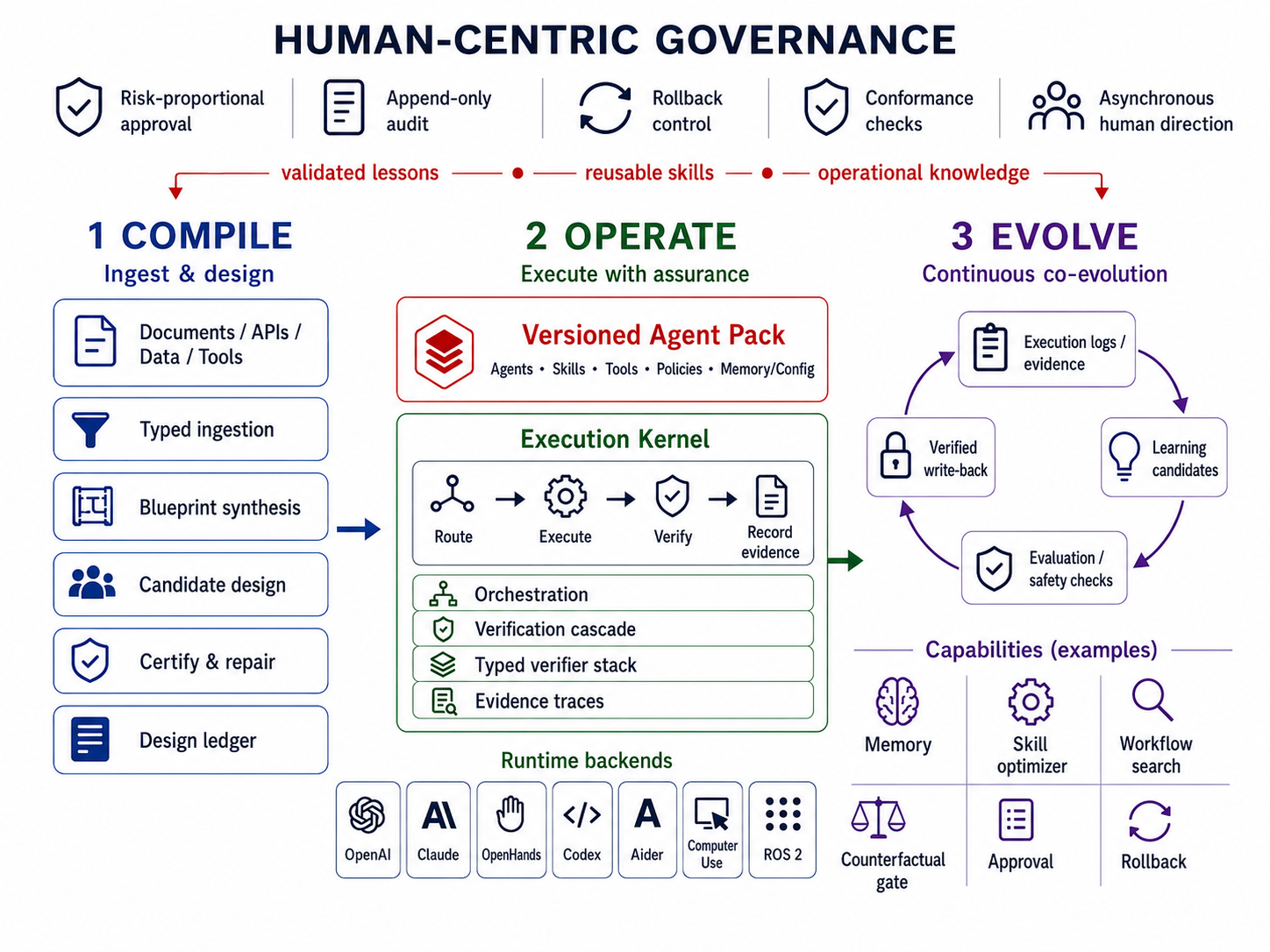}
\caption{The \logos compile--operate--evolve lifecycle. Heterogeneous
multimodal inputs and a human objective are compiled into a probe-validated or
fail-closed \pack. A pluggable execution layer runs the pack across compatible
multi-agent frameworks, while candidate changes remain isolated until a
proposer-independent promotion gate combining evidence and authority checks permits
promotion.}
\label{fig:overview}
\end{figure}

AI systems are becoming persistent actors rather than passive answer engines. Modern agents use tools over long horizons, retain state, coordinate across agents and organizations, and adapt through evaluation and experience~\citep{openai2025agents,google2025a2a}. Once they can revise prompts, memories, skills, tools, roles, and workflows, adaptation becomes a software release problem. The key deployment question is no longer only \emph{what can the model do?} It is also \emph{who decides what the deployed system may become?}

This question matters because persistence, permissions, credentials, memory, delegation, and access to production state turn model capabilities into operational consequences~\citep{nist2023airmf,owasp2025agenticThreats,owasp2026agenticTop10}. A model should not both propose and approve its own updates. A better prompt, useful memory, or a higher-scoring workflow does not, by itself, establish deployment safety or utility. Our governing principle is, therefore, simple: \emph{proposal is not promotion}. Agents may propose changes, but release must depend on independently defined evidence, policy, and authority.

We introduce \logos, a pluggable self-evolution and governance layer for multi-agent systems. Rather than replacing existing frameworks, \logos adds the compile--operate--evolve lifecycle shown in Figure~\ref{fig:overview}.\footnote{The name derives from the ancient Greek \emph{l'ogos}, meaning word, reason, or account. A self-modifying system should be able to account for why a change warrants release.} Compilation converts bounded, heterogeneous inputs, including documents, source code, images, audio, tables, databases, API specifications, and human instructions, into a versioned \pack containing agents, tools, knowledge, memory, routing policies, permissions, verifiers, validation probes, and provenance. Operation normalizes framework-specific behavior into typed, auditable events. Evolution keeps every learned prompt, memory, skill, tool, role, routing policy, or workflow isolated until held-out execution evidence, declared safety and regression constraints, human-owned policy, and required authorization permit promotion.

A root policy remains outside of ordinary self-evolution. It fixes the human-defined objectives, evaluators, and gate versions, protected holdouts, credential boundaries, permission-expansion rules, approval requirements, audit destinations, and allowed external effects. Agents may propose changes within these boundaries, but they cannot silently redefine their objectives, weaken their evaluators, inspect final holdouts, expand credentials, or authorize irreversible actions. Verification establishes whether a change appears to work; policy and approval determine whether it may be released. Rollback restores packs and configurations, while external effects require separate safeguards such as previews, transactions, idempotency, compensation, or human confirmation.

This design transforms human--agent collaboration into an observable, testable, and governable feedback loop. Agents can expose uncertainty, request missing information or authority, learn from outcomes, and propose improvements. Humans can steer asynchronously, restrict permissions, approve sensitive actions, pause execution, and restore the structural state. Oversight is concentrated at explicit control points, each with a defined scope, authority, evidentiary basis, and audit consequences.

Existing work typically separates four tightly coupled concerns. \emph{Construction} creates agents, roles, skills, tools, permissions, tests, and orchestration structures from source material and objectives~\citep{hu2024adas,zhang2024aflow,zhang2025maas,khattab2024dspy}. \emph{Routing} selects which model, agent, tool, or workflow should act under resource constraints~\citep{chen2023frugalgpt,ong2024routellm}. \emph{Verification} defines sufficient evidence for acceptance~\citep{zheng2023llmjudge}. \emph{Adaptation} modifies prompts, memories, skills, tools, roles, workflows, or code based on experience~\citep{gao2025survey,agrawal2025gepa,zhang2025dgm}. Self-evolving search agents raise the same release question for generated curricula~\citep{arai2026eveagent}.

These concerns cannot be governed independently. Routing depends on verifier quality; memory updates change future behavior; tool modifications alter permissions and external effects; and workflow optimization may improve a narrow proxy while reducing deployment utility. \logos therefore treats construction, routing, verification, and adaptation as one release-governance problem: \emph{which version should run next, under whose authority, and on what evidence?}

This paper formalizes and evaluates the governance contract required to answer that question: Agent Packs, portable execution traces, fail-closed verification, evidence-gated promotion, human-owned root policy, agent-initiated questions, and auditable control events. Our contribution is a unified contract for artifacts, events, verification, authority, promotion, and audit that makes pluggable self-evolution governable across multi-agent systems. We evaluate this contract through external benchmarks, mechanism studies, generated fault injection, and contract-conformance checks, using each method only for the claims it can support.

\paragraph{Contributions.}
The central claim is precise: \logos turns agent self-evolution into release engineering and human--agent collaboration into an auditable control loop. Specifically, this paper contributes:
\begin{enumerate}
\item a \textbf{multimodal agent-team compiler} that transforms heterogeneous inputs and human objectives into a probe-validated or fail-closed \pack;
\item a \textbf{pluggable execution and evidence layer} that strengthens compatible multi-agent frameworks through adapters, normalized traces, routing, verification, and target-side re-verification;
\item an \textbf{evidence-governed self-evolution protocol} that keeps learned prompts, memories, skills, tools, roles, and workflows isolated until held-out evidence and human-owned policy authorize promotion;
\item an \textbf{auditable human--agent control contract} for asynchronous direction, agent-initiated questions, scoped authority, approval, rollback, and audit; and
\item a \textbf{bounded evaluation account} that distinguishes task performance, mechanism evidence, fault injection, and contract conformance.
\end{enumerate}

\paragraph{Organization.}
Section~\ref{sec:background} introduces the concepts and notation for governed agent systems. Section~\ref{sec:overview} develops the \logos lifecycle. Section~\ref{sec:experiments} evaluates its mechanisms and evidence boundaries. Section~\ref{sec:related} positions \logos relative to multi-agent frameworks, automated agent design, routing, verification, self-evolution, memory, and human oversight.

\section{A Practical Primer on Governed Agent Systems}
\label{sec:background}

This section is a practical primer for readers who are new to agent systems and
a notation map for readers who already work on them. The basic distinction
is that a model produces tokens, while an agent is a policy that can choose
actions, call tools, observe results, and continue. A team of agents adds
delegation; persistence adds memory; self-evolution adds the ability to change
the artifacts that shape future behavior. Each step is useful, but each also
creates a new control problem. \logos addresses the last step: how a system can
learn from work without silently redefining the objective, evaluator,
permissions, credentials, or production state.

As a running example, consider a team that reviews vendor security-exception
requests. The compiler reads a policy excerpt, intake form, risk matrix, ticket
CSV, and exception-management API schema, then emits a pack with an intake
analyst, policy mapper, risk assessor, response writer, verifier, and a human
approval boundary. If an API schema later changes, the team may propose a tool
schema repair, but that repair is evaluated on held-out requests before it can
replace the live pack. If the team drafts a ticket update, rollback can restore
the pack version, but it cannot unsend an already submitted external update.
That is the practical distinction this paper formalizes.

Three symbols organize the paper. $\Pi$ denotes the agent team inside a pack,
$\mathfrak{D}$ denotes the full governed deployment object that is actually run,
and $\zeta$ denotes the normalized execution trace. The lifecycle is
\begin{equation}
 \mathfrak{D} \xrightarrow{\mathrm{run}} (\zeta,\chi_{\mathrm{eff}}) \xrightarrow{\mathrm{propose}} \mathfrak{D}' \xrightarrow{\mathrm{gate}} \mathfrak{D}_{+},
 \label{eq:bg_lifecycle}
\end{equation}
where $\chi_{\mathrm{eff}}$ records external effects and $\mathfrak{D}_{+}$ is
either the unchanged baseline $\mathfrak{D}$ or an accepted next version. This
notation separates three facts that are often conflated: a trace can justify a
proposal, a proposal is not a release, and rolling back a pack is not the same
as undoing an email, database write, payment, or other external effect.

The rest of the paper uses five terms consistently. A \emph{tool} is a typed capability that can read, compute, or affect outside state. A \emph{verifier} is a typed check whose verdict can accept, reject, or escalate an output. A \emph{trace} is the normalized event record used for audit and learning. A \emph{gate} is the release rule for candidate artifact changes. A \emph{root policy} is the human-owned envelope that fixes objectives, credentials, evaluators, approval rules, effect boundaries, and audit sinks outside ordinary self-evolution.

\begin{table}[tb]
\centering\small
\setlength{\tabcolsep}{4.2pt}\renewcommand{\arraystretch}{1.12}
\caption{From a model to a self-evolving system.}
\label{tab:agent_ladder}
\begin{tabular}{@{}L{0.18\linewidth}L{0.30\linewidth}L{0.42\linewidth}@{}}
\toprule
Stage & Added capability & New control problem \\
\midrule
Model & generates an answer & correctness and calibration \\
Agent & calls tools and acts & permissions and side effects \\
Agent team & delegates and coordinates & coordination and credit assignment \\
Persistent system & keeps memory and state & poisoning, staleness, and privacy \\
Self-evolving system & changes future behavior & evaluator gaming and authority drift \\
\bottomrule
\end{tabular}
\end{table}

For example, a support agent may fail after an API schema changes. An ungated evolution loop may rewrite the prompt because prompt editing is available, even though the defect is in the tool schema. The new prompt may improve a narrow proxy score while still failing against the updated API. The system has changed, yet deployment utility has fallen. This is \emph{misevolution}: a release artifact changed without evidence that the change improves the deployment.

\begin{table}[tb]
\centering\small
\setlength{\tabcolsep}{4.0pt}\renewcommand{\arraystretch}{1.15}
\caption{Deployment questions and release invariants used in later sections.}
\label{tab:governance_map}
\begin{tabular}{@{}L{0.16\linewidth}L{0.32\linewidth}L{0.42\linewidth}@{}}
\toprule
Question & Failure mode & Release invariant \\
\midrule
Build & manual wiring, missing tools, oversized permissions & the emitted pack passes declared probes or is returned as a typed diagnostic \\
Operate & avoidable cost, weak routing, silent verifier gaps & routes, verifier verdicts, costs, risks, and external effects are trace events \\
Verify & false accepts, judge overreach, schema-only approval & hard checks are necessary but not sufficient; semantic evidence and authority remain distinct \\
Adapt & regression, memory poisoning, proxy gaming, authority drift & a proposal is not a promotion, and root policy remains outside ordinary self-evolution \\
Reuse and effects & harmful transfer, rollback confusion, irreversible writes & imported knowledge is re-gated on the target, and pack rollback does not undo external effects \\
\bottomrule
\end{tabular}
\end{table}

Recent work increasingly treats agent design, prompts, workflows, and even
agent code as optimization objects~\citep{gao2025survey,hu2024adas,
zhang2024aflow,agrawal2025gepa,zhang2025dgm}; evidence-verification approaches
for self-evolving search agents and documented misevolution risks make the
release boundary correspondingly more important~\citep{arai2026eveagent,
shao2025misevolve}. Section~\ref{sec:related} collects the broader industrial
and research context so that this primer can stay focused on the core control
concepts.

The formal notation follows the operational order: a team of agents, a compiler that builds the team, a runtime that routes and verifies work, a memory that carries experience forward, and a gate that decides whether self-evolution has enough evidence to accept a candidate for promotion. Longer derivations are collected in Appendix~\ref{sec:bg_ext}.

\subsection{Multi-Agent Systems and Application Frameworks}
\label{sec:bg_mas}

\textbf{What is the deployed object?} A large language model, abbreviated LLM, maps a prompt to text. An \emph{agent} wraps that model in a loop that can choose an action, observe the result, and continue. If $h_{i,t}$ is agent $i$'s task history at step $t$, then a compact single-agent view is
\begin{equation}
 a_{i,t}\sim\psi_i(\cdot\mid h_{i,t},C_i,W_i;\xi),\quad a_{i,t}\in\mathcal{A}_{\mathrm{text}}\cup W_i\cup\{\textsc{stop}\}.
 \label{eq:single_agent}
\end{equation}
where $\mathcal{A}_{\mathrm{text}}$ is the set of text-emission actions, $C_i$ is the agent context, $W_i$ is its typed tool set, and $\xi$ represents randomness from decoding, scheduling, tools, or external services. A \emph{multi-agent system}, abbreviated MAS, is a team of such agents connected by delegation and shared state:
\begin{equation}
 \Pi = \bigl(\Gamma, \{\psi_i\}_{i=1}^{m}, \{C_i\}_{i=1}^{m}, \{W_i\}_{i=1}^{m}, E, \mathcal{B}_{\mathrm{exec}}\bigr).
 \label{eq:mas}
\end{equation}
Here $\Gamma$ is the communication and delegation graph; $\psi_i$ is agent $i$'s possibly stochastic, history-dependent model-and-decoding policy; $C_i$ is its context, including prompts, memory, and procedural skills; $W_i$ is its set of typed tools, such as a REST client, SQL reader, or file operation; $E$ is the shared environment or state interface; and $\mathcal{B}_{\mathrm{exec}}$ is the execution budget for calls, latency, or cost.

The MAS is not the whole safety boundary. The governed deployment object is
\begin{equation}
 \mathfrak{D} =
 \bigl(\mathsf{Pack},R_{\mathrm{root}},b,\mathsf{Cred},S_{\mathrm{env}},\Lambda_{\mathrm{audit}}\bigr),
 \label{eq:deployment_object}
\end{equation}
where $R_{\mathrm{root}}$ is the human-owned root policy outside the ordinary self-evolution loop, $b$ is a backend plus adapter, $\mathsf{Cred}$ is the credential capability envelope, $S_{\mathrm{env}}$ is live external state, and $\Lambda_{\mathrm{audit}}$ is the audit destination and policy. Candidate evaluation may copy pack artifacts and sandboxed state, but it may not mutate $R_{\mathrm{root}}$, credentials, live external state, or the audit sink.
Strictly speaking, an ordinary self-evolution candidate is a pack candidate,
$\mathsf{Pack}'$, evaluated under the fixed envelope
$(R_{\mathrm{root}},\mathsf{Cred},\Lambda_{\mathrm{audit}})$ and a sandboxed
state copy $\widetilde S_{\mathrm{env}}$. When we write a candidate deployment
$\mathfrak{D}'$, it is shorthand for this pack candidate placed under the same
human-owned envelope, not permission for the candidate to rewrite that
envelope.

We distinguish a deployment \emph{objective} or brief $q\in\mathcal{Q}$, such as ``triage support tickets,'' from an executable task space $\mathsf{T}_q$ and a deployment distribution $P_q\in\Delta(\mathsf{T}_q)$. A concrete task is $\tau\sim P_q$, and a held-out gate set is a finite sample $\mathcal{H}=(\tau_1,\ldots,\tau_n)\sim P_q^n$ unless a table explicitly states a fixed benchmark split. Running $\mathfrak{D}$ on $\tau$ with execution randomness $\xi$ produces an output, trace, updated external-state handle, and external-effect record
\begin{equation}
 (y,\zeta,S'_{\mathrm{env}},\chi_{\mathrm{eff}})=\mathrm{run}(\mathfrak{D},\tau;\xi),\quad \zeta=(e_1,\ldots,e_K)\in\mathcal{Z},\quad e_k\in \mathcal{E}_{\mathrm{tagged}}.
 \label{eq:trace}
\end{equation}
The event space $\mathcal{E}_{\mathrm{tagged}}$ is a tagged union over event kinds such as task, plan, action, observation, route decision, verifier verdict, cost, risk, human question, effect, and outcome; Appendix~\ref{sec:bg_ext} expands it. The trace is the common evidence stream used for debugging, routing, memory updates, and self-evolution. The effect record $\chi_{\mathrm{eff}}$ is what prevents a pack rollback from being confused with undoing emails, payments, database writes, or other outside-world effects. Task quality is measured by a scorer $s(\mathfrak{D},\tau;\xi)\in[0,1]$: tests for code, exact or token-overlap scoring for QA, or a rubric for free-form work. We write the deployment utility as
\begin{equation}
 J(\mathfrak{D};P_q)=\mathbb{E}_{\tau\sim P_q,\xi}\bigl[s(\mathfrak{D},\tau;\xi)-\lambda_c c(\mathfrak{D},\tau;\xi)-\lambda_\ell \ell(\mathfrak{D},\tau;\xi)-\lambda_r r_{\mathrm{risk}}(\mathfrak{D},\tau;\xi)\bigr],
 \label{eq:utility}
\end{equation}
where $c$, $\ell$, and $r_{\mathrm{risk}}$ denote monetary cost, latency, and risk, and the $\lambda$'s set their prices. Later objectives are simpler versions of Eq.~\eqref{eq:utility}. For task-level gates we distinguish the raw score $s$ from the utility integrand
\begin{equation}
 u(\mathfrak{D},\tau;\xi)=s(\mathfrak{D},\tau;\xi)-\lambda_c c(\mathfrak{D},\tau;\xi)-\lambda_\ell \ell(\mathfrak{D},\tau;\xi)-\lambda_r r_{\mathrm{risk}}(\mathfrak{D},\tau;\xi).
 \label{eq:task_utility}
\end{equation}
and write the gate metric as $m(\mathfrak{D},\tau;\xi)\in\{s(\mathfrak{D},\tau;\xi),u(\mathfrak{D},\tau;\xi)\}$. Most benchmark tables instantiate $m=s$ to isolate accuracy, while deployment gates can instantiate $m=u$ when cost, latency, or risk budgets are part of the acceptance contract.

In practice, MAS instances run inside agent frameworks such as AutoGen, MetaGPT, OpenHands, and OpenAI Agents~\citep{wu2023autogen,hong2024metagpt,wang2024openhands,openai2025agents}. Each framework has its own configuration, capability surface, and trace format. \logos makes learning portable with two contracts: a normalizer translates every backend trace $\mathcal{Z}_b$ into the common trace space $\mathcal{Z}$, and an emitter projects portable lessons back into backend-native artifacts $\mathcal{A}_b$:
\begin{equation}
 \phi_b:\mathcal{Z}_b\to\mathcal{Z},\quad \epsilon_b:\mathcal{A}_{\mathrm{portable}}\times G\times\mathcal{C}_b\to\mathcal{A}_b\cup\mathcal{D}_b.
 \label{eq:tracemap}
\end{equation}
Here $\mathcal{A}_{\mathrm{portable}}$ is a portable candidate artifact, $\mathcal{C}_b$ is the backend capability declaration, $\mathcal{A}_b$ is a backend-native artifact, and $\mathcal{D}_b$ is a compatibility diagnostic. The emitter is not a mathematical inverse or a bit-exact replay of backend-internal framework state; it is the contract that lets a skill learned on one backend be re-expressed and re-verified on another.

\subsection{The MAS Construction Problem}
\label{sec:bg_construct}

\textbf{What team should exist?} The first problem is to obtain a team whose roles, tools, permissions, and tests match the intended work. Construction is a typed map
\begin{equation}
 \mathrm{build}:(\mathcal{S}, q)\longmapsto\mathsf{BuildResult}\in\{\textsc{ProbeValidatedPack},\textsc{DiagnosticPack}\},
 \label{eq:bg_build}
\end{equation}
where $\mathcal{S}$ may contain bounded inputs from documents, tables, databases, images, or audio, and $q$ is the brief. A successful result is probe-checked only against the stated finite probes; throughout the paper, ``validated'' in \textsc{ProbeValidatedPack} has only this finite-probe meaning. A diagnostic result carries repair information and no live authority. In \logos the emitted pack is richer than $\Pi$ alone: an \pack contains an initial team $\Pi_0$, a versioned genome $G$, a verifier stack $\mathcal{V}$, finite validation probes $\mathcal{C}$, and manifest metadata $\mathcal{M}_{\mathrm{man}}$. This turns construction into a software build problem: the artifact should be runnable, inspectable, and testable, or fail-closed with a typed diagnostic.

Prior systems usually fix one side of this map. Hand-built frameworks leave the design to a developer~\citep{wu2023autogen,hong2024metagpt,qian2024chatdev,li2023camel}; pipeline compilers such as DSPy optimize within a declared program~\citep{khattab2024dspy}; automated designers search workflows or agent graphs on curated benchmarks~\citep{hu2024adas,zhuge2024gptswarm,shang2024agentsquare}. The missing piece for deployment is a fail-closed build interface from heterogeneous bounded operational inputs to either a team that has passed declared probes or a diagnostic artifact that carries no live authority. Section~\ref{sec:compiler} gives the compiler, whose fallback contract returns a typed success or diagnostic object and whose validation loop checks that the emitted tools are actually agent-callable under stated probes.

The tuple notation in this paper is an interface schema: it states what a pack,
trace, verifier, or gate must record and where authority changes hands. It is
not meant to turn every data structure into a theorem. Formal claims are stated
only where the assumptions and denominators are explicit.

\subsection{Routing in Agentic Systems}
\label{sec:bg_routing}

\textbf{Who should act on this task?} Once a team exists, each incoming task still needs a runtime decision: which model, agent, or workflow should handle it? We use one notation for all of these choices. A routing policy maps a task context $x$ to a probability distribution, denoted by $\Delta(\cdot)$, over finite routing actions:
\begin{equation}
 \rho:\mathcal{X}\longrightarrow\Delta(\mathcal{A}_{\mathrm{route}}),\quad \mathcal{A}_{\mathrm{route}}=\{a_1,\ldots,a_K\}.
 \label{eq:bg_router}
\end{equation}
Write $x_\tau\in\mathcal{X}$ for the routing context extracted from task $\tau$ and the current deployment state. If action $a$ on task $\tau$ yields reward $r(\tau,a)$ at cost $c(\tau,a)$, the basic routing objective is
\begin{equation}
 \rho^\star=\argmax_{\rho}\mathbb{E}_{\tau\sim P_q}\mathbb{E}_{a\sim\rho(x_\tau)}\bigl[U(\tau,a)\bigr],\quad U(\tau,a)=r(\tau,a)-\lambda_{\mathrm{cost}}c(\tau,a).
 \label{eq:bg_route_obj}
\end{equation}
Because only the chosen action's result is observed, learned routing is a contextual bandit~\citep{auer2002ucb,lattimore2020bandit}.

There are two useful regimes. A \emph{pre-answer} router commits before seeing any answer: for example a static cost router that sends simple tasks to a cheap tier and hard tasks to a stronger tier. An \emph{evidence-aware} router observes intermediate evidence and can escalate. The main evidence-aware pattern is a verification-gated cascade: try cheaper stages first, accept only when a verifier passes, and otherwise escalate. Under the explicit cascade assumptions in Section~\ref{sec:cascade}, a sound verifier and strongest fallback make cascade accuracy no worse than always using the fallback; Appendix~\ref{sec:bg_ext} gives the cost condition and false-accept accounting. Section~\ref{sec:routing_verif} implements static routing, learned routing, and such a cascade in one family.

\subsection{Verification of Agent Outputs}
\label{sec:bg_verification}

\textbf{What evidence is sufficient for acceptance?} Verification is the trust primitive used by routing, compilation, and self-evolution. A verifier returns a typed verdict
\begin{equation}
 \mathbb{V}=\{\textsc{pass},\textsc{fail},\textsc{not-applicable},\textsc{error}\},\quad \nu_k(\tau,y;\omega_k)\in \mathbb{V},\quad b(v)=\indic{v=\textsc{pass}}.
 \label{eq:bg_verifier}
\end{equation}
Here $\omega_k$ collects verifier configuration or randomness. The Boolean map $b$ is the only value used for acceptance. The important safety quantities must state their denominator: acceptance among incorrect outputs, $\Pr[b(\nu)=1\mid\mathrm{incorrect}]$, is different from error among accepted outputs, $\Pr[\mathrm{incorrect}\mid b(\nu)=1]$. Cascade analysis additionally conditions on reaching a stage.

\logos separates output verification from candidate promotion. Output verification uses hard constraints, such as schema validation, syntax, lint, sandbox limits, and safety allow-lists, together with acceptance requirements that declare which task evidence is sufficient for a named scope. Hard constraints are necessary but not sufficient: at least one applicable semantic acceptance requirement must pass, and missing, crashed, inapplicable, or unparsable checks fail closed. Human approval governs authority-sensitive actions; it is not a verifier verdict and cannot turn missing semantic evidence into a pass. Section~\ref{sec:verification} gives the full composite rule in Eq.~\eqref{eq:verifier_composite}.

\subsection{Self-Evolution as Optimization over Agent Systems}
\label{sec:bg_selfevolve}

\textbf{What may change, and who authorizes the change?} Self-evolution proposes changes to a deployed pack from its own experience; proposal is intentionally typed so it cannot mutate the live deployment. We write a generic proposal operator as
\begin{equation}
 f_{\mathrm{prop}}:(\mathsf{Pack},\zeta,r_{\mathrm{fb}};R_{\mathrm{root}})\longmapsto \mathsf{CandidatePack}\cup\{\textsc{abstain}\},
 \label{eq:evolve_op}
\end{equation}
where $\zeta$ is the trace and $r_{\mathrm{fb}}$ is feedback, such as a score, human rating, or encoded directive. The operator may propose edits to prompts, skills, memory, tools, workflows, verifier declarations, or code-facing artifacts, but its output is an isolated candidate pack or an abstention. Following the survey taxonomy of \citet{gao2025survey}, we ask four questions for each operator: \emph{what} changes, \emph{when} it changes during or between tasks, \emph{how} it is proposed by reward, imitation, or population search, and \emph{where} the change is stored as a deployment-pack or shared-asset update.

This taxonomy helps position the literature. Reflexion and Self-Refine edit context during a task~\citep{shinn2023reflexion,madaan2023selfrefine}; Voyager accumulates skills across a world~\citep{wang2024voyager}; GEPA evolves prompts~\citep{agrawal2025gepa}; AFlow and MaAS search workflows~\citep{zhang2024aflow,zhang2025maas}; EVE-Agent makes self-generated search curricula auditable by requiring evidence spans and an evidence verifier~\citep{arai2026eveagent}; and the Darwin--G\"odel Machine changes agent code~\citep{zhang2025dgm}. Section~\ref{sec:echo} shows how \logos implements memory, skill, workflow, and code-facing variants under one promotion gate.

\subsection{Memory in Agent Systems}
\label{sec:bg_memory}

\textbf{What survives from one task to the next?} Between-task evolution needs memory. A memory has a write operator that stores an episode and a read operator that recalls relevant items:
\begin{equation}
 \omega:(\mathcal{M}_{\mathrm{mem}},\eta)\longmapsto\mathcal{M}_{\mathrm{mem}}',\quad \mathrm{read}:(\mathcal{M}_{\mathrm{mem}},\mathbf{q})\longmapsto\mathcal{R}\subseteq\mathcal{M}_{\mathrm{mem}}.
 \label{eq:bg_memory}
\end{equation}
Here $\eta$ is an episode, $\mathbf{q}$ is an embedding of the current task, and $\mathcal{R}$ is injected into an agent's context. Similarity alone is not enough: a memory can be semantically close but harmful. \logos therefore treats retrieval as a value-aware selection problem with bandit-style updates, combining semantic relevance with learned outcome value. Section~\ref{sec:memory} realizes this as a three-tier memory whose raw episodes are promoted to validated lessons and durable skills only after passing gates. LongMemEval and LongMemEval-V2 make clear that long-lived agent memory should be evaluated not only as chat-history recall, but also as environment-specific workflow, state, and failure-mode knowledge~\citep{wu2025longmemeval,wu2026longmemevalv2}.

\subsection{Cross-Deployment Knowledge Transfer}
\label{sec:bg_transfer}

\textbf{When is prior experience safe to reuse?} A platform should support controlled reuse across deployments, not only isolated per-pack learning. Let $\mathcal{K}_u$ be the platform knowledge state before processing deployment $u$:
\begin{equation}
 \mathcal{K}_u=(\mathcal{D}_u,\Sigma_u,\mathcal{L}_u),\quad \mathcal{K}_{u+1}=\omega_{\mathrm{plat}}\bigl(\mathcal{K}_u,\Pi^{(u)},\{\zeta^{(u)}_t\}_t\bigr).
 \label{eq:bg_transfer}
\end{equation}
The three components are design precedents $\mathcal{D}$, validated skills $\Sigma$, and shared workplace lessons $\mathcal{L}$, and $\omega_{\mathrm{plat}}$ is the platform-level knowledge update operator. A new build can condition on $\mathcal{K}_{u}$, but recalled knowledge can also hurt when the new brief differs from the old one. Because $\mathrm{build}$ returns a \textsc{ProbeValidatedPack} or a \textsc{DiagnosticPack}, transfer utility is defined only after a validated pack is unwrapped and deployed under the target root envelope:
\begin{align}
 \mathsf{Pack}^{(u)}_{\mathcal{K}}
 &= \operatorname{unwrapValidated}\bigl(\mathrm{build}(\mathcal{S}^{(u)},q^{(u)};\mathcal{K}_{u})\bigr), \nonumber\\
 \mathsf{Pack}^{(u)}_{\varnothing}
 &= \operatorname{unwrapValidated}\bigl(\mathrm{build}(\mathcal{S}^{(u)},q^{(u)};\varnothing)\bigr), \nonumber\\
 \mathfrak{D}^{(u)}_{\mathcal{K}}
 &= \operatorname{deploy}\bigl(\mathsf{Pack}^{(u)}_{\mathcal{K}},
 R_{\mathrm{root}}^{(u)},b^{(u)},\mathsf{Cred}^{(u)},S_{\mathrm{env},0}^{(u)},\Lambda_{\mathrm{audit}}^{(u)}\bigr), \nonumber\\
 \mathfrak{D}^{(u)}_{\varnothing}
 &= \operatorname{deploy}\bigl(\mathsf{Pack}^{(u)}_{\varnothing},
 R_{\mathrm{root}}^{(u)},b^{(u)},\mathsf{Cred}^{(u)},S_{\mathrm{env},0}^{(u)},\Lambda_{\mathrm{audit}}^{(u)}\bigr), \nonumber\\
 \Delta_u
 &= J\bigl(\mathfrak{D}^{(u)}_{\mathcal{K}};P_{q^{(u)}}\bigr)
 -
 J\bigl(\mathfrak{D}^{(u)}_{\varnothing};P_{q^{(u)}}\bigr).
 \label{eq:bg_transfer_gain}
\end{align}
The expression is evaluated only when both calls to $\operatorname{unwrapValidated}$ succeed. If either build returns a diagnostic pack, $\Delta_u$ is not a utility gain; the event is recorded as a fail-closed build result with no live authority. \logos treats transfer as a candidate, never as an unconditional prior: design precedents, ported skills, and World lessons must re-enter the same non-regression lifecycle in their target deployment. The present evidence is therefore authority-boundary evidence rather than a transfer-gain estimate.

\subsection{Human Oversight and the Value of Information}
\label{sec:bg_hitl}

\textbf{When should the system ask a human?} A deployed agent team should ask a human when the expected value of the answer exceeds the interruption cost. In the ideal decision rule, let $\mathcal{A}_{\mathrm{dec}}$ be the set of possible decisions, let $U(a;x)$ be the expected utility of choosing decision $a$ in context $x$, and let $U(a;x,b)$ be the same utility after observing answer $b$ to question $\beta$. The question should be asked when
\begin{equation}
 \mathrm{VOI}(\beta;x)=\mathbb{E}_{b\sim P(\cdot\mid x,\beta)}\Bigl[\max_{a\in\mathcal{A}_{\mathrm{dec}}} U(a;x,b)\Bigr]-\max_{a\in\mathcal{A}_{\mathrm{dec}}} U(a;x)\ge c_{\mathrm{ask}}(\beta).
 \label{eq:bg_voi}
\end{equation}
This quantity is rarely observable, so \logos uses a deployable surrogate: estimated uncertainty, deliverable impact, and cost of being wrong, under a human-attention budget. When labeled calibration data are available, the uncertainty estimate can be calibrated; otherwise it remains a heuristic risk signal. Section~\ref{sec:hitl} also handles the reverse direction: free-form human directives are fused with validated memory and clarified when ambiguous, rather than appended blindly.

\subsection{Misevolution and the Safety Requirement}
\label{sec:bg_misevolution}

\textbf{How do we distinguish adaptation from degradation?} The danger of self-evolution is \emph{misevolution}: the deployment changes itself and becomes worse than the version it started from~\citep{shao2025misevolve}. Because a deployment may value task success, cost, latency, and risk, we define the event on the utility $J$ of Eq.~\eqref{eq:utility}. A trajectory exhibits misevolution if, for some round $t$,
\begin{equation}
 J\bigl(\mathfrak{D}_t;P_q\bigr) < J\bigl(\mathfrak{D}_0;P_q\bigr).
 \label{eq:misevolution}
\end{equation}
\logos's population target is adjacent-version non-regression in mean utility,
\begin{equation}
 \Delta J = J(\mathfrak{D}';P_q) - J(\mathfrak{D};P_q) \ge 0.
 \label{eq:nonregression}
\end{equation}
The deployed gate observes only a held-out sample $\mathcal{H}\sim P_q^n$ or a declared benchmark split and checks the empirical surrogate
\begin{equation}
 \widehat{\Delta}_{\mathcal{H}}=\frac{1}{|\mathcal{H}|}\sum_{\tau\in\mathcal{H}}\bigl(m(\mathfrak{D}',\tau;\xi_\tau')-m(\mathfrak{D},\tau;\xi_\tau)\bigr),
 \label{eq:bg_empirical_delta}
\end{equation}
under the metric choice $m\in\{s,u\}$ stated by the table or deployment, with $\xi_\tau'=\xi_\tau$ only when common random numbers or deterministic replay are available. The paired gate reports this held-out evidence; by itself it does not prove Eq.~\eqref{eq:nonregression} on the deployment population.

The deployed gate has three empirical regimes. \emph{Positive observed mean gain} requires the gate-sample mean to clear a margin. \emph{Regression-budgeted evidence} additionally caps the number or rate of observed task drops, as in Eq.~\eqref{eq:echo_gate}. \emph{Zero observed task-level regressions} sets that budget to zero on the gate sample. An optional anytime-valid gate adds error control under the sampling and holdout-firewall assumptions of Section~\ref{sec:echo_gate}. Separately, a cumulative baseline audit periodically compares the current deployment against $\mathfrak{D}_0$ or a human-designated stable baseline, because adjacent-version updates can still accumulate into global degradation under finite-sample error or distribution shift.
Two failure modes make this difficult. \emph{Credit dilution} means a trace does not reveal which component caused a failure, so the wrong asset may be edited. \emph{Proxy evaluation} means a cheap surrogate score can approve changes that hurt the real task. Section~\ref{sec:echo} shows how the paired gate addresses both operationally: it assigns a primary typed component or abstains, then evaluates candidates by task-paired execution of baseline and candidate in the target execution harness on held-out tasks.

\subsection{Anytime-Valid Inference}
\label{sec:bg_avi}

\textbf{How are repeated release decisions monitored?} \logos makes repeated accept/reject decisions while evidence arrives over time. Fixed-sample tests are not valid if we inspect results after every task and stop at the first favorable time. Anytime-valid inference addresses this problem with non-negative evidence processes whose crossing probability is controlled under the null hypothesis~\citep{ramdas2023gametheoretic,shafer2021testing,vovk2021evalues}. Operationally, an e-process accumulates evidence against a null while remaining valid under optional stopping. Ville's inequality gives the key guarantee: threshold crossing remains valid at any stopping time~\citep{ville1939}.

In \logos this machinery appears in three places. Compilation can attach anytime-valid evidence to sampled tool callability; the optional adoption gate can attach anytime-valid evidence to candidate gains and regression rates; and the LORD-style ledger accounts for repeated candidate decisions. Formal online error control applies only when the candidate-level p-values and dependence assumptions stated in Appendix~\ref{sec:sage_lord_formal} hold~\citep{javanmard2018online,ramdas2018saffron}. Appendix~\ref{sec:bg_ext} gives the formal processes, confidence sequences, and spending rule.

\subsection{Lifecycle Contract}
\label{sec:bg_map}

\logos's contract is deliberately narrow: each safety-critical lifecycle decision must be represented as an artifact decision and checked by execution evidence before it can affect the live deployment. Construction must return a pack that passes declared probes or a diagnostic artifact. Operation must emit normalized events and fail-closed verifier outcomes. Adaptation must remain an isolated candidate until paired evidence, root-policy compatibility, and required authorization permit promotion. Human oversight is not a side channel: questions, directives, approvals, rejections, pause/resume, rollback, and effect controls are scoped events inside the same lifecycle.

The corresponding mechanisms are developed together in Section~\ref{sec:overview}. Interface presentation, storage layout, and infrastructure choices are outside the scientific claim; the claim concerns artifact semantics, event semantics, gate semantics, and authority semantics.

\section{\logos: Release Governance for Agent Teams}
\label{sec:overview}

Section~\ref{sec:background} identified four recurring questions: what to
build, who should act, what evidence is sufficient, and what may change. \logos
answers them with the release lifecycle introduced in Section~\ref{sec:intro}:
\emph{compile} a governed Agent Pack, \emph{operate} it through a
trace-normalizing kernel, \emph{propose} changes from evidence, and
\emph{promote} only candidates that pass an isolated gate. The architecture does
not require every mechanism to be novel in isolation. What matters is that all
mechanisms write to the same release record and obey the same
authority boundary. Four concepts are sufficient for a first reading; named
modules implement them:

\begin{table}[tb]
\centering\small
\setlength{\tabcolsep}{4.5pt}\renewcommand{\arraystretch}{1.12}
\caption{Four core concepts used to read the \logos platform.}
\label{tab:core_concepts}
\begin{tabular}{@{}>{\raggedright\arraybackslash}p{0.22\linewidth}>{\raggedright\arraybackslash}p{0.34\linewidth}>{\raggedright\arraybackslash}p{0.34\linewidth}@{}}
\toprule
Concept & Role in \logos & Operational implication \\
\midrule
Agent Pack & versioned release artifact containing agents, tools, memories, verifiers, policy, probes, and provenance & the deployable unit is inspectable and testable \\
Execution Kernel & adapter layer that runs packs on compatible backends and normalizes traces & backend portability does not bypass verification \\
Evolution Gate & external promotion rule for memory, skill, tool, role, and workflow changes & candidates are evaluated before they can mutate the live pack \\
Human Authority & root policy, approval, credentials, permission expansion, and final-holdout control & release authority remains human-owned \\
\bottomrule
\end{tabular}
\end{table}

\logos is a release-governance layer above foundation models and orchestration
frameworks. Its central invariant is that no candidate modifies the live
deployment before an isolated gate authorizes promotion. This invariant is
stronger than an isolated prompt-edit test: it also
fixes who owns the evaluator, which credentials the candidate may use, what
side effects can be previewed, which holdout feedback is hidden from future
proposals, and how rollback and audit are scoped.

Equivalently, for live state $(\mathfrak{D}_t,S_{\mathrm{env},t})$, proposal and evaluation must run on an isolated pack candidate $(\mathsf{Pack}',\widetilde S_{\mathrm{env}})$ with scoped candidate credentials and an effect-preview envelope. The promotion predicate decomposes into evidence, policy, and authorization:
\begin{equation}
 \mathrm{Promote}(\mathsf{Pack}';\mathfrak{D}_t)\iff B_{\mathrm{gate}}(\mathfrak{D}_t,\mathsf{Pack}';\mathcal{H}_{\mathrm{gate}})\wedge B_{\mathrm{root}}(\mathsf{Pack}',\widetilde{\chi}_{\mathrm{preview}};R_{\mathrm{root}})\wedge B_{\mathrm{auth}}(\mathsf{Pack}';R_{\mathrm{root}}),
 \label{eq:promotion_contract}
\end{equation}
where $B_{\mathrm{gate}}$ is the candidate-level held-out gate decision, $B_{\mathrm{root}}$ checks the human-owned policy against the previewed effect record $\widetilde{\chi}_{\mathrm{preview}}$, and $B_{\mathrm{auth}}$ records any role-based or human approval required by that policy. This separates output acceptance $B_{\mathrm{output}}(\tau,y)$ from candidate promotion: evidence decides whether the candidate appears to work, while policy and authority decide whether it may become real. Under the trusted-computing-base assumptions in Section~\ref{sec:security_artifacts}, until this predicate passes, the transition is required to be the identity on live deployment and live external state:
\begin{equation}
 \neg\mathrm{Promote}(\mathsf{Pack}';\mathfrak{D}_t)\Longrightarrow(\mathfrak{D}_{t+1},S_{\mathrm{env},t+1})=(\mathfrak{D}_t,S_{\mathrm{env},t}).
 \label{eq:adoption_invariant}
\end{equation}
This section is organized around Agent Pack, Execution Kernel,
Evolution Gate, and Human Authority. Named modules are mechanisms and are
detailed only where they are needed to explain the central
release-governance contract. It defines the pack and its evolvable
genome, the kernel that executes it, the lifecycle interface, compilation,
routing, verification, execution-gated self-evolution, human governance, and
security. Table~\ref{tab:lifecycle_obligations} summarizes the evidence
obligation each lifecycle stage must discharge before it can affect the live
deployment.

\begin{table}[tb]
\centering\small
\setlength{\tabcolsep}{4.0pt}\renewcommand{\arraystretch}{1.12}
\caption{Lifecycle stages and the evidence each stage must leave behind.}
\label{tab:lifecycle_obligations}
\begin{tabular}{@{}L{0.17\linewidth}L{0.34\linewidth}L{0.39\linewidth}@{}}
\toprule
Stage & What changes & Evidence obligation \\
\midrule
Compile & source material becomes a candidate pack & typed manifest, tool and skill specs, validation probes, provenance, diagnostic fallback \\
Operate & tasks are executed through a backend & normalized trace, route decisions, tool/effect records, verifier verdicts, cost and risk events \\
Propose & experience becomes a candidate edit & component attribution or abstention, patch record, sandbox result, root-policy compatibility check \\
Promote & accepted candidate replaces live artifact & paired gate result, safety and regression record, approval when required, rollback pointer and audit hash \\
\bottomrule
\end{tabular}
\end{table}

\subsection{The Agent Pack and the Agent Genome}
\label{sec:pack}

An \pack is a versioned release representation of a MAS instance. It pairs an initial system $\Pi_0$ of Eq.~\eqref{eq:mas}, materialized as agent definitions, procedural \emph{skills}, and typed \emph{tools}, with a declarative \emph{\genome} $G$, verifier declarations $\mathcal{V}$, validation probes $\mathcal{C}$, and manifest metadata $\mathcal{M}_{\mathrm{man}}$:
\begin{equation}
\begin{aligned}
 \mathsf{Pack}
 &= \bigl(\Pi_0,G,\mathcal{V},\mathcal{C},\mathcal{M}_{\mathrm{man}}\bigr),\\
 G
 &= \bigl(G_{\text{route}},G_{\text{mem}},G_{\text{verify}},G_{\text{promo}},
           G_{\text{safety}},G_{\text{sand}},G_{\text{tools}},G_{\text{att}}\bigr).
\end{aligned}
 \label{eq:genome}
\end{equation}
The eight axes govern routing and fallback $G_{\text{route}}$, memory $G_{\text{mem}}$, verification declarations used during ordinary operation $G_{\text{verify}}$, promotion staging $G_{\text{promo}}$, execution safety $G_{\text{safety}}$, sandbox limits $G_{\text{sand}}$, tool exposure including human questions $G_{\text{tools}}$, and the human-attention budget $G_{\text{att}}$. The manifest $\mathcal{M}_{\mathrm{man}}$ records schema versions, adapter requirements, provenance, hashes, and artifact/replay metadata. The genome is assembled by a field-wise deep overlay $G = G_{\text{pack}} \sqcup G_{\text{task}}$, where $G_{\text{pack}}$ contains pack defaults and $G_{\text{task}}$ contains task- or brief-specific overrides. An unset override never clobbers a default; conflicting concrete values are resolved by declared precedence or reported as diagnostics rather than silently merged. A candidate $G'$ can be staged, promoted only under the non-regression evidence of Eq.~\eqref{eq:nonregression}, or rolled back, as Figure~\ref{fig:genome} illustrates and Section~\ref{sec:evolution} formalizes.

\paragraph{Human-owned root policy.}
The evolvable genome is not the whole deployment contract. A separate root policy, owned outside the ordinary self-evolution loop, fixes the human objective, minimum safety constraints, final evaluator and gate versions, final-holdout sampler, credential boundaries, permission-expansion rules, approval policy, external-effect envelope, and audit sink. Candidate packs may propose changes within this envelope, but they cannot silently rewrite the evaluator or gate that decides their own adoption, inspect final-gate examples, expand write permissions, change effect policy, suppress audit, or mutate the live pack before atomic promotion. External side effects are governed by the root policy and the effect envelope of Section~\ref{sec:security_artifacts}, not by pack rollback alone.
We write this root envelope as
\begin{equation}
 R_{\mathrm{root}}=\bigl(O_{\mathrm{human}},C_{\mathrm{safety}},V_{\mathrm{final}},H_{\mathrm{sampler}},P_{\mathrm{perm}},P_{\mathrm{approval}},P_{\mathrm{effect}},C_{\mathrm{cred}},\Lambda_{\mathrm{audit}}\bigr),
 \label{eq:root_policy}
\end{equation}
where the fields denote the human objective, mandatory safety constraints, final evaluator/gate versions, final-holdout sampler, permission policy, approval policy, effect policy, credential envelope, and audit sink. The overlay operator is a deterministic partial operator $\sqcup:G\times G\to G\cup\mathsf{Conflict}$: unset fields inherit defaults, type mismatches and unresolved precedence conflicts produce diagnostics, and a candidate may be promoted only when the resulting genome and root envelope are jointly admissible.

\begin{figure}[tb]
 \centering
 \includegraphics[width=0.9\linewidth]{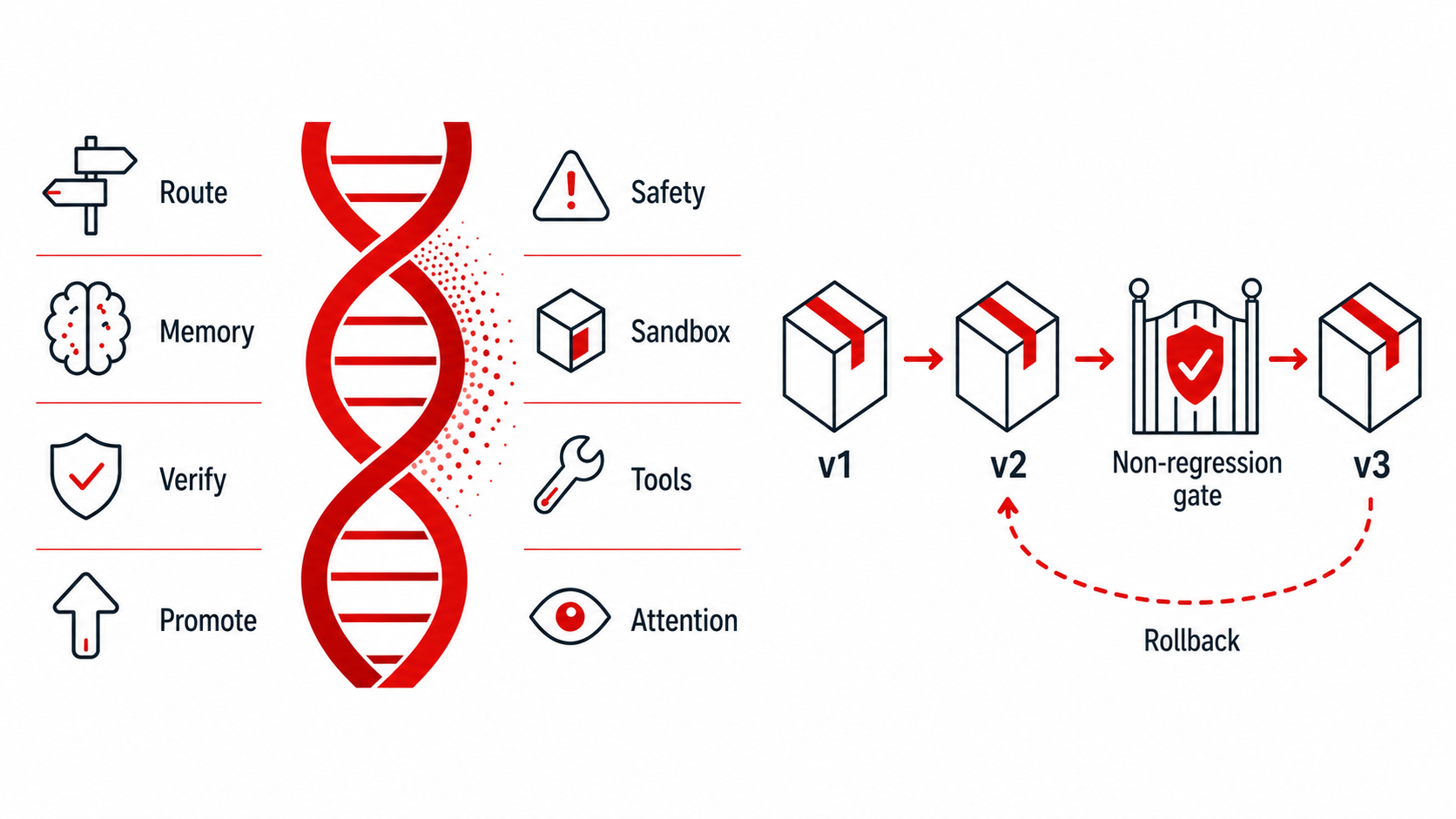}
 \caption{Root policy sits outside the evolvable genome: candidates can be copied, gated, rejected, or atomically promoted, but cannot rewrite the evaluator, permissions, or final holdout.}
 \label{fig:genome}
\end{figure}

\subsection{Adapter-Based Execution Kernel}
\label{sec:kernel}

The kernel executes a pack only on a backend whose declared capabilities satisfy the pack requirements. An adapter maps backend-native events into the normalized trace $\mathcal{Z}$ of Eq.~\eqref{eq:trace} and maps portable artifacts back into backend-native configuration through Eq.~\eqref{eq:tracemap}. This separates the release contract from any one orchestration library. Existing frameworks such as OpenHands and OpenAI Agents illustrate the diversity of execution surfaces that an adapter layer must accommodate~\citep{wang2024openhands,openai2025agents}.

An adapter is accepted only if it preserves the governance contract. Its plugin
contract has six obligations: \emph{capability discovery} declares supported
models, tools, event kinds, effect classes, credentials, and verifier bindings;
\emph{event normalization} maps backend events into $\mathcal{Z}$ without
dropping required task, route, tool, verdict, human, effect, cost, risk, and
audit records; \emph{artifact emission} translates portable packs, skills, and
workflow candidates into backend-native artifacts or emits a diagnostic;
\emph{effect and credential mapping} enforces least-privilege scopes,
redaction, idempotency, and preview or transactional handles; \emph{verifier
binding} connects backend results to the declared verifier stack; and
\emph{diagnostic fallback} refuses activation when a requirement cannot be
represented. The goal is contract preservation, not bit-identical behavior
across frameworks.

Portability does not imply automatic trust. A prompt, skill, workflow, or memory learned on backend $b$ may be emitted to backend $b'$, but it becomes a new target-side candidate and must pass the target verifier and gate. The kernel also records routing decisions, tool calls, verifier outcomes, cost, risk, and human-control events in a common trace. These records are the evidence consumed by routing, debugging, memory, and the paired gate; backend-internal state is neither assumed portable nor treated as gate evidence.

\subsection{The Lifecycle Interface}
\label{sec:interface}

The lifecycle is exposed as four artifact-level transitions. \emph{Compile}
turns bounded material and a brief into a probe-checked or fail-closed pack.
\emph{Operate} executes that pack while emitting normalized evidence and
accepting governed human interventions. \emph{Propose} converts experience into
isolated candidate edits and is opt-in. \emph{Promote} is the only transition
that can replace a live artifact, and it runs under the root policy, gate, and
approval rules rather than under the proposing model. The same semantics can be
exposed through ordinary service interfaces or interoperable agent protocols
such as Agent2Agent~\citep{google2025a2a}; transport does not change the
authority or promotion rules.

This interface also determines what an operator can safely automate. Compile
and operate may be fully automated when their probes, tools, credentials, and
effect envelope allow it. Propose may be frequent and model-driven because
candidates are isolated. Promotion is intentionally constrained: it is where held-out
evidence, safety non-regression, compatibility, and human-owned authority meet.
That separation is the practical reason \logos can support learning without
letting the learning loop rewrite its own objective, evaluator, or permission
envelope.

\subsection{Compiler: From Multimodal Operational Inputs to a Probe-Checked or Fail-Closed Agent Pack}
\label{sec:compiler}

\begin{figure}[tb]
 \centering
 \includegraphics[width=0.9\linewidth]{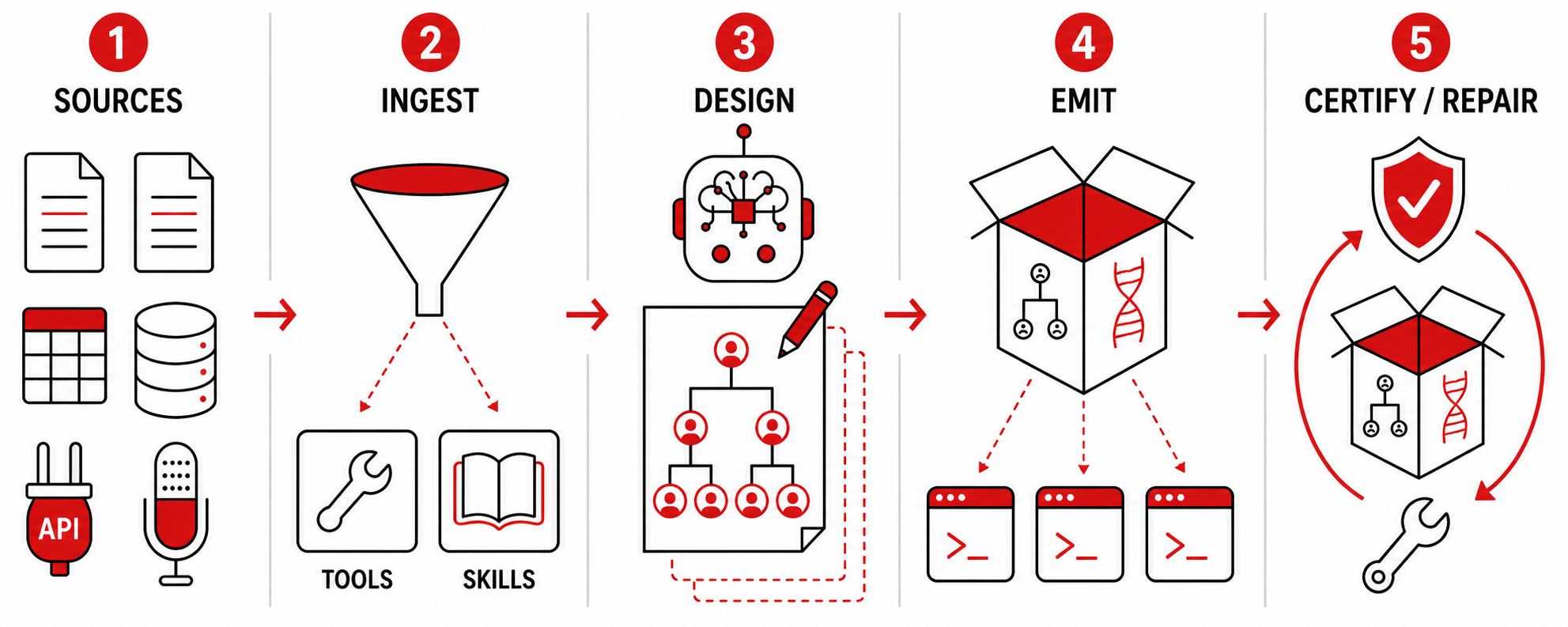}
\caption{Compiler pipeline from heterogeneous multimodal sources to a probe-checked or fail-closed \pack, with fallback synthesis and finite validate-and-repair checks.}
 \label{fig:compiler}
\end{figure}

The first stage of the \logos lifecycle turns bounded heterogeneous operational inputs into an executable team. Inputs may include documents, images, audio transcripts, tables, databases, APIs, logs, and human instructions; all are compiled into typed source specifications with provenance rather than appended as undifferentiated context. We formalize this as a \emph{compilation map}
\begin{equation}
 \mathrm{build}:(\mathcal{S}, q)\longmapsto\mathsf{BuildResult}\in\{\textsc{ProbeValidatedPack},\textsc{DiagnosticPack}\},
 \label{eq:compile}
\end{equation}
from a bounded source set $\mathcal{S}$, which may be empty, and a natural-language or spoken objective $q$, the brief of Section~\ref{sec:bg_mas}. A \textsc{ProbeValidatedPack} result contains the deployable pack $(\Pi_0,G,\mathcal{V},\mathcal{C},\mathcal{M}_{\mathrm{man}})$ of Eq.~\eqref{eq:genome}; the type name means only that the declared finite validation probes passed. A diagnostic result contains repair instructions and no live-tool permission. Figure~\ref{fig:compiler} shows the pipeline. We require Eq.~\eqref{eq:compile} to be fail-closed as a typed interface, not universally successful as a compiler: it returns either \textsc{ProbeValidatedPack} with stated finite-probe evidence or \textsc{DiagnosticPack} with no execution authority. The stages below define the operational contract.

\subsubsection{Ingestion: Sources to Typed Specifications}
\label{sec:ingest}

Each source $\sigma \in \mathcal{S}$ is routed by file type to an adapter that emits typed specifications and provenance-bearing data artifacts. We distinguish \emph{tools}, typed, single-call capabilities, from \emph{skills}, procedural guidance documents, following a deliberate two-layer separation; source content that is not itself a tool or skill remains available as a bounded knowledge or data artifact with provenance. Formally, an adapter is a map
\begin{equation}
 a_{\text{type}(\sigma)}:\sigma\longmapsto\bigl(\mathcal{W}(\sigma), \mathcal{K}(\sigma), \mathcal{X}_{\mathrm{src}}(\sigma), \mathcal{P}_{\mathrm{src}}(\sigma)\bigr),
 \label{eq:ingest}
\end{equation}
where $\mathcal{W}(\sigma)$ is a set of tool specifications, $\mathcal{K}(\sigma)$ a set of skill specifications, $\mathcal{X}_{\mathrm{src}}(\sigma)$ typed knowledge or data artifacts, and $\mathcal{P}_{\mathrm{src}}(\sigma)$ provenance records. Each tool specification carries an error contract
\begin{equation}
 w = \bigl(\text{name}, \text{desc}, \mathcal{I}_w, \mathcal{O}_w, \mathcal{E}_w, \rho_w,
 \text{effect}, \text{auth}, \text{network}, \text{rate}, \text{idempotency}, \text{binding}\bigr),
 \label{eq:toolspec}
\end{equation}
with input/output/error JSON schemas $\mathcal{I}_w,\mathcal{O}_w,\mathcal{E}_w$,
recovery guidance $\rho_w$, effect class such as read-only, transactional write,
destructive, or external side effect, authentication scope, network policy, rate
limit, and idempotency key policy. Structured sources compile to typed callable
bindings rather than undocumented stubs: OpenAPI becomes a schema-constrained
REST binding, tables become typed readers, databases become read-only SQL tools
guarded by an allow-list, and protocol descriptors become typed remote tools.
The adapter family covers ordinary operational material, text and office
documents, PDFs and images, tabular data, media, APIs, databases, and MCP, so
heterogeneous inputs remain admissible without changing Eq.~\eqref{eq:ingest}. Skills
use \emph{progressive disclosure}: a short playbook plus on-demand detail files,
keeping context cost bounded. Spoken briefs are transcribed and merged into the
same objective $q$.

\subsubsection{LLM-First Synthesis under a Fallback Contract}
\label{sec:synthesis}

Given $(\mathcal{W},\mathcal{K},\mathcal{X}_{\mathrm{src}},\mathcal{P}_{\mathrm{src}},q)$, a designer synthesizes a \emph{blueprint}, an intermediate representation of the team,
\begin{equation}
 B = \bigl(\{r_j\}_{j=1}^{m}, \mathcal{K}_B, O, \{(\mu_\ell,\kappa_\ell)\}_\ell,
 \mathcal{A}_B,\mathcal{V}_B,\mathcal{C}_B,\mathcal{I}_B,\mathcal{E}_B,h\bigr),
 \label{eq:blueprint}
\end{equation}
where each role $r_j$ records a name, instruction, skill ids, tool ids, permission scope, and model policy; $\mathcal{K}_B$ are skill drafts; $O$ is an orchestration plan with a pattern, maximum rounds, and parallelism drawn from a fixed pattern set $\mathcal{O}$; $(\mu_\ell,\kappa_\ell)$ are cost-aware model tiers pairing a model $\mu_\ell$ with a complexity bucket $\kappa_\ell$; $\mathcal{A}_B$ assigns tools and source artifacts to roles; $\mathcal{V}_B$ and $\mathcal{C}_B$ are verifier and validation-probe declarations; $\mathcal{I}_B$ records input/output contracts; $\mathcal{E}_B$ records the effect boundary; and $h\in\{0,1\}$ is a \emph{human-input} flag indicating whether the running team must be able to question the human. The designer is LLM-first but has a deterministic fail-closed fallback:
\begin{equation}
 \mathrm{design}(\mathcal{W},\mathcal{K},\mathcal{X}_{\mathrm{src}},\mathcal{P}_{\mathrm{src}},q) =
 \begin{cases}
 \text{team}(\cdot), & \text{if the LLM returns schema-valid JSON,}\\
 \text{playbook}(\cdot), & \text{else if the LLM returns prose,}\\
 \text{deterministic}(\cdot), & \text{otherwise: no LLM, or failure.}
 \end{cases}
 \label{eq:fallback}
\end{equation}
The modes produce, respectively, a structured team, a single-playbook agent, or a deterministic template. Normalization then enforces two invariants, $|\{r_j\}|\ge 1$ and exactly one canonical task-playbook skill, repairing absent agents/playbooks and discarding malformed tiers. When a model emits invalid JSON, the designer spends at most $r_{\max}$ repair prompts before degrading to playbook mode. Since the deterministic branch needs no external service and normalization always returns a valid blueprint, the composition
\begin{equation}
 \mathrm{build} = \mathrm{validate\text{-}repair}^{(N)}\circ\mathrm{emit}\circ\mathrm{normalize}\circ\mathrm{design}\circ\mathrm{ingest}
 \label{eq:build_compose}
\end{equation}
is \emph{defined on every input}: each stage either succeeds, degrades to a simpler runnable artifact, or emits a diagnostic pack that reports the missing inputs and refuses live execution. The validate-and-repair loop of Section~\ref{sec:certify} runs for at most $N$ rounds. Thus the compiler returns a pack justified by its finite battery, or a fail-closed diagnostic pack when the battery cannot justify execution.

An optional multi-candidate mode proposes several blueprints under different design biases, repairs each structurally, and selects a Pareto knee over quality, cost, latency, and ask-load. If surviving candidates disagree on a structural axis, the compiler asks the most discriminating question by expected information gain; otherwise it records the considered frontier as design provenance.

\paragraph{The design ledger: controlled reuse across builds.}
Compilation is not memoryless. Every completed build appends its brief, chosen blueprint, finite validation score, repair-effort score, cost, and observed failure modes to a persistent \emph{design ledger} $\mathcal{D}=\{(q_i, B_i, p_i)\}_i$. A new brief $q$ retrieves its $k$ nearest precedents by embedding similarity, or by lexical overlap when embeddings are unavailable, and conditions the designer on them:
\begin{equation}
\begin{aligned}
 &\mathrm{design}\bigl(\mathcal{W},\mathcal{K},q; \mathcal{D}_k(q)\bigr),\\
 &\mathcal{D}_k(q) =
 \TopK_{\min(k,|\mathcal{D}|)}
 \Bigl(
 \{(q_i,B_i,p_i)\in\mathcal{D}\},
 \simop\bigl(\mathbf{e}(q),\mathbf{e}(q_i)\bigr)
 \Bigr),
\end{aligned}
 \label{eq:ledger}
\end{equation}
Here $\TopK_k(S,f)$ returns the $k$ elements of $S$ with largest score under $f$, and $\mathbf{e}(q)$ is the brief embedding; ties are broken by deterministic artifact id and timestamp. For an empty ledger, $\mathcal{D}_k(q)=\varnothing$. Thus the first build still falls back cleanly, while later builds can reuse design priors that have already passed execution checks. Together with gated skill porting and resident World lessons, the ledger supplies candidates for a better starting point in the next deployment; target-side validation decides whether those candidates are actually useful.

\subsubsection{Emission to Genome and Backends}
\label{sec:emit}

A blueprint is emitted to the pack fields of Eq.~\eqref{eq:genome}: skills and agent definitions populate $\Pi_0$, verifier and probe declarations populate $\mathcal{V}$ and $\mathcal{C}$, provenance and adapter requirements populate $\mathcal{M}_{\mathrm{man}}$, and the orchestration plan plus human-input flag populate the genome. In particular, the human-input flag sets the tool policy
\begin{equation}
 h=1 \Longrightarrow \bigl(\text{human-input tool enabled in }G_{\text{tools}}\bigr)\wedge\bigl(\text{ask policy}=\text{budgeted heuristic}\bigr),
 \label{eq:emit_human}
\end{equation}
so that a team judged to need human input has a governed way to ask for it under the budgeted policy of Section~\ref{sec:ask}. If labeled calibration data exist for the target model and task distribution, this default policy can be replaced by a calibrated ask policy; otherwise it is treated as an uncalibrated heuristic. The same pack is projected onto each compatible backend $b$ through the emitter $\epsilon_b$ of Eq.~\eqref{eq:tracemap}. The emitter produces backend-compatible artifacts together with a compatibility and conflict report; target-side validation remains mandatory before activation.

\subsubsection{Finite Validation as the Objective, and Self-Evolving Repair}
\label{sec:certify}

Compilation success is not sufficient as a validation criterion. Following the API-Bank evaluation philosophy~\citep{li2023apibank}, \logos validates a pack by whether an agent can actually invoke its capabilities on a finite battery of probes $\mathcal{C}$. Table~\ref{tab:compile_checks} gives the core compile checks and the broader operational-pack stress axes.

\begin{table}[tb]
\centering\small
\setlength{\tabcolsep}{5pt}\renewcommand{\arraystretch}{1.1}
\caption{Compile checks and operational-pack stress axes. L0--L2 are cumulative compiler checks; C3--C5 are independent evaluation axes.}
\label{tab:compile_checks}
\begin{tabular}{@{}ll@{}}
\toprule
Level & Meaning \\
\midrule
L0 Parse & manifest and schemas are valid \\
L1 Instantiate & backend artifacts can be generated \\
L2 Invoke & agents call declared tools with valid parameters \\
C3 Complete & brief-derived end-to-end tasks are completed \\
C4 Generalize & held-out documents, APIs, or domains are handled \\
C5 Operate safely & injection, permission, and external-effect tests pass \\
\bottomrule
\end{tabular}
\end{table}
The Q1 mechanism checks report finite L0--L2 callability under the declared probe battery. C3--C5 are separate stress axes rather than cumulative certification levels: C5 can exceed C4 because safety fit and held-out workflow generalization are measured on different generated labels. The measured pass fraction is
\begin{equation}
 p(\text{pack}) = \nicefrac{1}{|\mathcal{C}|}\sum_{c \in \mathcal{C}} \indic{c~\text{succeeds on pack}} \in [0,1].
 \label{eq:certify}
\end{equation}
This $p(\text{pack})$ is defined only for a nonempty probe battery; if $\mathcal{C}=\varnothing$, the compiler emits an uncertified diagnostic pack rather than an activated pack. The value is a finite-battery result: it says that the pack passed the probes in $\mathcal{C}$, not that every future tool call will succeed. When $\mathcal{C}$ is sampled from a deployment-relevant call distribution, the separate anytime-valid process below can attach a statement about the underlying callability rate. When $p$ falls below a target $p^\star$, a repairer maps failures to minimal spec edits, such as adding a missing field, aligning a schema, fixing syntax, tightening an effect annotation, or improving recovery guidance, and the pack is rebuilt. Proposal probes and final validation probes are separated when a learned repairer is used; deterministic structural repairs may reuse the same finite battery because they do not learn semantic fixes from per-example failures. This defines the validate-and-repair iteration
\begin{equation}
 \text{pack}_{k+1}=\mathrm{repair}\bigl(\text{pack}_k,\mathcal{F}_k\bigr),\quad \mathcal{F}_k=\{c\in\mathcal{C}:c~\text{fails on pack}_k\},
 \label{eq:autofix}
\end{equation}
run for at most $N$ rounds or until $p(\text{pack}_k)\ge p^\star$. Each repair only adds or tightens a specification, never deleting a capability without an explicit diagnostic, but this does not make the pass rate monotone: a tighter schema or narrower permission can expose new failures. The pack is therefore re-run through the full finite battery after every round and the best-so-far diagnostic is retained. The loop terminates at a pack that passes the stated probes, at that diagnostic when the round budget is exhausted, or at a point where no failure maps to an actionable edit. This is the compiler-level instance of the paper's governing rule: promote a repair only when execution evidence supports it under the stated probes.

\paragraph{Anytime-valid rate evidence.}
The finite pass fraction of Eq.~\eqref{eq:certify} carries no confidence about unobserved calls: a single successful probe can overstate a tool's true callability, and repeated probing is unsafe if one stops after a favorable interim result. The compiler can instead probe sequentially and maintain two one-sided anytime-valid e-processes, using the portfolio construction of Eq.~\eqref{eq:portfolio} in Appendix~\ref{sec:method_ext}, for the true callability rate $p$ against a target band with margin $\xi_{\mathrm{ind}}>0$:
\begin{equation}
 \text{verdict} =
 \begin{cases}
 \textsc{conflict}, & K_t^{\uparrow}\ge 1/\alpha \wedge K_t^{\downarrow}\ge 1/\alpha,\\
 \textsc{supported}, & K_t^{\uparrow}\ge 1/\alpha,\\
 \textsc{refuted}, & K_t^{\downarrow}\ge 1/\alpha,\\
 \textsc{undecided}, & \text{probe budget exhausted,}
 \end{cases}
 \label{eq:pact}
\end{equation}
where $Y_t\in\{0,1\}$ is the binary success indicator for probe $t$, $K_t^{\uparrow}$ tests $H_0{:}p\le p^\star$, and $K_t^{\downarrow}$ tests $H_0{:}p\ge p^\star-\xi_{\mathrm{ind}}$ by applying the same one-sided portfolio process to $1-Y_t$; Appendix~\ref{sec:sage_lord_formal} states the wealth constraints. The band $(p^\star-\xi_{\mathrm{ind}},p^\star]$ is an indifference zone: the compiler neither supports nor refutes the target unless the probe budget supplies stronger evidence. A \textsc{conflict} verdict is diagnostic and requires human review or a fresh probe family. If support and refutation errors are reported separately, each stream runs at level $\alpha$; if one family-wise budget $\alpha_{\mathrm{tot}}$ is desired for the verdict family, the two streams run at $\alpha_{\mathrm{tot}}/2$. Because both tests are valid at any stopping time under their sampling contract, the process can support clear wins and refute clear failures early, saving calls without the error of peeking at an ordinary fraction.

\subsection{Routing and Verification Engines}
\label{sec:routing_verif}

Verifier quality is the safety-critical dependency shared by routing, compilation, and self-evolution. A false accept approves a wrong output and is the dangerous error; a false reject raises costs by escalating or retrying; a coverage failure means that no applicable acceptance validator exists; and verifier drift changes the meaning of historical pass rates. \logos therefore separates hard constraints from acceptance validators, records verifier provenance, and fails closed when a validator is missing, crashes, or becomes inapplicable.

\begin{figure}[tb]
 \centering
 \includegraphics[width=0.85\linewidth]{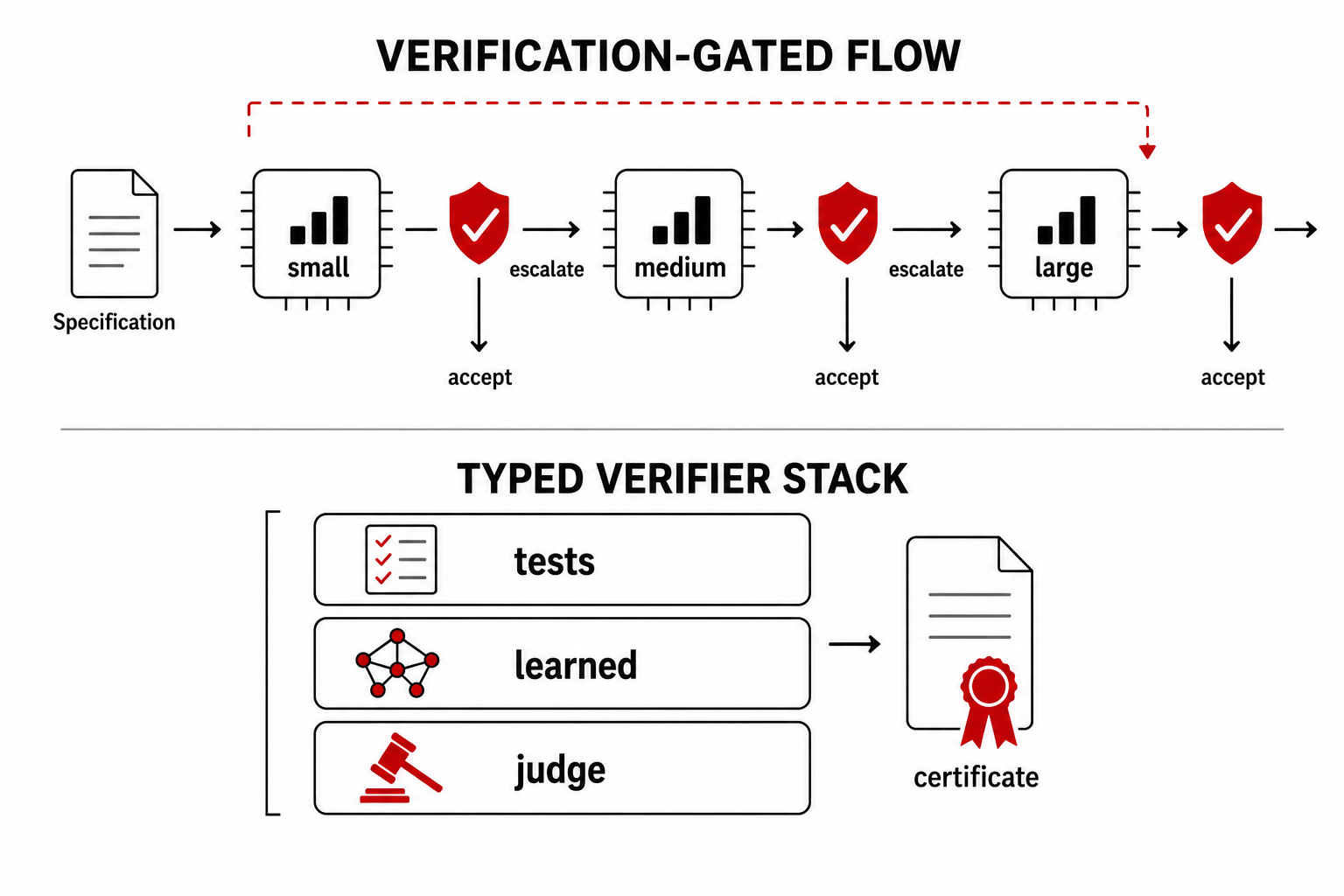}
 \caption{Routing and verification engines. The cascade escalates on verifier rejection; the verifier stack separates hard constraints from acceptance validators.}
 \label{fig:routing_verif}
\end{figure}

A compiled pack answers two operational questions on every query: which resource should act, governed by $G_{\text{route}}$, and whether its output may be trusted, governed by $G_{\text{verify}}$. Routing is dispatch under a budget, and verification is the acceptance test that decides whether the dispatch succeeded. Figure~\ref{fig:routing_verif} summarizes this coupling. Routing instantiates the cost--quality objective of Eq.~\eqref{eq:bg_route_obj}; verification instantiates the trust primitive of Eq.~\eqref{eq:bg_verifier}. They are coupled because the verification-gated cascade uses the verifier as its escalation signal, and the same verifier later gates self-evolution.

\subsubsection{Routing Engines}
\label{sec:routing}

\logos instantiates the routing policy $\rho$ of Eq.~\eqref{eq:bg_router} in three regimes of increasing closed-loop adaptivity, all optimizing the cost-regularized utility $U(\tau,a)=r(\tau,a)-\lambda_{\mathrm{cost}}c(\tau,a)$ of Eq.~\eqref{eq:bg_route_obj}.

\paragraph{Static cost routing.}
The simplest router is \emph{pre-answer} and deterministic. A declared classifier $g:\mathcal{X}\to\mathcal{K}_{\mathrm{cpx}}$ maps a query to a coarse complexity class $\mathcal{K}_{\mathrm{cpx}}=\{\textsc{simple},\textsc{complex},\textsc{design}\}$, and a fixed tier table assigns each class a model,
\begin{equation}
 \rho_{\mathrm{cost}}(x)=\delta_{\mu(g(x))},\quad \mu:\mathcal{K}_{\mathrm{cpx}}\to\{\text{model tiers}\}.
 \label{eq:costrouter}
\end{equation}
where $\delta_a$ is the point mass on action $a$, and the tiers $\{(\mu_\ell,\kappa_\ell)\}$ are exactly those emitted by the compiler's blueprint of Eq.~\eqref{eq:blueprint}. This commits the cheap model to easy queries and reserves the expensive model for harder queries, but it cannot improve from feedback: the assignment $\mu$ is static, and a misclassification is never corrected.

\paragraph{Learned routing by cost-aware policy gradient.}
To adapt the assignment from experience, \logos maintains a stochastic policy over the routing actions $\mathcal{A}_{\mathrm{route}}=\{a_1,\ldots,a_K\}$ within each complexity bucket $x$, parameterized by logits $\ell_i$ through a softmax $p_i = \softmax(\ell)_i$. After a routed run yields reward $r$ at cost $c$, the realized utility $U=r-\lambda_{\mathrm{cost}}\widehat{c}$, where $\widehat{c}$ is normalized cost, updates the chosen action's probability by the REINFORCE score-function rule~\citep{williams1992reinforce}
\begin{equation}
 \ell_i\leftarrow \ell_i+\eta_{\mathrm{lr}}A\bigl(\indic{i=i_{\mathrm{chosen}}}-p_i\bigr),\quad A=U-b(x).
 \label{eq:onlinerouter}
\end{equation}
where $\eta_{\mathrm{lr}}>0$ is a step size, with the subscript reserving $\eta$ for memory episodes in Section~\ref{sec:memory}, and $b(x)$ is a lagged per-bucket running mean utility. The baseline is updated after the policy step, so it is independent of the sampled action for this gradient estimate. Subtracting it preserves unbiasedness while reducing variance, which is crucial on the small reward streams of one deployment; the learned policy is persisted across runs.

\paragraph{Verification-gated cascade with experience.}
\label{sec:cascade}
The closed-loop router does not commit to one action before observing evidence: it treats the action set as a ladder ordered cheap-to-strong, $a_1\prec a_2\prec\cdots\prec a_K$, where each $a_k$ is a model on a possibly distinct provider, and escalates only when a verifier $\nu$, defined in Section~\ref{sec:verification}, rejects the current stage's output. Two ingredients make this adaptive rather than a fixed waterfall.

\paragraph{Experience memory.}
The router classifies each query into a fine-grained \emph{dimension} $d=\mathrm{dim}(x)$, such as \textsc{bug\_fix}, \textsc{refactor}, \textsc{long\_context}, or \textsc{math}. For each pair $(d,k)$ it maintains a Beta--Bernoulli posterior on the verifier-pass probability $\theta_{d,k}$ of stage $k$ on dimension $d$,
\begin{equation}
 \theta_{d,k}\sim\Beta(\alpha_{d,k},\beta_{d,k}),\quad (\alpha_{d,k},\beta_{d,k})\leftarrow(\alpha_{d,k}+\indic{\text{pass}}, \beta_{d,k}+\indic{\text{fail}}).
 \label{eq:cascade_post}
\end{equation}
together with an exponential-moving-average cost. The posterior tracks pass events from the configured verifier, not latent task correctness. It is meaningful when the dimension is stable enough for past pass events to predict future pass events; deployments that need continual discovery add an exploration floor or forced probes for rarely attempted stages. The optimistic verifier-pass score $\UCB(d,k)$ is the posterior mean plus $c_{\mathrm{ucb}}$ posterior standard deviations, with exploration constant $c_{\mathrm{ucb}}>0$; cold-start behavior is determined by the prior parameters and the acceptance threshold rather than guaranteed exploration of every cheap stage. The exact expression is Eq.~\eqref{eq:cascade_ucb}.

\paragraph{Entry selection and escalation.}
The cascade does not always start at stage $1$. It enters at the cheapest stage whose optimistic verifier-pass estimate clears an acceptance threshold $T_{\mathrm{acc}}$, and otherwise enters directly at the strongest stage $K$,
\begin{equation}
 s(d)=
 \begin{cases}
 \min\{k:\UCB(d,k)\ge T_{\mathrm{acc}}\}, & \text{if such } k \text{ exists},\\
 K, & \text{otherwise}.
 \end{cases}
 \label{eq:cascade_entry}
\end{equation}
so experience can skip cheap stages that rarely pass verification on a dimension. The entry rule is a verifier-pass heuristic; a deployment may additionally require the optimistic utility gain over starting at a later stage to be positive before attempting a cheap stage. The cascade then runs $k=s(d),s(d)+1,\ldots$, accepts the first output that passes $\nu$, and updates every attempted stage.

\paragraph{Observation: conditional cascade non-degradation.}
\label{prop:cascade}
Assume that every early output accepted by $\nu$ is correct for the task family; early stages are state-isolated, do not consume the fallback's context or budget, and trigger a fresh fallback execution with the same distribution as policy $a_K$ if all earlier stages reject; and expected costs are evaluated on the same task distribution. Then cascade accuracy is at least the accuracy of always using $a_K$. If the expected-cost condition in Appendix~\ref{sec:bg_ext} also holds, expected cost is lower than always running $a_K$. With an imperfect verifier, the accuracy loss is bounded by the reach-conditioned false-accept mass, not by an unconditional verifier FPR. HumanEval in Section~\ref{sec:exp_cascade} adds a second empirical effect beyond fallback non-degradation: multiple candidates plus an execution verifier can select a correct early output that a single strongest rollout missed. The cascade reduces to static routing when $K=1$ and to an always-strong policy when no stage clears $T_{\mathrm{acc}}$.

\subsubsection{Collective Routing}
\label{sec:chorus}

The collective router treats a heterogeneous worker pool as a single governed service. In a \emph{fast} path it selects one worker; in a \emph{deep} path it constructs a short multi-step workflow. The default decision to deepen is an inspectable routing-margin trigger; deployments may add an expected-utility guard that compares predicted quality gain, additional cost, and verifier availability. Every selected worker, intermediate result, verifier verdict, and escalation remains visible in the normalized trace. When no applicable verifier or reward stream exists, the router can still orchestrate work, but the result is reported as an unevaluated mechanism outcome rather than an accuracy improvement.

\paragraph{The selection head and its three learning signals.}
\label{sec:chorus_head}
Let $\mathcal{W}_{\mathrm{pool}}=\{w_1,\ldots,w_n\}$ be the worker pool, with relative costs $c_j>0$ and optional specialization tags. Fast mode uses a stochastic policy over workers conditioned on an inspectable context key $\kappa(x)$, implemented as a per-context logit table $\mathbf{z}_\kappa\in\reals^{n}$ or an optional linear head over a frozen encoder:
\begin{equation}
 \pi(w_j\mid x) = \softmax\bigl(\mathbf{z}_{\kappa(x)}/T_{\mathrm{soft}}\bigr)_j,
 \label{eq:chorus_policy}
\end{equation}
with shared temperature $T_{\mathrm{soft}}>0$. The head only chooses; it never generates text. Three learning signals write to the same persisted head, so offline pre-training and online deployment learning compose.

\emph{Soft-label distillation.}
Offline, every worker is rolled out on each training query and scored by a verifier, yielding mean rewards $\bar r_j$. Let $\epsilon_q$ be a configured quality tolerance, $\bar r_{\max}=\max_k\bar r_k$, $\tilde c_j=c_j/\max_k c_k$, and $\hat r_j=\bar r_j-\lambda_{\mathrm{cost}}\tilde c_j$. The head is distilled toward a \emph{quality-constrained cost-adjusted soft target}
\begin{equation}
\begin{aligned}
 p^\star_j
 &= \softmax\bigl(\tilde r_j/T_{\mathrm{soft}}\bigr)_j,\\
 \tilde r_j
 &=
 \begin{cases}
  \hat r_j,
  & \bar r_j \ge \bar r_{\max}-\epsilon_q,\\
  \min\{\hat r_j,
       \min_{\ell:\bar r_\ell\ge \bar r_{\max}-\epsilon_q}\hat r_\ell
       -(\bar r_{\max}-\epsilon_q-\bar r_j)\},
  & \text{otherwise.}
 \end{cases}
\end{aligned}
 \label{eq:chorus_soft}
\end{equation}
by minimizing the cross-entropy $H(p^\star,\pi)$. Thus cost can select the cheaper worker among near-equivalent options without allowing a clearly weaker worker to be selected on price alone.

\emph{Evolutionary refinement.}
The distilled head is then refined by a separable evolution strategy against the same quality-constrained replay utility. Equivalently, $V(\pi)=\widehat{\mathbb{E}}[\tilde r_{a_\pi(x)}(x)]$, where $a_\pi(x)=\argmax_j\pi(w_j\mid x)$ is the worker chosen by the deterministic replay policy and $\tilde r$ is the reward table after near-frontier cost adjustment. The expectations are computed by fixed replay of the recorded per-query, per-worker reward table, so candidate policies cost no additional LLM calls.

\emph{Online bandit updates.}
In deployment, each observed outcome applies the same variance-reduced score-function update as Eq.~\eqref{eq:onlinerouter}, with a per-update clip and the same quality-frontier guard used offline. Because ordinary deployment observes rewards only for selected workers, the online frontier is a heuristic unless the policy includes exploration, inverse-propensity correction, or doubly robust evaluation. If online evidence shows that a worker is outside the configured tolerance from the best observed reward in its context, cost cannot promote that worker by itself. The exact guarded update is Eq.~\eqref{eq:chorus_online}.

\paragraph{Routing-margin-gated deepening and the conductor.}
\label{sec:chorus_deep}
Complexity buckets alone decide fast-versus-deep coarsely. The router additionally consults the learned head's own routing margin: writing $\pi_{(1)}\ge\pi_{(2)}$ for the two largest selection probabilities on query $x$, the query is escalated to the deep mode iff the decision margin falls below a fixed threshold $\varepsilon_{u}>0$,
\begin{equation}
 m(x) = \pi_{(1)}(x)-\pi_{(2)}(x) < \varepsilon_{u},
 \label{eq:chorus_margin}
\end{equation}
so, under the default policy, workflow planning is used when single-worker routing is ambiguous. The margin is an operational confidence signal, not a calibrated correctness probability. In deep mode, a conductor with a deterministic fallback emits a DAG $\mathcal{G}_{\mathrm{wf}}=(V,E)$ of at most $S_{\max}$ steps, each step specifying a subtask, worker, and access list $A_v\subseteq V$. Workflow quality is scored by the ternary shaped reward
\begin{equation}
 R(\mathcal{G}_{\mathrm{wf}}) =
 \begin{cases}
 0, & \mathcal{G}_{\mathrm{wf}}~\text{unparseable: format failure},\\
 \nicefrac{1}{2}, & \mathcal{G}_{\mathrm{wf}}~\text{well-formed, final answer wrong},\\
 1, & \mathcal{G}_{\mathrm{wf}}~\text{well-formed, final answer correct},
 \end{cases}
 \label{eq:chorus_reward}
\end{equation}
whose partial credit for valid-but-wrong plans gives learning a smoother signal and lets sampled workflows be ranked by the verifier stack.

\paragraph{Replica diversity.}
\label{sec:chorus_replica}
A routing head derives limited benefit from identical workers, but diversified reasoning plus agreement can help. When the pool contains $N\ge2$ replicas of one model, the router runs each replica under a distinct deterministic directive from Appendix~\ref{sec:exp_appendix} and aggregates by majority vote when possible, else by the \emph{medoid}
\begin{equation}
 \hat{y} = \argmax_{i\in[N]} \nicefrac{1}{(N-1)}\sum_{j\neq i} \simop\bigl(y_i,y_j\bigr),
 \label{eq:chorus_medoid}
\end{equation}
the answer of maximal mean pairwise similarity. This is self-consistency~\citep{wang2023selfconsistency} lifted from sampled decoding paths to directed reasoning strategies; deep mode can express the same pattern as isolated attempts joined by an aggregator.

The router has deterministic fallbacks: without learned evidence it uses specialization-and-cost ranking, without an LLM the conductor uses a deterministic planner, and without replicas consensus is skipped. Every decision exposes its context key, provenance, probabilities, and routing margin for audit.

\subsubsection{Verification Engines}
\label{sec:verification}

Routing's escalation signal, the compiler's validation loop, and self-evolution's evidence gate all call the same trust primitive $\nu$ of Eq.~\eqref{eq:bg_verifier}. \logos realizes $\nu$ as a composite over a typed stack declared in $G_{\text{verify}}$, using the hard-constraint and acceptance-validator contract formalized in Eq.~\eqref{eq:verifier_composite}. We write $\mathrm{ok}(\tau,y)$ for task-correctness under the declared scorer or oracle; it is used only to analyze verifier errors, not as an available runtime signal.

\paragraph{The verifier stack.}
A pack's verifier stack is a catalog of verifier instances plus the active hard constraints and acceptance-validator requirements that use them:
\begin{equation}
\begin{aligned}
 G_{\text{verify}} &= \bigl(\mathcal{V}_{\mathrm{hard}},\mathcal{V}_{\mathrm{acc}},C_{\mathrm{hard}},\mathcal{P}_{\mathrm{acc}}\bigr),\\
 C_{\mathrm{hard}} &\subseteq \mathcal{V}_{\mathrm{hard}},\\
 \mathrm{kind}(\mathcal{V}_{\mathrm{hard}}) &\subseteq
 \{\textsc{schema},\textsc{syntax},\textsc{lint},\textsc{security},\textsc{sandbox}\},\\
 \mathcal{P}_{\mathrm{acc}} &=
 \{(S_j,A_j,\mathrm{scope}_j)\}_{j=1}^{J},\\
 S_j &\subseteq\mathcal{V}_{\mathrm{acc}},\\
 \mathcal{V}_{\mathrm{acc}}
 &= \{\textsc{unit tests},\textsc{exact scorer},\textsc{execution},
 \textsc{regression},
 \textsc{learned},\textsc{judge}\}.
\end{aligned}
 \label{eq:verifier_stack}
\end{equation}
Each verifier $v\in\mathcal{V}_{\mathrm{hard}}\cup\mathcal{V}_{\mathrm{acc}}$ declares metadata
\begin{equation}
 d_v=(\text{name},\text{scope},\text{assurance},\text{cost},\text{failure mode},\text{sufficiency predicate},\operatorname{app}_v).
\end{equation}
The assurance class is one of \textsc{deterministic-oracle}, \textsc{execution}, \textsc{trained-with-audit}, \textsc{judge-supported}, or \textsc{heuristic}, and $\operatorname{app}_v(\tau,y)$ states whether the verifier is applicable to the task/output pair. Its return value is a typed verdict in $\{\textsc{pass},\textsc{fail},\textsc{not-applicable},\textsc{error}\}$; only \textsc{pass} can contribute to evidence acceptance. The hard constraints are all necessary: they validate schemas, parse syntax, run linters, screen commands or queries against allow-lists, for example with an AST check that rejects mutating or multi-statement SQL, and enforce sandbox limits. They cannot by themselves accept an answer. Each requirement $(S_j,A_j,\mathrm{scope}_j)$ declares the applicable task scope and an aggregation rule $A_j$ over typed verdicts, such as all-of, any-of, $k$-of-$n$, or precedence with fallback. Deterministic task evidence such as unit tests, exact scorers, and execution checks normally appears in the earliest requirements; held-out regression checks and trained verifiers $\nu_\theta$ with recorded training/calibration provenance~\citep{cobbe2021gsm8k,zhang2024genrm} come next; LLM judges are used only as judge-supported evidence when no stronger evidence applies~\citep{zheng2023llmjudge}. Write $C_{\mathrm{hard}}^{\mathrm{app}}(\tau,y)=\{v\in C_{\mathrm{hard}}:\operatorname{app}_v(\tau,y)\}$, $\mathcal{P}_{\mathrm{acc}}^{\mathrm{app}}(\tau,y)=\{(S_j,A_j):(S_j,A_j,\mathrm{scope}_j)\in\mathcal{P}_{\mathrm{acc}},~\tau\in\mathrm{scope}_j\}$, $\mathcal{R}_j(\tau,y)=\{\nu(\tau,y):\nu\in S_j,\operatorname{app}_\nu(\tau,y)\}$, and $\mathrm{ready}_{\mathrm{acc}}(\tau,y)$ for resolved required scopes with applicable verdicts. The output-level evidence verdict is
\begin{equation}
\begin{aligned}
 B_{\mathrm{output}}(\tau,y) &= B_{\mathrm{hard}}(\tau,y)\wedge B_{\mathrm{acc}}(\tau,y),\\
 B_{\mathrm{hard}}(\tau,y)
 &= \bigwedge_{\nu_k\in C_{\mathrm{hard}}^{\mathrm{app}}(\tau,y)}b(\nu_k(\tau,y)),\\
 B_{\mathrm{acc}}(\tau,y)
 &=
 \begin{cases}
 0, & \mathcal{P}_{\mathrm{acc}}^{\mathrm{app}}(\tau,y)=\varnothing
      ~\text{or}~\neg\mathrm{ready}_{\mathrm{acc}}(\tau,y),\\
 \displaystyle
 \bigwedge_{(S_j,A_j)\in\mathcal{P}_{\mathrm{acc}}^{\mathrm{app}}(\tau,y)}
 A_j\bigl(\mathcal{R}_j(\tau,y)\bigr),
 & \text{otherwise.}
 \end{cases}
\end{aligned}
 \label{eq:verifier_composite}
\end{equation}
The hard-constraint empty conjunction is true, but if no acceptance requirement is sufficient and applicable, the second factor is $0$. Applicability is checked under root-policy metadata; an applicability-check error is an \textsc{error}, hence a rejection. Each requirement is fail-closed: an execution error, missing result, or failed aggregation counts as a rejection, never an implicit pass, which is what makes the verifier usable as the escalation signal of a cascade and the gate of an evolution loop.

Sufficiency is therefore an auditable declaration rather than a guarantee of
semantic completeness. A validator requirement can be marked sufficient only for a named task scope,
output type, data slice, and risk class, with a versioned scorer or test suite,
coverage notes, and an owner. For high-risk changes, write-capable tools,
permission expansion, safety prompts, or verifier edits, root policy can require
multiple validator requirements before a separate approval rule is considered. If a
unit test or learned verifier covers only a subset of the real requirement, the
uncovered part remains residual risk and must be recorded in the gate
evidence record rather than hidden behind the word ``pass.'' Human approval is an
authority decision, not a verifier verdict; it can authorize an evidenced action
or release, but it cannot turn missing semantic evidence into a passed verifier.

Verifier quality is therefore a first-order assumption, not a detail. The relevant failure modes are false accept, where an incorrect or unsafe output passes; false reject, where a usable output is escalated or discarded; coverage failure, where no sufficient validator applies; and verifier drift, where a learned or judge-supported validator changes behavior across model or policy updates. Reported verifier tables must state whether ``false accept'' means acceptance among incorrect candidates, error among accepted candidates, or the joint unsafe-accepted rate. High-risk automatic promotion should not depend on judge-supported or heuristic evidence alone: those validators can triage, rank, or request human review, but release requires deterministic, execution, trained-with-audit, or explicitly approved evidence appropriate to the risk class plus the authority checks of Eq.~\eqref{eq:promotion_contract}. Accepted outputs remain audit targets, and stage value in a cascade is interpreted after accounting for the reach-conditioned false-accept mass admitted by the chosen verifier.

\paragraph{Evidence-supported reasoning.}
\label{sec:evidence_graph}
For free-form tasks where no deterministic check applies, a judge alone is the weakest verifier. \logos strengthens it by requiring \emph{structured evidence}. When $G_{\text{verify}}$ enables the evidence protocol, agents emit typed claims that the kernel assembles into a directed \emph{evidence graph}
\begin{equation}
\begin{aligned}
 \mathcal{E}_{\mathrm{ev}}&=(\mathcal{N},\mathcal{A}_{\mathrm{ev}}),\\
 \mathrm{type}(n)&\in\{\textsc{diagnosis},\textsc{hypothesis},\textsc{confirmation},\textsc{refutation},\textsc{recommendation}\}.
\end{aligned}
 \label{eq:evidence_graph}
\end{equation}
whose nodes $\mathcal{N}$ are claims and whose directed edges $\mathcal{A}_{\mathrm{ev}}$ encode support and contradiction. The kernel computes a configured support score over acyclic support paths, breaks cycles by timestamp and provenance priority, and flags any contradiction path above the declared threshold; a recommendation that rests on a refuted hypothesis is marked unsupported for escalation or rejection by the surrounding verifier policy. The evidence graph is not a proof system and is not, by itself, an acceptance validator: LLM-created claims and edges can themselves be wrong. It is a structured audit object that turns an opaque judge acceptance into acceptance of a non-contradicted chain of grounds, making unsupported or contradicted reasoning easier to detect and replay.

\paragraph{From verdicts to anytime-valid rate evidence.}
A verdict on one output is not a rate statement. The compiler may validate a finite probe battery by Eq.~\eqref{eq:certify}, but a statement about an underlying callability or stage-success rate requires sampled trials. \logos attaches those rate statements with the two-sided e-process construction of Eq.~\eqref{eq:pact}: one e-process testing $H_0{:}p\le p^\star$ to support the target and one testing $H_0{:}p\ge p^\star-\xi_{\mathrm{ind}}$ to refute it, valid at every stopping time by Ville. The two streams can be reported at separate level $\alpha$, or at $\alpha_{\mathrm{tot}}/2$ each when a single family-wise $\alpha_{\mathrm{tot}}$ is desired. The same e-process algebra, applied to per-task gains rather than callability, is the evidence gate of execution-gated self-evolution, to which we now turn.
\subsection{Evidence-Gated Self-Evolution: Memory, Skills, Workflows, and the Adoption Gate}
\label{sec:evolution}

\begin{figure}[tb]
 \centering
 \includegraphics[width=0.85\linewidth]{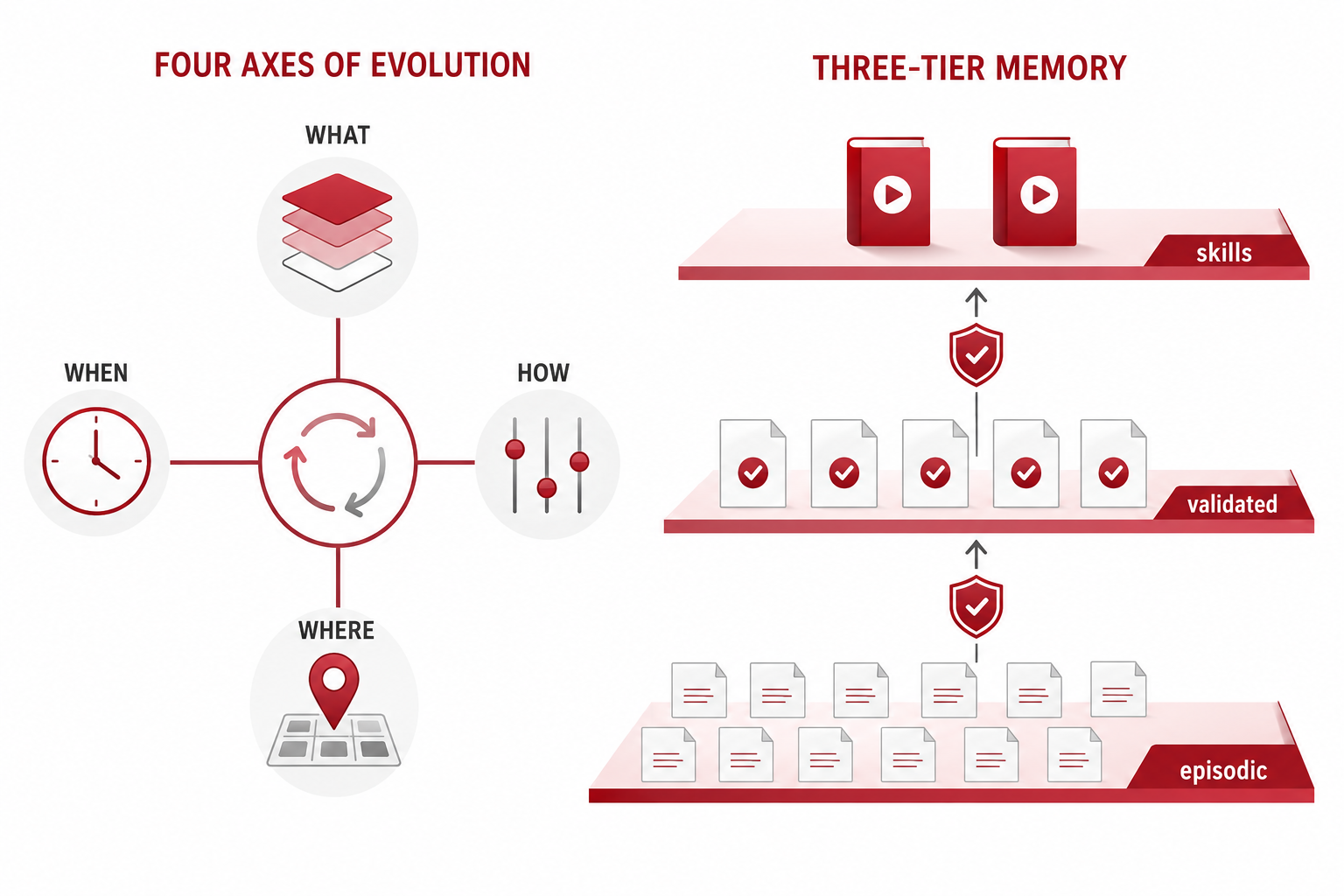}
 \caption{Gated memory promotion. Raw episodes remain quarantined by default, validated lessons are promoted only under stable task evidence, and durable skills retain provenance and remain subject to target-side re-verification.}
 \label{fig:taxonomy}
\end{figure}

The evolution layer improves a deployed pack from its own traces through three hooks: recall before a run, trace capture during the run, and scoring plus candidate evolution after the run. \textbf{Adoption invariant.} Memory, skills, prompts, tools, roles, and workflows may propose changes; none of them may alter the live deployment until a frozen external gate authorizes promotion. More explicitly, a candidate may read proposal data, generate edits, and be evaluated in a sandbox, but it cannot change the live artifact, gate definition, evaluator version, held-out sampler, credentials, or permission envelope. Those are root-policy objects. Figure~\ref{fig:taxonomy} summarizes memory promotion hygiene; this section defines the memory substrate, skill optimizer, workflow search, and the paired execution gate that governs accepted changes.

Terminology in the remainder follows the release lifecycle: a gate \emph{accepts} or \emph{rejects} a candidate; the deployment manager \emph{promotes} an accepted candidate into the live pack; artifact rollback restores an earlier pack/config state; and external effects require confirmation, compensation, or transaction controls rather than assuming rollback is possible. Benchmark tables sometimes use ``adoption'' as shorthand for accepted-and-promoted candidate decisions.

\subsubsection{Gated Memory}
\label{sec:memory}

Experience is stored in three tiers of increasing trust: raw \emph{episodic} memory $\mathcal{M}_{\text{raw}}$, \emph{validated} memory $\mathcal{M}_{\text{val}}$, and durable \emph{skills} $\mathcal{M}_{\text{skill}}$. An episode is a tuple
\begin{equation}
 \eta = \bigl(\text{intent}, \text{experience}, \mathbf{u}, Q, n, \bar{Q}\bigr),
 \label{eq:episode}
\end{equation}
with a semantic embedding $\mathbf{u}$, a learned value $Q$, a visit count $n$, and an exponential moving average $\bar{Q}$. Following memory-as-reinforcement-learning~\citep{gao2025survey}, recall for a query embedding $\mathbf{q}$ has two phases. Phase~A filters by cosine similarity $\simop\in[0,1]$ and keeps the $k_1$ nearest candidates,
\begin{equation}
 \mathcal{A}(\mathbf{q})=\TopK_{k_1}\Bigl(\{\eta:\simop(\mathbf{q},\mathbf{u}_\eta)\ge\delta\},\eta\mapsto\simop(\mathbf{q},\mathbf{u}_\eta)\Bigr).
 \label{eq:phaseA}
\end{equation}
Phase~B re-ranks candidates by a value-aware score that blends standardized similarity and standardized value,
\begin{equation}
 \mathrm{score}(\eta)=(1-\lambda)z\bigl[\simop(\mathbf{q},\mathbf{u}_\eta)\bigr]+\lambda z[Q_\eta],\quad \lambda\in[0,1].
 \label{eq:phaseB}
\end{equation}
where $z[\cdot]$ standardizes within the candidate pool, with $z[x]=0$ when the pool variance is zero. The top $k_2$ are injected only if they clear a confidence threshold. Because recall selects a slate of memories and observes only an aggregate run reward, this is closer to a slate or combinatorial contextual bandit with aggregate feedback than to a single-action bandit. The update below is therefore a credit-assignment heuristic: $Q$ moves toward the realized reward rather than a bootstrapped temporal-difference target,
\begin{equation}
 Q\leftarrow(1-\alpha_{\text{eff}})Q+\alpha_{\text{eff}}r,\quad \alpha_{\text{eff}}=\alpha_{\mathrm{mem}}\cdot\max\bigl(\simop(\mathbf{q},\mathbf{u}_\eta),\varepsilon_{\mathrm{sim}}\bigr).
 \label{eq:qupdate}
\end{equation}
with base learning rate $\alpha_{\mathrm{mem}}\in(0,1]$ and similarity floor $\varepsilon_{\mathrm{sim}}\in[0,1]$, so $\alpha_{\mathrm{eff}}\le1$ and more relevant recalls learn faster. Recall can equivalently add a UCB exploration bonus, and capacity is pruned by a value that blends learned value, empirical success, recency, and usefulness,
\begin{equation}
 V(\eta)=\omega_Q\hat{Q}_\eta+\omega_s\text{succ}_\eta+\omega_r\text{recency}_\eta+\omega_u\text{useful}_\eta,\quad \textstyle\sum_\bullet \omega_\bullet=1.
 \label{eq:prune}
\end{equation}
with $\hat{Q}_\eta,\text{succ}_\eta,\text{recency}_\eta,\text{useful}_\eta\in[0,1]$ normalized before blending and non-negative weights $\omega_\bullet$; Appendix~\ref{sec:exp_appendix} gives the defaults. Promotion is gated, but the default evidence is observational rather than causal: an episode enters $\mathcal{M}_{\text{val}}$ only if its rolling pass rate is high and stable,
\begin{equation}
 \text{pass}(\eta) \ge \theta_{\text{pass}} \wedge |\Delta_{\text{run}}(\eta)| \le \epsilon_{\text{stab}},
 \label{eq:promote}
\end{equation}
where $\text{pass}(\eta)$ is the empirical success rate over runs that recalled it and $\Delta_{\text{run}}(\eta)$ its window-to-window change. Because recall is selected, not randomized, this is a heuristic stability screen and may be biased toward easy tasks. Promotion-sensitive deployments therefore require paired recall/dropout evidence or an explicit target-side gate before writing a durable skill; otherwise the entry remains a quarantined lesson. A durable skill is written only after at least $u_{\min}$ successful uses and, when available, positive paired contribution evidence.

Additional memory mechanics---associative recall, compression, promotion hygiene, per-task working memory, pack-versus-World lesson scopes, and the optional calibrated acquisition rule---are summarized in Appendix~\ref{sec:impl_appendix}. Memory can influence behavior only after validation and provenance checks, reducing but not eliminating persistent prompt-injection, memory-poisoning, and regression risk. The default bypass policy uses no recalled memory when the full source context fits, when retrieval confidence or provenance is low, or when the expected memory cost exceeds the estimated gain.

\subsubsection{Optimizing Skills by Textual Feedback}
\label{sec:skillopt}

Skills are natural-language documents, so \logos optimizes them in text space with a textual-feedback loop in the spirit of TextGrad, OPRO, and reflective prompt evolution~\citep{yuksekgonul2024textgrad,yang2024opro,agrawal2025gepa}. The word ``gradient'' is used only as an analogy for a heuristic feedback direction; no differentiability or descent guarantee is assumed. The \logos-specific constraint is that every edit must pass held-out utility, regression, portability, root-policy, and audit checks. Let $\theta$ be a skill document and $\mathcal{D}_{\mathrm{tr}}=\{(\tau_j, y_j^\star)\}$ a training set with scorer $g$. One step comprises:
\begin{enumerate}[leftmargin=1.6em,itemsep=1pt,topsep=1pt]
 \item \textbf{Rollout.} Execute $\theta$ on a minibatch and score it, $g_j = g\bigl(\Pi_\theta(\tau_j), y_j^\star\bigr)$.
 \item \textbf{Reflect with textual feedback.} An LLM analyzes the failures $\{j: g_j < \vartheta\}$ using score threshold $\vartheta\in(0,1)$ and proposes a patch $P=\{\text{edit}_1,\ldots\}$, where each edit is an anchored append, insert-after, replace, or delete operation. We regard $P$ as a heuristic analogue of $-\nabla_\theta g$ in language space.
 \item \textbf{Aggregate and set the edit budget.} Patches are de-duplicated and clipped to at most $\beta$ edits, $\tilde P = \mathrm{select}(\mathrm{merge}(P), \beta)$; the \emph{edit budget} $\beta$ plays the role of a step size.
 \item \textbf{Apply.} $\theta' = \mathrm{apply}(\theta, \tilde P)$, returning a typed diagnostic and rejecting the candidate if a required edit anchor is absent.
 \item \textbf{Validation gate.} The proposal split is used only to construct candidates. If there is one candidate, a disjoint gate split $\mathcal{H}_{\mathrm{gate}}$ re-runs the current skill and the candidate through the fixed paired gate or the optional anytime-valid gate. If $K>1$ candidates are generated, a separate selection split chooses the candidate, and a final gate split, not used for selection, decides acceptance:
 \begin{equation}
 \mathrm{decision}=\mathrm{Gate}\bigl(\theta,\theta';\mathcal{H}_{\mathrm{gate}},\delta_{\min},R_{\max},\epsilon_{\text{reg}},R_{\mathrm{root}}\bigr)\in\{\textsc{accept},\textsc{reject}\}.
 \label{eq:skillgate}
 \end{equation}
\end{enumerate}
The gate discards any candidate that lacks held-out evidence of gain under the chosen regression regime, fails portability emission, violates the root policy, or cannot produce an auditable patch record; it persists the best-so-far $\theta^\star$ for resumable optimization. In the execution gate, utility evidence must clear a small-sample margin: a one-task improvement on a gate split of ordinary size is treated as too fragile to justify promotion. Once accepted and promoted, a learned skill is injected as guarded secondary guidance: the task prompt, answer format, benchmark policy, and question scope take precedence, and conflicting guidance is ignored rather than allowed to become a silent override. Appendix~\ref{sec:exp_appendix} gives secondary bookkeeping details such as failure attribution, utility tracking, edit-budget schedules, and reject-history buffers. Because $\theta$ is an opaque Markdown artifact, the same rollout--reflect--gate loop optimizes skill playbooks, role definitions, and the typed patch grammar used for directive synthesis.

\paragraph{Adaptive edit budget.}
An optional trust-region controller grows the edit budget when predicted and realized gains agree, shrinks it when prediction overshoots, and penalizes semantic drift. Appendix~\ref{sec:method_ext} gives the update rules; the held-out gate still decides acceptance.

\subsubsection{Workflow Evolution: Cost-Aware Architecture Search}
\label{sec:workflow}

Beyond context and skills, \logos can evolve the workflow $\Gamma$. It supports a population-based \aflow-style search over phase DAGs~\citep{zhang2024aflow} and a query-dependent \maas-style agentic supernet~\citep{zhang2025maas}. The supernet maintains a distribution over operators conditioned on a complexity bucket, so easy queries can use cheap architectures while harder queries can allocate more computation. Writing $p_i$ for the probability of operator $i$, the controller uses a cost-aware policy gradient with utility
\begin{equation}
 U=r-\lambda_{\text{cost}}\widehat{\mathrm{cost}}(i),\quad \text{logit}_i\leftarrow\text{logit}_i+\eta_{\mathrm{lr}}U\bigl(\indic{i=i_{\text{chosen}}}-p_i\bigr).
 \label{eq:maas}
\end{equation}
where $\widehat{\mathrm{cost}}(i)$ normalizes operator cost and $\eta_{\mathrm{lr}}$ is the step size of Eq.~\eqref{eq:onlinerouter}. A held-out gate keeps only checkpoints with positive gate-sample evidence under the declared regression budget; this is an empirical release rule for workflow candidates, not a proof of population non-regression unless the optional anytime-valid gate or a separate fixed-sample confidence rule is enabled.

\paragraph{Variance-reduced update.}
The raw gradient of Eq.~\eqref{eq:maas} is unbiased but high-variance. In variance-reduced mode, the controller uses the same idea as the learned router: replace raw utility by an advantage against a lagged per-bucket baseline. This improves stability on small deployment streams; the explicit update is in Appendix~\ref{sec:method_ext}.

\subsubsection{Paired-Execution Adoption Gate}
\label{sec:echo}

\begin{figure}[tb]
 \centering
 \includegraphics[width=0.85\linewidth]{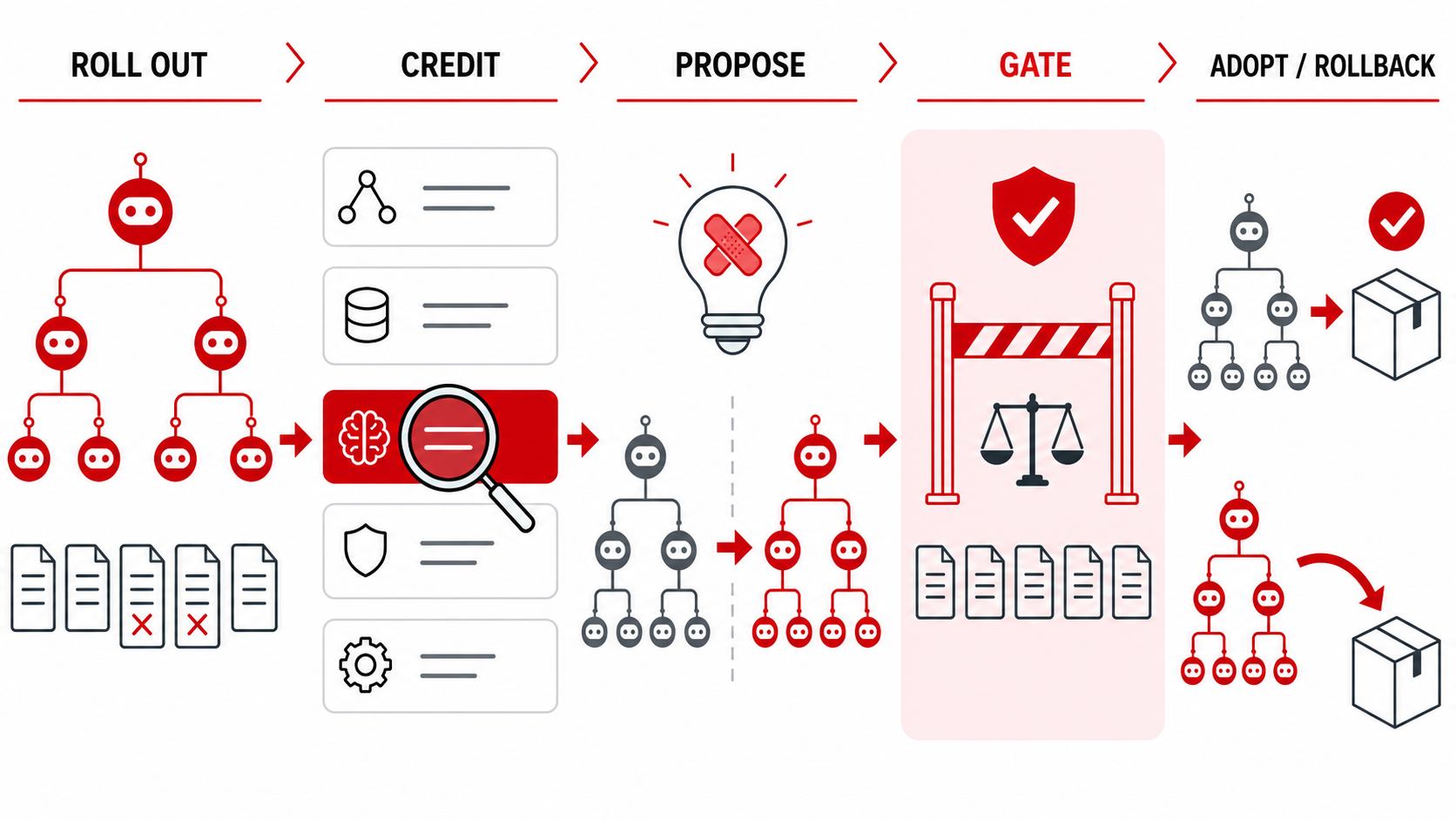}
 \caption{The paired adoption loop: propose from failures, assign credit, evaluate baseline and candidate on held-out execution, then accept only when declared gain, regression, and policy criteria pass.}
 \label{fig:echo}
\end{figure}

The paired-execution controller decides which asset to change and whether the change has sufficient held-out evidence, directly addressing credit dilution and proxy evaluation. It treats every evolvable asset as a typed \emph{component}
\begin{equation}
 \kappa\in\mathcal{K}_{\mathrm{comp}}=\{\text{skill}, \text{workflow}, \text{memory}, \text{role}, \text{prompt}, \text{tool}\},
 \label{eq:component}
\end{equation}
and runs the loop of Figure~\ref{fig:echo}: baseline rollout, credit assignment, proposal, optional candidate selection, final paired-execution gate, accept or reject while leaving the live pack unchanged.

\paragraph{Cross-component credit assignment.}
\label{sec:echo_credit}

To avoid credit dilution, the default selector treats attribution as uncertain unless the trace supports a primary component. It returns a primary component, optional secondary components, or \textsc{abstain}. When attribution evidence is weak, it abstains or falls back to the conservative one-candidate case rather than optimizing an arbitrary component. Writing $a(\zeta)$ for the primary attribution, only primary-attributed failures trigger automatic proposals; secondary labels are logged for optional attribution analysis or human review, and abstentions are not optimized blindly. A step-level variant assigns blame to one trace event when the fix is event-scoped.

\paragraph{Axiomatic credit.}
An opt-in Shapley-style alternative handles cases where a late component merely exposes an earlier corruptor~\citep{shapley1953value}. It estimates each component's marginal contribution on held-out tasks and reports a culprit only when an anytime-valid interval supports the attribution; otherwise it abstains rather than inventing blame.

\paragraph{The paired-execution gate.}
\label{sec:echo_gate}

A proposed change yields candidate deployment $\mathfrak{D}'$ with the same root-policy envelope as the baseline $\mathfrak{D}$ unless a human-governed configuration path explicitly changes it. The gate evaluates by re-running both $\mathfrak{D}'$ and $\mathfrak{D}$ on a final gate task set $\mathcal{H}_{\mathrm{gate}}$ that is disjoint from the proposal examples and, when $K>1$ candidates are compared, disjoint from the selection split used to pick the candidate. Reported before/after numbers use a third report holdout that is not used for proposal, selection, or adoption. The gate uses the same execution harness as operational execution, not a proposal-time proxy. The comparison is paired by task identifier. When common random numbers, deterministic tools, and fixed model snapshots are available, the per-task difference reuses the same execution randomness:
\begin{equation}
 d_{i,b}=m(\mathfrak{D}',\tau_i;\xi_{i,b})-m(\mathfrak{D},\tau_i;\xi_{i,b}),\quad \bar d_i=B^{-1}\sum_{b=1}^{B}d_{i,b}.
 \label{eq:paired_diff}
\end{equation}
Here $B$ is the number of repeated executions per task. Hosted APIs often cannot satisfy that stronger coupling. Then baseline and candidate order is randomized or interleaved, model/scorer identifiers and tool-state snapshots are recorded, and repeated executions are treated as independent task-paired replicates rather than as common-random-number pairs. Benchmark experiments usually choose score-only $m=s$. Deployments may choose full utility $m=u$ from Eq.~\eqref{eq:task_utility} only after declaring a bounded normalization or a bounded failure value for every task. Let $d_i=\bar d_i$ when replications are used and otherwise the single task-paired difference, let $\widehat{\Delta}_{\mathcal{H}}=|\mathcal{H}|^{-1}\sum_i d_i$ be the empirical gate gain, and let $R_{\mathcal{H}}=\bigl|\{i:d_i<-\epsilon_{\text{reg}}\}\bigr|$ count observed regressed tasks. The evidence gate accepts the candidate only if
\begin{equation}
 \text{accept} \iff \widehat{\Delta}_{\mathcal{H}} \ge \delta_{\min} \wedge R_{\mathcal{H}} \le R_{\max},
 \label{eq:echo_gate}
\end{equation}
where $R_{\max}$ caps the number of tolerated observed regressions on the gate sample. Setting $R_{\max}=0$ and $\epsilon_{\text{reg}}=0$ enforces zero observed task-level regressions on that sample; larger values implement the regression-budgeted empirical regime of Section~\ref{sec:bg_misevolution}. An accepted candidate is promoted only if the root-policy and required-approval checks in Eq.~\eqref{eq:promotion_contract} also pass. Defaults are in Appendix~\ref{sec:exp_appendix}.
For binary accuracy, a common default is a one net task-equivalent gain on the gate set. For continuous metrics such as token F$_1$, rubric scores, graded safety, or deployment utility, $\delta_{\min}$ must instead be a pre-declared minimum practical effect in that metric's units; it is not automatically meaningful as one task. A deployment may also express the regression budget as both a count and a rate, and may add a maximum per-task loss or critical-slice veto so that a severe per-task failure cannot be hidden by many small gains. For benchmarked skill adoption, $\delta_{\min}$ is a utility-evidence requirement: a candidate that merely preserves the gate score without fixing any gate task is recorded as non-regressing but is not promoted.
If the candidate fails to return a result for a baseline gate task, \logos assigns the pre-declared failure value $m_{\mathrm{fail}}$ and keeps the task in both $\widehat{\Delta}_{\mathcal{H}}$ and $R_{\mathcal{H}}$; dropped tasks cannot shrink the denominator or inflate the mean over survivors. Score-only benchmark gates use $m_{\mathrm{fail}}=0$. Utility gates must define $u_{\mathrm{fail}}$ or a normalized zero before the gate runs.

\paragraph{Optional anytime-valid gate.}
The fixed threshold in Eq.~\eqref{eq:echo_gate} is an empirical release rule. The stricter variant replaces the fixed threshold with separate gain and regression e-processes, records \textsc{accept}, \textsc{reject-harmful}, \textsc{reject-insufficient-utility}, \textsc{undecided-insufficient-evidence}, or \textsc{conflict}, and spends the resulting composite p-values through a LORD-style ledger under the sampling contract of Appendix~\ref{sec:sage_lord_formal}. Its statistical scope is the declared gate stream with score-only or pre-declared bounded normalized utility: verifier false accepts, exposed or adaptively reused holdouts, adaptive overfitting to repeated aggregate feedback, distribution shift beyond the sampled gate, and unbounded utility scales require separate controls.

\paragraph{Safety non-regression axis.}
Safety is not only an average score. Root policy can require a mean safety margin, but high-risk deployments also attach hard vetoes for critical categories, permission/effect invariant violations, severe-event counts, and per-slice or quantile regressions. We write the mean component as $\delta_{\text{safety}} = s_{\text{safety}}(\mathfrak{D}')-s_{\text{safety}}(\mathfrak{D})\ge0$ under the same root-policy envelope, but this condition is only one part of the safety gate. The safety axis is mandatory for high-risk changes, permission expansions, write-capable tools, safety-surface prompts, and any deployment whose root policy declares a safety objective; it is optional only for low-risk offline mechanism checks that state the omission. A candidate that raises task accuracy but erodes safety or triggers a critical safety veto is treated as a regression and blocked before it reaches the LORD ledger.

\paragraph{Evidence records and the holdout firewall.}
Each gate emits two artifacts. The \emph{proposal-safe} view contains the decision, aggregate counts, score deltas, and hashes only; it withholds task identifiers and per-example failures so the next proposal cannot specialize directly to the gate split. Aggregate feedback can still leak information after repeated submissions, so the root policy defines a gate-exposure budget
\begin{equation}
 \mathcal{B}_{\mathrm{expose}}=(\text{max decisions},\text{max aggregate releases},\text{max cumulative privacy loss}),
 \label{eq:gate_exposure_budget}
\end{equation}
and retires or refreshes final-gate examples after exposure, repeated selection pressure, budget exhaustion, or distribution drift. The \emph{provenance-checkable} evidence record contains the schema version, gate family, metric, model and scorer context when available, task-set hash, baseline and candidate output hashes, gain process, regression process, safety process if enabled, decision, and timestamp. Deterministic replay additionally requires immutable model snapshots or archived outputs, fixed tool state, and deterministic scorers. A stronger \emph{machine-verifiable} certificate is available only for deterministic verifiers, such as unit tests or schema checks, where an independent checker can recompute the verdict from the artifact bundle. LLM-judged or free-form gates therefore remain provenance-checkable audit records rather than formal proof certificates.

\textbf{Comparison.} The fixed paired gate is simple and execution-grounded; the optional anytime-valid gate adds error semantics with lower adoption power. Section~\ref{sec:exp_evolve} shows the empirical difference: ungated and proxy rules adopt many non-improving candidates, the paired gate reduces that rate, and the anytime-valid gate is conservative, adopting only when streaming evidence is conclusive.

Two properties explain why the paired gate reduces proxy-driven misevolution in the measured setting. First, scoring uses the real gate harness, not a self-judged proxy, so acceptance depends less directly on proposal-time surrogate scores; finite-sample gate error and verifier error remain. Second, candidates are evaluated on copies and written back only after acceptance, so rejection leaves the deployed artifact unchanged. Accepted changes therefore carry gate-sample non-regression evidence under the stated metric, subject to gate estimation error, distribution shift between gate and deployment, and any explicitly tolerated regression budget.

\subsubsection{The Outer Loop: Objective Stopping for Bounded Growth}
\label{sec:growth_loop}

Deployment composes single promotions into a growth loop: cycle $t$ runs the pack, records outcome $X_t\in\{0,1\}$, and invokes gated enhancement only when ledger evidence warrants. To prevent cost drift, \logos halts at the earliest stopping time of Eq.~\eqref{eq:growth_stop}: done with enough evidence, futility with human escalation, budget cap, or human stop signal.

The done predicate is computed from recorded harness outcomes, not the model's self-report, so the agent cannot declare itself done or hide a plateau. When the anytime-valid gate is enabled, adoption evidence remains valid at stopping times and is charged to the LORD ledger; the outer rule then affects cost but not the false-adoption accounting. All rules are opt-in.

\subsubsection{Bounded, Exploratory, and Self-Referential Evolution}
\label{sec:advanced_evolution}
Four opt-in instruments govern extended operation without bypassing the gate: retirement protection, memory-edit policies, diversity-preserving search~\citep{mouret2015mapelites}, and gated self-referential tool-code repair~\citep{zhang2025dgm,robeyns2025sica}. Appendix~\ref{sec:advanced_appendix} formalizes and benchmarks all four.

\subsection{Auditable Human--Agent Loop Engineering}
\label{sec:hitl}

\begin{figure}[tb]
 \centering
 \includegraphics[width=0.9\linewidth]{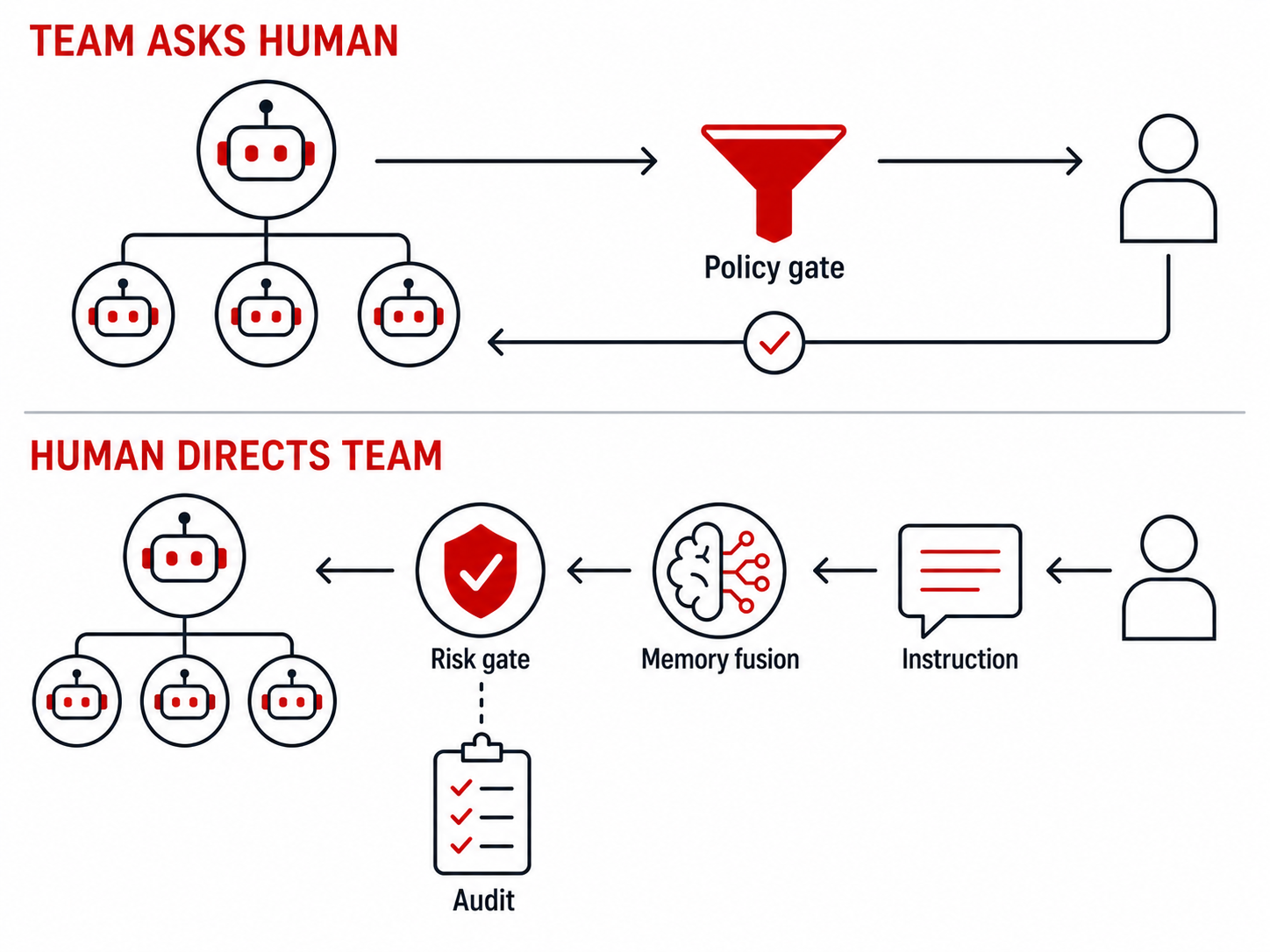}
 \caption{Auditable human--agent control through budgeted questions, directive synthesis, risk gating, approval, and audit.}
 \label{fig:hitl}
\end{figure}

The paired gate decides whether a change improves measured behavior; this section ensures evolution follows human intent and remains promotable. In this paper, ``evolving with humans'' means governed bidirectional adaptation. Figure~\ref{fig:hitl} shows two auditable loops. In the \emph{execution loop}, the MAS asks for missing information, credentials, approval, or authority and records the answer as a scoped event. In the \emph{evolution loop}, humans steer future behavior through directives, approvals, rejections, and policy changes, and those directives become candidates subject to evidence and authority checks. Here \emph{auditable} means inspectable against explicit event, evidence, authority, and audit contracts; human judgment remains governed by role, scope, provenance, and approval policy. A resident workplace state extends both loops across interactions.

\paragraph{Operator-facing observability and asynchronous control.}
\logos projects the normalized execution trace into an operator-facing event interface that exposes decision-relevant state, not hidden chain-of-thought. The observable records include delegation edges, active and completed tasks, tool calls, verifier verdicts, cost and risk signals, pending questions, proposed artifact changes, approvals, rejections, pause and resume events, and audit hashes. A human operator may issue scoped asynchronous directives to a run, agent, or World; approve, reject, or narrow a proposed action; answer typed questions; and interrupt or resume a governed checkpoint without losing provenance. This paper specifies the event, control, precedence, and audit contract rather than a particular interface architecture, visual design, or deployment surface.

\subsubsection{The MAS Questions the Human}
\label{sec:ask}

Rather than silently guessing under uncertainty, a \logos team can ask the human for input along two timelines.

\paragraph{Pre-run clarification.}
Before executing a task, the team analyzes its own information gaps. An LLM inspects the objective and material and emits a set of structured questions $\mathcal{Q}_{\mathrm{ask}}$, where the subscript distinguishes this set from the brief space $\mathcal{Q}$ of Section~\ref{sec:bg_mas},
\begin{equation}
 \mathcal{Q}_{\mathrm{ask}}=\bigl\{(\beta_i,\text{why}_i,\varsigma_i,\text{opts}_i)\bigr\},\quad \varsigma_i\in\{\text{blocking},\text{optional}\}.
 \label{eq:clarify}
\end{equation}
where $\beta_i$ is the question, $\text{why}_i$ its justification, $\varsigma_i$ its severity, and $\text{opts}_i$ optional choices. Only blocking questions are surfaced up to a budget. Answers become a confirmed-requirements block. In non-interactive or timed-out settings, optional preferences may proceed with stated assumptions, but blocking facts, credential gaps, permission ambiguity, external effects, and root-policy conflicts produce a typed ticket or fail-closed rejection rather than a guessed execution.

\paragraph{Mid-run questioning.}
During execution, an enabled ask-user tool handles ambiguity, missing credentials, or risky branches. Low-risk preference gaps can be resolved under stated assumptions. Missing credentials, authority changes, irreversible effects, and safety-relevant uncertainty pause the run, create an operator ticket, or reject the branch according to the root policy.

\paragraph{Budgeted escalation and the $\askf$ objective.}
Asking too often is as harmful as asking too rarely. \logos therefore gates every question through a budgeted \emph{ask policy} $\pi_{\text{ask}}$ that decides whether to escalate,
\begin{equation}
 \pi_{\text{ask}}(\beta, \text{context}) \in \{\text{ask}, \text{resolve alone}\}.
 \label{eq:askpolicy}
\end{equation}
An LLM judge scores residual uncertainty, deliverable impact, and the cost of guessing wrong. The policy escalates only when the resulting risk score, inspired by value-of-information reasoning but implemented as a deployable heuristic, clears a budgeted threshold:
\begin{equation}
 \widehat{\mathrm{VOI}}_{\mathrm{ask}}(\beta)=(1-\text{conf})\cdot\text{impact}\cdot\text{cost}_{\text{wrong}},\quad \pi_{\text{ask}}(\beta)=\text{ask}\iff \widehat{\mathrm{VOI}}_{\mathrm{ask}}(\beta)\ge\theta_{\text{ask}}.
 \label{eq:voi}
\end{equation}
subject to a per-run budget. As $G_{\text{att}}$ is spent, $\theta_{\text{ask}}$ rises, making further interruptions harder. Eq.~\eqref{eq:voi} is a deployable surrogate for Eq.~\eqref{eq:bg_voi}, not an exact value-of-information computation: $(1-\text{conf})$ approximates whether an answer would change the action, and $\text{impact}\cdot\text{cost}_{\text{wrong}}$ approximates the utility at stake.

For labeled blockers we evaluate selective escalation with the Ask-F1 family of metrics used by HiL-Bench~\citep{elfeki2026hilbench}. A question is a hit when a task-specific matcher, fixed before evaluation, pairs it to at least one gold blocker and the answer resolves the blocker enough for the task to proceed. Matching is bipartite and one-to-one unless a benchmark declares that a single broad question resolves multiple blockers; optional questions are counted in the denominator when they interrupt the user but not in blocker recall. We compute
\begin{equation}
 \mathrm{P}=\nicefrac{n_{\text{hit}}}{n_{\text{ask}}},\quad \mathrm{R}=\nicefrac{n_{\text{res}}}{n_{\text{blk}}},\quad \askf=\nicefrac{2\mathrm{P}\mathrm{R}}{\mathrm{P}+\mathrm{R}}.
 \label{eq:askf1}
\end{equation}
where $n_{\text{hit}}$ counts matched useful questions, $n_{\text{ask}}$ counts surfaced questions, $n_{\text{res}}$ counts resolved gold blockers, and $n_{\text{blk}}$ counts gold blockers. We use the standard zero-denominator conventions: $\mathrm{P}=1$ when no questions are asked and no false question exists, $\mathrm{R}=1$ when no gold blocker exists, and $\askf=0$ when $\mathrm{P}+\mathrm{R}=0$. Precision penalizes needless interruptions; recall penalizes unresolved gold blockers. The run reward used by Sections~\ref{sec:memory} and~\ref{sec:echo} is then blended with this interaction quality at the same aggregation level as $\askf$, namely task, batch, or campaign as declared by the experiment,
\begin{equation}
 r'=(1-\omega)r+\omega\bigl(2\askf-1\bigr),\quad \omega\in[0,1].
 \label{eq:rewardblend}
\end{equation}
with weight $\omega$ controlling how much asking quality counts. In learned mode, $\pi_{\text{ask}}$ is an evolvable ask-policy skill optimized by the textual-feedback loop against Eq.~\eqref{eq:askf1}; its edits pass the same non-regression gate as any other skill change.

\paragraph{Risk-calibrated questioning.}
When calibration data are available, an optional questioning policy replaces raw confidence with isotonic calibration and chooses questions by model-estimated information gain over the design posterior. Eqs.~\eqref{eq:clara}--\eqref{eq:clara_eig} give the threshold-selection heuristic and question-selection rule. A formal selective-risk claim would require independent calibration, threshold-selection, and final-certification data or an equivalent simultaneous confidence method; the present experiments use this as a calibrated mechanism, not as a certified selective classifier.

\subsubsection{Evolving from Casual Instructions Fused with Memory}
\label{sec:directive}

The complementary channel lets a human steer evolution with informal directives such as ``always cite a source'' or ``prefer brevity.'' Raw appends accumulate contradictions, so \logos applies a \emph{directive-synthesis} operator that fuses the directive $d_{\mathrm{dir}}$ with accumulated state:
\begin{equation}
 C' = \mathrm{synth}\bigl(d_{\mathrm{dir}}, \mathcal{M}_{\text{skill}}, \mathcal{M}_{\text{val}}, \zeta_{\text{recent}}\bigr),
 \label{eq:directive}
\end{equation}
where $C'$ is the candidate synthesized instruction state, $\mathcal{M}_{\text{skill}}$ are learned guidelines, $\mathcal{M}_{\text{val}}$ validated memory, and $\zeta_{\text{recent}}$ the latest outcome. The operator merges, overrides, or scopes the directive; if intent is ambiguous or contradictory, it asks one targeted clarification before committing.

For long-lived deployments, an evolving \emph{goal} carries a persistent learned state. A directive is compiled into a versioned rule or insight record, then updates the goal's learned state atomically while leaving the human-authored objective untouched,
\begin{equation}
\begin{aligned}
 \text{goal.learned\_state}
 &\leftarrow
 \operatorname{merge}_{\mathrm{prec}}\bigl(
 \text{goal.learned\_state},
 \mathrm{record}(d_{\mathrm{dir}},\text{scope},\text{precedence},\text{expiry})
 \bigr),\\
 \text{goal.objective} &\leftarrow \text{goal.objective}.
\end{aligned}
 \label{eq:learnedstate}
\end{equation}
where $\operatorname{merge}_{\mathrm{prec}}$ resolves conflicts by root-policy precedence, scope, expiry, and recency while preserving provenance. The learned state is re-injected on later runs. If a directive implies a workflow change, synthesis proposes a phase DAG that re-enters Section~\ref{sec:workflow}. Every synthesized change remains a gated proposal. Directives that would change the human objective, root policy, evaluator, permission envelope, or approval policy are not learned-state edits; they require the deployment's human-governed configuration path.

\subsubsection{Workplace Residency and World-Scoped Shared Knowledge}
\label{sec:agora}

A resident workplace state keeps a deployment active between interactions. It binds packs, runs, schedules, tickets, and shared lessons to a persistent \emph{World},
\begin{equation}
 \mathbb{W}=\bigl(\text{objective},\text{packs},\text{tickets},\text{schedules},\mathcal{L}\bigr),
 \label{eq:agora_world}
\end{equation}
so the team participates in workplace rhythms instead of waiting to be invoked.

\paragraph{World-scoped lessons with shrinkage.}
Each World carries a shared lesson store $\mathcal{L}=\{\ell\}$, writable by both humans and the evolution loop. Lesson $\ell$ accumulates helpful votes $h_\ell$ and harmful votes $f_\ell$; instead of a noisy raw ratio, the store uses the posterior mean under a uniform $\Beta(1,1)$ prior,
\begin{equation}
 \mathrm{score}(\ell)=\nicefrac{(h_\ell+1)}{(h_\ell+f_\ell+2)}=\mathbb{E}\bigl[\theta_\ell\mid h_\ell,f_\ell\bigr],\quad \theta_\ell\sim\Beta(1+h_\ell,1+f_\ell).
 \label{eq:agora_score}
\end{equation}
which shrinks small samples toward $\nicefrac12$. Recall then blends lexical relevance with earned usefulness,
\begin{equation}
 \mathrm{rank}(\ell;q)=(1-\lambda_{\mathcal{L}})\nicefrac{|T(q)\cap T(\ell)|}{(|T(q)|+1)}+\lambda_{\mathcal{L}}\mathrm{score}(\ell),\quad \lambda_{\mathcal{L}}\in[0,1].
 \label{eq:agora_rank}
\end{equation}
over token sets $T(\cdot)$, and every use is recorded for later votes. Unlike pack-scoped memory, $\mathcal{L}$ is World-scoped and co-edited by humans; the same hygiene gates apply before promotion.

\paragraph{Autonomous cadence under a busy-guard.}
Each World carries schedules $\varsigma_{\mathrm{sch}}$ for \textsc{run}, \textsc{self\_evolve}, and \textsc{reconcile}. A busy-guard skips and re-arms ticks while a campaign is in flight, preventing concurrent self-modifications. Scheduled work re-enters the standard lifecycle, so autonomy adds cadence, not a gate bypass; missing information becomes a typed ticket in the World inbox.

\subsubsection{Evidence-Supported Adoption}
\label{sec:evidence}

Every proposed change, whether produced by paired execution, textual skill optimization, or directive synthesis, passes a uniform safety layer before it becomes permanent. The risk layer has two timescales. Before execution, a footprint screen classifies proposals by touched files, permissions, tools, and safety surfaces; this decides whether sandboxing, extra validators, or human approval are mandatory. After gate execution, the measured held-out score change updates the same risk record before final promotion. We write the combined post-gate classification as
\begin{equation}
 \mathrm{risk}(p) =
 \begin{cases}
 \textsc{high}, & p~\text{touches a safety surface} \vee \Delta s < -\epsilon_{\mathrm{hi}},\\
 \textsc{medium}, & |p|_{\text{edits}} \ge n_{\mathrm{edit}} \vee (-\epsilon_{\mathrm{hi}} \le \Delta s < 0),\\
 \textsc{low}, & \text{otherwise,}
 \end{cases}
 \label{eq:risk}
\end{equation}
where $p$ is the proposal, $|p|_{\text{edits}}$ its edit footprint, $\Delta s=\Delta s_{\mathrm{gate}}$ the measured held-out score change, $\epsilon_{\mathrm{hi}}>0$ a score-drop threshold, and $n_{\mathrm{edit}}\in\mathbb{N}$ an edit-footprint threshold. Before the gate has run, the $\Delta s$ branch is treated as unknown and the footprint/safety-surface branch governs escalation. Safety surfaces include root-policy text, verifier declarations, approval rules, credential scopes, write-capable tool definitions, safety prompts, and adapters that can change external effects. An approval policy escalates by level under \emph{auto}, \emph{on-risk}, or \emph{always}. The guardian first verifies the candidate inside the sandbox; after atomic promotion it can continue staged monitoring with rollback or escalation on verifier failure,
\begin{equation}
 \text{guardian}(\nu) \longmapsto \{\text{accepted}, \text{remediated}, \text{rolled back}, \text{escalated}\}.
 \label{eq:guardian}
\end{equation}
A human gate composes these signals with fixed precedence: root policy is evaluated first and cannot be bypassed; mandatory safety constraints come next, followed by role-based human approval, scoped human directives, risk heuristics, and agent preference. Human feedback is input to adaptation, not an unconditional override. It is interpreted under role, scope, provenance, expiry, tenant or World, conflict status, and the approval requirement declared by the root policy. The invariant is that only verified, non-regressing, authority-compatible changes become permanent learning. Each promotion records a typed evidence object, proposal, risk, scores, verifier verdict, approver, and correlation id, in an append-only audit log; a drift monitor triggers re-validation on sustained drops.

\paragraph{Online false-adoption accounting with LORD.}
When the anytime-valid gate is enabled, each decision controls one candidate acceptance, but a long-lived deployment still needs stream accounting. \logos therefore routes candidates through a LORD-style ledger~\citep{javanmard2018online,ramdas2018saffron}: decisions spend renewable alpha-wealth on p-values derived from the gain, regression, and safety streams; rejected null hypotheses recycle testing wealth; and safety vetoes set the composite p-value to one. The ledger has formal online error semantics only under the conditional super-uniformity, predictable spending, and dependence assumptions stated with Eqs.~\eqref{eq:lord}--\eqref{eq:evalue_composite}; otherwise it is reported as conservative audit accounting.

\subsection{Security, Privacy, and Artifact Scope}
\label{sec:security_artifacts}

\paragraph{Threat model and side effects.}
\logos treats compilation, operation, and evolution as effect-typed software execution. The \emph{effect envelope} declared by root policy is the allowed set of effect classes, write scopes, network destinations, rate limits, idempotency requirements, approval rules, and compensation mechanisms for a deployment. By default, data tools are read-only, SQL tools are guarded by AST allow-lists, destructive tools are mocked or stubbed during validation, and external writes require transactional wrappers with idempotency keys and an effect log. A deployment that enables write-capable tools must declare the tool's effect class, authentication scope, network allow-list, rate limit, rollback action when available, and human-approval policy. The rollback guarantee is structural for pack files and configuration: rejected candidates do not replace the live artifact. \textbf{Structural rollback restores the \pack. It does not automatically undo emails, payments, tickets, database writes, robot actions, or other external effects.} Irreversible effects must be mediated by dry runs, previews, two-phase commits, idempotency keys, transaction logs, compensating actions, or human confirmation.

The main threat surfaces are prompt injection through retrieved documents or tool output, memory poisoning through persistent recalls, verifier exploitation through false accepts, permission expansion through write-capable tools, holdout leakage through repeated gates, and cross-tenant leakage through traces, memories, or indexes~\citep{owasp2025agenticThreats,owasp2026agenticTop10,nvidia2025codeExecution}. \logos mitigates these with provenance tags, instruction/data separation, allow-lists, quarantine and expiry metadata, gated promotion, frozen evaluators, deterministic checks before judge-style validators, least-privilege credentials, one-shot or refreshed gate families, tenant-scoped stores, redaction, and audit ACLs. These are deployment controls and residual-risk statements rather than benchmark wins; the executable evidence appears in the Q17--Q20 action-safety, tenant-isolation, stale-memory, and rollback/audit probes.

\begin{table}[tb]
\centering\small
\setlength{\tabcolsep}{3.8pt}\renewcommand{\arraystretch}{1.12}
\caption{Security boundary assumed by the \logos release contract.}
\label{tab:threat_boundary}
\begin{tabular}{@{}L{0.24\linewidth}L{0.34\linewidth}L{0.32\linewidth}@{}}
\toprule
Object & Trust assumption & Residual risk \\
\midrule
Root policy store & written through a human-governed configuration path; not writable by ordinary candidate processes & misconfiguration or compromised administrator can authorize unsafe envelopes \\
Credential envelope & live credentials are issued at call time with scoped permissions and are redacted from traces and prompts & provider compromise or leaked external secret remains outside the pack gate \\
Candidate sandbox & candidate evaluation receives copied artifacts, scoped credentials, and dry-run or transactional effect handles & sandbox escape or incorrect mock/live separation can invalidate isolation \\
Verifier and scorer & versions, applicability, and sufficiency declarations are frozen for a gate decision & test-coverage gaps, false accepts, and verifier drift remain core dependencies \\
Gate holdout & final examples and per-example failures are hidden from proposal generation and refreshed under exposure budgets & repeated aggregate feedback can still leak information over time \\
Audit sink & append-only or hash-chained export is outside ordinary pack mutation & if the audit sink shares compromised administration, tamper evidence weakens \\
Backend provider & adapter and provider behavior are part of the trusted computing base for a given deployment & backend model, tool, pricing, or API drift can change behavior and requires revalidation \\
\bottomrule
\end{tabular}
\end{table}

\paragraph{Sandbox, secrets, and tenants.}
The execution policy separates proposal-time analysis from live tool execution. Validation and evolution run in read-only or transactional sandboxes whenever possible, and destructive calls are replaced by mocks unless an operator explicitly enables a staged environment. Secrets are excluded from traces and proposal prompts; adapters expose only scoped credentials at call time and redact them from audit records. Multi-tenant deployments isolate packs, traces, indexes, and World lessons by tenant. Prompt-injection defenses remain defense-in-depth rather than proofs: retrieved memory and documents carry provenance, tool instructions are separated from untrusted content, and high-risk directives must pass the risk gate of Eq.~\eqref{eq:risk}.

\paragraph{Audit log integrity.}
Every compile, route, verifier decision, proposal, gate result, approval, and rollback appends a typed event with a correlation id, actor, timestamp, hashes of relevant artifacts, and effect metadata. Deployments that need tamper evidence can hash-chain these records or export them to an external append-only store. The proposal-safe gate view deliberately withholds per-example final-gate failures; the full audit bundle is retained for authorized reproduction and incident review.

\paragraph{Reproducibility scope.}
This paper specifies the artifact schema, normalized event types, gate inputs and outputs, benchmark splits, metrics, and provenance needed to interpret the results. Deployment credentials, infrastructure layout, interface presentation details, and operational configuration are outside the scientific claim. Stochastic model or tool calls require archived outputs or repeatable verifiers; a durable release should freeze data and schema versions, split hashes, scorer versions, and the evidence needed to reproduce each reported decision.

\section{Experimental Evaluation}
\label{sec:experiments}

The evaluation is organized by claim rather than by module count. Four evidence classes are kept separate throughout: \emph{external benchmarks} measure task-family performance under a stated model, split, and scorer; \emph{controlled mechanism studies} isolate a gate, router, or memory rule; \emph{generated stress tests} exercise known failure modes at scale; and \emph{conformance checks} test whether authority, audit, and artifact contracts compose as specified. Synthetic and conformance results are reported as controlled engineering evidence, not as third-party production studies.

Table values are descriptive for their stated protocol. A displayed $0.000$ means that no event was observed under the stated denominator, not that the true event probability is zero. Generated suites primarily test specification and fault-injection behavior; when cases are generated from the same contract rules that \logos must preserve, the row is a conformance test rather than a statistical generalization claim.

\begin{table}[tb]
\centering\small
\setlength{\tabcolsep}{4.0pt}\renewcommand{\arraystretch}{1.12}
\caption{How to read the evaluation.}
\label{tab:evaluation_protocol}
\begin{tabular}{@{}L{0.23\linewidth}L{0.34\linewidth}L{0.33\linewidth}@{}}
\toprule
Evidence class & Unit of analysis & Claim it supports \\
\midrule
External benchmark & benchmark task with fixed scorer and stated model tier & task-family performance under the reported harness; not production utility by itself \\
Controlled mechanism study & paired task, candidate decision, or replayed routing case & whether a specific gate, router, or memory rule changes outcomes under fixed candidates \\
Generated fault injection & generated case from disclosed templates and perturbations & whether the contract catches known classes of failures; not an estimate of real-world frequency \\
Conformance check & policy, trace, effect, rollback, or tenant-isolation case & whether the specified artifact and authority contract is preserved \\
\bottomrule
\end{tabular}
\end{table}

The subsections below retain the Q1--Q22 identifiers as stable protocol labels. They are not a priority order. A claim-first reading order is governed self-evolution in Q3, human authority and effects in Q5 and Q9--Q22, pluggable construction and target-side re-verification in Q1 and Q6, verification-backed routing in Q7--Q8, and secondary task and memory studies in Q2 and Q4.

\begin{table}[tb]
\centering\footnotesize
\setlength{\tabcolsep}{4.2pt}\renewcommand{\arraystretch}{1.2}
\caption{Contribution-aligned map of the evaluation. Headline values are descriptive for the stated protocol; the final column states the main evidence boundary.}
\label{tab:claim_map}
\begin{tabular}{@{}>{\raggedright\arraybackslash}p{0.22\linewidth}>{\raggedright\arraybackslash}p{0.21\linewidth}>{\raggedright\arraybackslash}p{0.30\linewidth}>{\raggedright\arraybackslash}p{0.19\linewidth}@{}}
\toprule
Contribution & Primary evidence & Headline result & Boundary \\
\midrule
Multimodal compiler & Q1 compiler stress and generated conformance & C3/C4/C5 rates $0.882/0.720/0.757$; multimodal conformance covers $96{,}000$ generated cases & generated briefs and source templates \\
Pluggable execution and evidence & adapter conformance, Q7 verifier stress, Q8 routing replay & adapter event coverage and unsupported diagnostics are $1.000$; composite verifier false-accept rate is $0.007$ at $0.796$ coverage & adapter stubs and verifier quality bound the claim \\
Evidence-governed self-evolution & Q3 common-candidate replay and gate ablations & the paired gate adopts $0\%$ harmful-family candidates and $100\%$ beneficial-family candidates in $5{,}000$ fixed decisions; no gate adopts all candidates & controlled candidate families, not a population estimate over real proposals \\
Human authority and effects & human-loop conformance, Q5, Q9--Q22 & no violation of encoded authorization rules is observed for \logos in the generated conformance suite, versus $0.750$ for an ungoverned policy & controlled known-risk tasks and conformance rules \\
Evidence-scoped evaluation & protocol table, negative controls, and conformance checks & negative candidate cells are treated as routing or gating decisions, while conformance checks remain separate from external benchmarks & evidence classes remain distinct; no universal accuracy or safety claim \\
\bottomrule
\end{tabular}
\end{table}

Each table states its unit of analysis, scorer, and interpretation. Self-evolution separates proposal data, optional candidate-selection data, final-gate data, and report holdouts. Where paired comparisons are meaningful, the same task identifiers and candidate records are held fixed. Appendix~\ref{sec:exp_appendix} records protocol details and uncertainty conventions using artifact-level identifiers.

\subsection{Q1: Compilation Fidelity}
\label{sec:exp_compile}

Q1 asks whether an emitted pack is callable under the compiler's finite validation battery. A support-triage brief is compiled under three synthesis modes, and an OpenAPI specification is ingested into typed tools. Table~\ref{tab:compile} reports schema, tool-execution, and finite-probe validation. These checks establish artifact callability under the declared probes, not held-out business utility.

\begin{table}[tb]
\centering\small
\setlength{\tabcolsep}{4.0pt}\renewcommand{\arraystretch}{1.12}
\caption{Compilation fidelity under the three synthesis modes and OpenAPI tool ingestion. Values are finite-probe pass rates in percent; N/A marks probes that do not apply to the artifact.}
\label{tab:compile}
\begin{tabular}{lccccc}
\toprule
Mode / artifact & Agents & Skills & Schema valid & Tool exec. & Probe pass \\
\midrule
Deterministic template & $1$ & $2$ & N/A & N/A & $100$ \\
LLM single-playbook & $1$ & $2$ & N/A & N/A & $100$ \\
LLM structured team & $7$ & $5$ & N/A & N/A & $100$ \\
\midrule
OpenAPI tool ingest, $2$ tools & N/A & N/A & $100$ & $100$ & $100$ \\
\bottomrule
\end{tabular}
\end{table}

The finite probes above are basic checks for artifact callability. They do not,
by themselves, show the harder business-automation property: whether a generated
pack remains useful on held-out work without over-broad permissions. We
therefore add a model-independent C3--C5 fault-injection suite over $200$ seeds and
$240$ generated business briefs per seed, spanning support, compliance, API
workflow, retrieval, finance, and HR. The generator uses declared templates and
perturbation rules with known labels for C3 runnable completion, C4 held-out
workflow success, C5 safety and permission fit, tool-call success, synthetic
repair effort, and time to first runnable pack. These are independent axes
rather than cumulative levels. Table~\ref{tab:compiler_l3_l5} is controlled
evidence for the compiler contract and should be read together with external
deployments or independent build studies; the generated families make coverage
visible at the level of support, compliance, API workflow, retrieval, finance,
and HR briefs. The repair column is a score derived
from generated repair operations, not measured human labor or wall-clock editing
time.

\begin{table}[tb]
\centering\small
\setlength{\tabcolsep}{3.0pt}\renewcommand{\arraystretch}{1.12}
\caption{Compiler C3--C5 fault-injection stress over $48{,}000$ generated operational-pack briefs. Time and repair score apply only to generated builders; N/A marks the pre-specified hand-authored template proxy.}
\label{tab:compiler_l3_l5}
\begin{tabular}{@{}>{\raggedright\arraybackslash}p{0.24\linewidth}ccccccc@{}}
\toprule
Builder & \shortstack{Time\\(s)} & \shortstack{Repair\\score} & C3 & C4 & C5 & \shortstack{Excess\\perm.} & \shortstack{Tool\\success} \\
\midrule
One-shot LLM & $12.8$ & $43.5$ & $0.631$ & $0.286$ & $0.334$ & $0.264$ & $0.597$ \\
Framework template & $37.1$ & $30.7$ & $0.732$ & $0.421$ & $0.492$ & $0.172$ & $0.713$ \\
\logos compiler & $30.7$ & $\mathbf{14.1}$ & $0.882$ & $0.720$ & $0.757$ & $0.063$ & $0.883$ \\
Hand-authored template proxy & N/A & N/A & $\mathbf{0.923}$ & $\mathbf{0.808}$ & $\mathbf{0.829}$ & $\mathbf{0.052}$ & $\mathbf{0.925}$ \\
\bottomrule
\end{tabular}
\end{table}

The hand-authored proxy is a high-coverage template family supplied to the
harness, not a timed expert study. Because its construction time and repair
operations are outside the measured generation loop, it is reported as a quality
reference under the same finite probes rather than as a timed build method.

\subsection{Q2: End-to-End Task Success}
\label{sec:exp_task}

Q2 asks whether compilation helps task success after holding the base model fixed. The setting compares a compiled example-pack team with direct single-model calls on matched reasoning subsets, with \texttt{gpt-4o-mini} subset rates shown separately. Table~\ref{tab:task} gives a positive MATH-500 result for \texttt{gpt-5.1}, while GSM8K and the \texttt{gpt-4o-mini} MATH-500 subset show that orchestration overhead can dominate. The conclusion is therefore conditional: compilation helps when decomposition or suite-specific constraints add useful structure, and \logos selects the compiled route only when target-workload validation supports it.

\begin{table}[tb]
\centering\small
\setlength{\tabcolsep}{5pt}\renewcommand{\arraystretch}{1.1}
\caption{Compiled team versus direct single-model calls on matched task subsets. Values are pass@1 in percent; $\Delta$ is compiled minus single. Sample sizes, sampling policy, and paired-test metadata are in Appendix~\ref{sec:exp_appendix}.}
\label{tab:task}
\begin{tabular}{@{}lccc c cc@{}}
\toprule
& \multicolumn{3}{c}{\texttt{gpt-5.1}} & & \multicolumn{2}{c}{\texttt{gpt-4o-mini} subset} \\
\cmidrule(lr){2-4}\cmidrule(lr){6-7}
Suite & single & compiled & $\Delta$ & & single & compiled \\
\midrule
GSM8K & $99.0$ & $96.0$ & $-3.0$ & & $93.0$ & $95.0$ \\
MATH-500 & $86.6$ & $89.4$ & $\mathbf{+2.8}$ & & $80.0$ & $65.0$ \\
\bottomrule
\end{tabular}
\end{table}

\logos therefore treats the compiled team as a candidate rather than an
unconditional replacement for direct inference. Under the validated-route rule,
direct execution and the configured \logos solver are compared on a validation
split; the candidate is selected only if it clears the declared gain margin and
regression cap; and a disjoint report split should then be run with the selected
solver before claiming deployed performance. Table~\ref{tab:task_validated_route}
is a validation-route audit over the measured cells in Table~\ref{tab:task}, not
a separate report-holdout estimate. Its purpose is to show how negative
candidate evidence becomes a direct-route decision.

This Q2 table is not compute-matched: compiled runs may include additional
agent calls, tool checks, or verifier-visible tests. We therefore use it as a
route-selection audit under the measured harness, not as evidence that a
compiled architecture dominates a single call at equal token, latency, or
verifier budget.

\begin{table}[tb]
\centering\small
\setlength{\tabcolsep}{4.2pt}\renewcommand{\arraystretch}{1.1}
\caption{Validation-route audit applied to the Q2 matched task cells. Values
are validation pass@1 in percent; no disjoint report split is included in this
summary.}
\label{tab:task_validated_route}
\begin{tabular}{@{}llcccc@{}}
\toprule
Model & Suite & Direct & Candidate & Selected route & Route $\Delta$ \\
\midrule
\texttt{gpt-5.1} & GSM8K & $99.0$ & $96.0$ & \textsc{direct} & $0.0$ \\
\texttt{gpt-5.1} & MATH-500 & $86.6$ & $89.4$ & \textsc{compiled} & $\mathbf{+2.8}$ \\
\texttt{gpt-4o-mini} & GSM8K & $93.0$ & $95.0$ & \textsc{compiled} & $\mathbf{+2.0}$ \\
\texttt{gpt-4o-mini} & MATH-500 & $80.0$ & $65.0$ & \textsc{direct} & $0.0$ \\
\bottomrule
\end{tabular}
\end{table}

\paragraph{Boundary.}
Compilation helps when decomposition or verification creates useful structure,
and it can hurt when orchestration overhead dominates. The \texttt{gpt-4o-mini} MATH-500
and HotpotQA cells therefore become candidate-level boundary evidence: the
deployable \logos policy routes around those regimes and selects direct
execution unless the compiled team validates on the target workload.

\subsection{Q3: Disciplined Self-Evolution}
\label{sec:exp_evolve}

Q3 asks whether release decisions are governed by execution evidence rather than proposal-time optimism. The primary analysis is a \emph{semantic unit-style gate check} implemented as a common-candidate replay: incumbent outputs, candidate outputs, task identifiers, proxy signals, and safety metadata are fixed, and only the gate rule changes. This isolates the gate semantics under constructed candidate families, but it is not a population estimate over naturally occurring proposals. A second analysis runs the full proposal-and-selection loop, where selection pressure and candidate quality may differ across settings. Table~\ref{tab:gate_replay} is therefore the cleanest gate check; Figure~\ref{fig:gate_ablation} and Table~\ref{tab:evolve_direct} show the harder end-to-end boundary. The paired gate removes harmful adoption in the fixed replay and reduces it in the full loop, while the optional anytime-valid gate is more conservative when evidence is weak.

\begin{table}[tb]
\centering\small
\setlength{\tabcolsep}{3.8pt}\renewcommand{\arraystretch}{1.1}
\caption{Semantic common-candidate gate check over $2{,}500$ deterministic decisions ($500$ seeds $\times$ five candidate families). Beneficial, neutral, and harmful adoption are separated; net utility prices task regressions and safety loss. Where shown, $\pm$ denotes seed-level harness uncertainty.}
\label{tab:gate_replay}
\begin{tabular}{@{}>{\raggedright\arraybackslash}p{0.19\linewidth}ccccc@{}}
\toprule
Gate rule & Adopt & Beneficial & Neutral & Harmful & \shortstack{Net utility\\per decision} \\
\midrule
No gate & $1.000$ & $0.200$ & $0.200$ & $0.600$ & $-0.083{\pm}0.003$ \\
Proxy signal & $0.800$ & $0.200$ & $0.200$ & $0.400$ & $+0.041{\pm}0.002$ \\
Paired gate & $0.200$ & $0.200$ & $0.000$ & $0.000$ & $\mathbf{+0.050{\pm}0.001}$ \\
Anytime-valid gate & $0.198{\pm}0.002$ & $0.198{\pm}0.002$ & $0.000$ & $0.000$ & $\mathbf{+0.050{\pm}0.001}$ \\
\bottomrule
\end{tabular}
\end{table}

In this replay, the family label and oracle outcome are fixed by the
candidate-generation rules. The gate receives only the recorded task-paired
outputs, proxy signals, and safety metadata used by the stated rule, not a
hidden family label. Thresholds are fixed before replay. The check is useful for
gate semantics and boundary failures, but it should not be read as independent
evidence about proposal quality in the wild.

A larger model-independent robustness replay, run after the main table was fixed,
keeps the qualitative ordering unchanged. Over $5{,}000$ common-candidate
decisions ($1{,}000$ seeds $\times$ five families), no-gate adoption again
accepts all candidates with $60.0\%$ harmful adoption and negative utility
($-0.156{\pm}0.002$ per decision); the proxy rule accepts $40.0\%$ harmful
candidates with near-zero utility ($0.005{\pm}0.001$); and both evidence gates
accept only the beneficial family, with no observed harmful adoption and
$0.049{\pm}0.001$ utility per decision. Because this replay is generated from
the same constructed candidate-family templates rather than an independent
external benchmark, we use it as a semantic robustness check rather than
replacing the headline protocol in Table~\ref{tab:gate_replay}.

\begin{figure}[tb]
 \centering
 \includegraphics[width=0.86\linewidth]{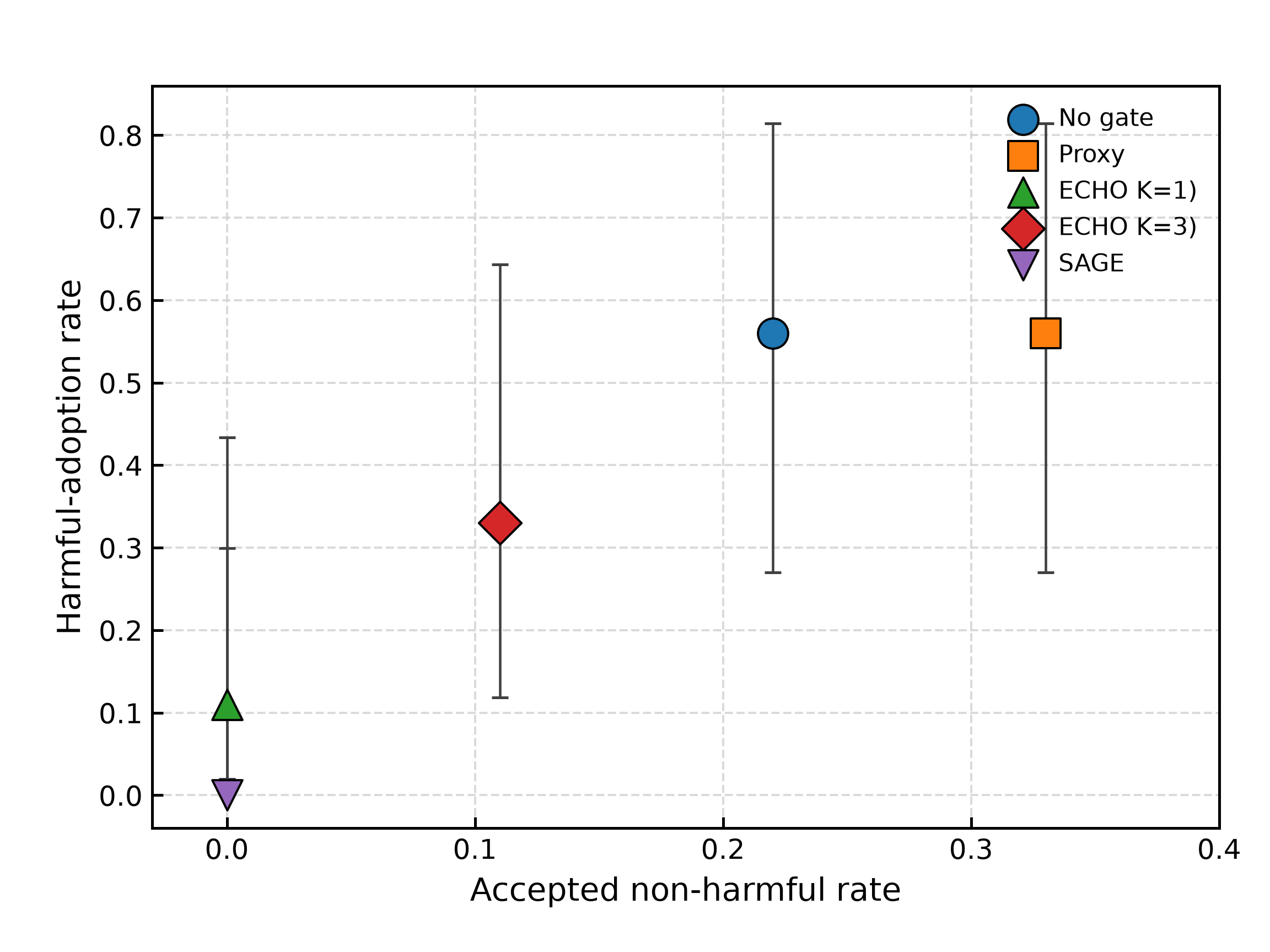}
 \caption{Safety-power frontier for the evolution gate ablation.}
 \label{fig:gate_ablation}
\end{figure}

\begin{table}[tb]
\centering\small
\setlength{\tabcolsep}{3.0pt}\renewcommand{\arraystretch}{1.12}
\caption{Multi-seed gate matrix on non-saturated suites. No report-holdout regressions are observed for the anytime-valid gate because it fully rejects in these low-signal cells; the paired gate substantially reduces regressions but has one residual MATH-500 regression.}
\label{tab:evolve_direct}
\begin{tabular}{@{}llccc@{}}
\toprule
Suite & Gate & \shortstack{Mean $\Delta$\\($\uparrow$)} & \shortstack{Mean Regr.\\($\downarrow$)} & \shortstack{Seeds with\\regr.} \\
\midrule
HotpotQA & off & $-0.028$ & $1.7$ & $2/3$ \\
 & proxy & $0.000$ & $1.0$ & $2/3$ \\
 & execution & $\mathbf{0.000}$ & $\mathbf{0.0}$ & $0/3$ \\
 & anytime-valid & $\mathbf{0.000}$ & $\mathbf{0.0}$ & $0/3$ \\
\midrule
MATH-500 & off & $-0.028$ & $1.7$ & $3/3$ \\
 & proxy & $-0.028$ & $2.7$ & $3/3$ \\
 & execution & $-0.007$ & $0.7$ & $1/3$ \\
 & anytime-valid & $\mathbf{0.000}$ & $\mathbf{0.0}$ & $\mathbf{0/3}$ \\
\bottomrule
\end{tabular}
\end{table}

\subsection{Q4: Gated Memory Quality}
\label{sec:exp_memory}

Q4 tests two memory claims: value-aware retrieval can learn what helps in a controlled setting, and gated memory should act only when there is signal. In the synthetic corpus, value blending learns the updated fact and pruning cuts store size without losing recall. On LongMemEval and LoCoMo~\citep{wu2025longmemeval,maharana2024locomo}, passive memory is negative rather than positive: both Base and the gated-memory variant trail the no-memory condition in Table~\ref{tab:prism_lift}. The measured positive case is gated memory promotion when explicit training signal exists: the gate accepts a LongMemEval lift and rejects the LoCoMo candidates in this run; the no-gate comparison adopts and regresses. LongMemEval-V2 extends this benchmark direction toward workflow knowledge, dynamic state, environment-specific failure modes, and premise awareness in web-agent histories~\citep{wu2026longmemevalv2}.

\begin{table}[tb]
\centering\small
\setlength{\tabcolsep}{4.0pt}\renewcommand{\arraystretch}{1.12}
\caption{Gated-memory mechanism isolation on a synthetic memory corpus. Retrieval, TTL, and recall are hit@1 in percent; Store is the retained entry count.}
\label{tab:memory}
\begin{tabular}{lcccc}
\toprule
Memory variant & Retrieval ($\uparrow$) & TTL ($\uparrow$) & Recall@noise ($\uparrow$) & Store \\
\midrule
Cosine-only, no $Q$ & $100$ & $0$ & $100$ & $38$ \\
Value-aware, no pruning & $100$ & $100$ & $100$ & $38$ \\
Value-aware, full gated memory & $100$ & $100$ & $100$ & $12$ \\
\bottomrule
\end{tabular}
\end{table}

\begin{table}[tb]
\centering\small
\setlength{\tabcolsep}{4.0pt}\renewcommand{\arraystretch}{1.12}
\caption{Passive memory on long-horizon memory benchmarks. With no training signal, Base and gated memory underperform the no-memory condition in this matched pipeline.}
\label{tab:prism_lift}
\begin{tabular}{lccc}
\toprule
Benchmark & None & Base & Gated memory \\
\midrule
LongMemEval (pass rate, $\uparrow$) & $0.83$ & $0.66$ & $0.66$ \\
LoCoMo (mean score, $\uparrow$) & $0.52$ & $0.46$ & $0.45$ \\
\bottomrule
\end{tabular}
\end{table}

\begin{table}[tb]
\centering\small
\setlength{\tabcolsep}{4.0pt}\renewcommand{\arraystretch}{1.12}
\caption{Gated memory evolution with explicit training signal. Values are pass rates; $\Delta$ is in percentage points, and Reg. counts report-holdout regressions.}
\label{tab:prism_evolve}
\begin{tabular}{@{}lcccccccc@{}}
\toprule
& \multicolumn{4}{c}{Paired gate} & \multicolumn{4}{c}{No gate} \\
\cmidrule(lr){2-5}\cmidrule(lr){6-9}
Benchmark & Before & After & $\Delta$ pts & Reg. & Before & After & $\Delta$ pts & Reg. \\
\midrule
LongMemEval & $0.86$ & $0.90$ & $\mathbf{+4.0}$ & $1$ & $0.89$ & $0.89$ & $0.0$ & $5$ \\
LoCoMo & $0.64$ & $0.64$ & $0.0$ & $0$ & $0.62$ & $0.59$ & $-3.0$ & $8$ \\
\bottomrule
\end{tabular}
\end{table}

Thus the deployed memory policy is not ``always retrieve and always learn.''
When no task-side signal is available, the memory layer may bypass retrieval or keep
episodes quarantined; when explicit signal is available, memory edits enter the
same paired gate as other artifact changes. In Table~\ref{tab:prism_evolve},
this policy yields a LongMemEval lift and leaves LoCoMo unchanged, while the
negative LoCoMo row appears only in the no-gate ablation.

\subsection{Q5: Human-Interaction Mechanism Checks}
\label{sec:exp_hitl}

Q5 tests questioning and directive synthesis as mechanism checks. The
controlled ask-policy suite saturates at the tested tier, so it is reported as
a mechanism check rather than a fine-grained comparison of ask policies.
Directive synthesis is discriminating: memory-fused synthesis follows every
override directive, removes contradictions, and asks on ambiguous cases across
the evaluated configurations in Table~\ref{tab:hitl}. The claim is therefore
scoped to override handling and governed escalation in the tested cases, not to
all user-dialogue settings.

\begin{table}[tb]
\centering\small
\setlength{\tabcolsep}{4.0pt}\renewcommand{\arraystretch}{1.12}
\caption{Directive synthesis sanity check over six override cases. Values are counts; follow and clarify should be high, contradiction should be low.}
\label{tab:hitl}
\begin{tabular}{@{}>{\raggedright\arraybackslash}p{0.50\linewidth}ccc@{}}
\toprule
\shortstack[l]{Directive operator\\and backend} & Follow & Contradiction & \shortstack{Clarify\\trigger} \\
\midrule
Naive append, rule-based evaluator & $4/6$ & $6/6$ & $0/6$ \\
Memory-fused guard, rule-based evaluator & $\mathbf{6/6}$ & $\mathbf{0/6}$ & $\mathbf{6/6}$ \\
Memory-fused guard, LLM-assisted evaluator & $\mathbf{6/6}$ & $\mathbf{0/6}$ & $\mathbf{6/6}$ \\
\bottomrule
\end{tabular}
\end{table}

The safe-delegation study evaluates the same oversight rule under known risk labels for low-risk, reversible, external-effect, and authority-changing tasks. It compares manual work, always asking, full autonomy, a static risk rule, and the \logos evidence/risk policy over $500$ seeds and $800$ tasks per seed. Table~\ref{tab:safe_delegation} observes no unauthorized external-effect outcomes for \logos in this generated suite, matching the conservative policies, while reducing human interventions from $100$ to $39.0$ per $100$ tasks and preserving substantially more policy-compliant success than full autonomy. The result is a controlled policy measurement over generated risk labels.

\begin{table}[tb]
\centering\small
\setlength{\tabcolsep}{4pt}\renewcommand{\arraystretch}{1.12}
\caption{Safe-delegation study over $400{,}000$ simulated workplace tasks. Success is policy-compliant completion or correct escalation; incidents are unauthorized external-effect outcomes; cycle time is normalized.}
\label{tab:safe_delegation}
\begin{tabular}{@{}lccccc@{}}
\toprule
Policy & Success & Human/100 & Unauth. & Incident & Cycle \\
\midrule
Manual & $0.961$ & $100.0$ & $0.000$ & $0.000$ & $3.70$ \\
Always ask & $\mathbf{0.982}$ & $100.0$ & $0.000$ & $0.000$ & $3.00$ \\
Fully autonomous & $0.632$ & $\mathbf{0.0}$ & $0.300$ & $0.110$ & $\mathbf{1.75}$ \\
Static risk rule & $0.922$ & $30.0$ & $0.000$ & $0.000$ & $2.08$ \\
\logos evidence/risk & $0.940$ & $39.0$ & $0.000$ & $0.000$ & $2.07$ \\
\bottomrule
\end{tabular}
\end{table}

\subsection{Q6: Reuse Safety and Target-Side Re-Verification}
\label{sec:exp_transfer}

Q6 treats reuse as target-side release evidence rather than automatic transfer. Imported skills, design precedents, and workplace lessons are candidates, not permissions to mutate the live system. In the Q14 operational-control study, direct import accepts all unsafe candidates in the generated unsafe-import family, whereas target-side re-verification observes no unsafe import adoption while retaining useful imports in that same suite. External benchmarks such as SWE-bench, BFCL, and LongMemEval~\citep{jimenez2024swebench,patil2025bfcl,wu2025longmemeval} are therefore used as target-side holdouts for re-verification. Appendix~\ref{sec:exp_transfer_full} records the corresponding boundary: reuse is a proposal mechanism, whereas utility must be established again in the target environment.

\subsection{Q7: Accuracy-Side Verification-Gated Cascade}
\label{sec:exp_cascade}

Q7 asks when a verification-gated cascade improves the accuracy side of routing. The setting compares a cheap floor, a strong reference, and a cascade policy on matched reasoning and code subsets. Figure~\ref{fig:cascade} shows the positive regime: execution-graded code benefits because the verifier can reject cheap-stage failures before escalation. GSM8K is saturated and MATH-500 remains bounded by verifier quality. The operational claim is that a sound task verifier can turn cheap attempts into a useful first stage without making unchecked early accepts.

\begin{table}[tb]
\centering\small
\setlength{\tabcolsep}{3.5pt}\renewcommand{\arraystretch}{1.12}
\caption{Verifier release stress over $200{,}000$ generated release decisions from encoded candidate families and verifier-failure modes. False accepts are unsafe candidates accepted by the verifier stack; coverage is the non-escalated fraction.}
\label{tab:verifier_release}
\begin{tabular}{@{}>{\raggedright\arraybackslash}p{0.26\linewidth}ccccc@{}}
\toprule
Verifier stack & \shortstack{False\\accept} & \shortstack{False\\reject} & Coverage & Escalate & Cost \\
\midrule
Schema only & $0.909$ & $0.000$ & $\mathbf{1.000}$ & $0.000$ & $\mathbf{0.02}$ \\
Deterministic tests & $\mathbf{0.000}$ & $0.179$ & $0.780$ & $0.220$ & $0.18$ \\
Judge only & $0.090$ & $0.099$ & $\mathbf{1.000}$ & $0.000$ & $0.55$ \\
\logos composite & $0.007$ & $\mathbf{0.000}$ & $0.796$ & $0.204$ & $0.34$ \\
\bottomrule
\end{tabular}
\end{table}

For generated stress tables, the effective coverage unit is the family and
failure mode recorded by the harness; large case counts are seed-and-perturbation
expansions, not independent deployments. We therefore use the counts as
denominators for the controlled grid rather than as estimates of production
incident frequency.

A separate cascade stress isolates the verifier assumption itself. Over $200$ seeds and $2{,}000$ generated tasks per seed, the task generator spans fixed failure-mode templates rather than independent deployment incidents. Within that controlled grid, a perfect execution verifier lifts accuracy above the strong fallback by accepting cheap correct answers that the single strong rollout misses, while a noisy verifier preserves most of the gain but admits a measured false-accept mass. An entry-skip stress then tests the experience rule of Eq.~\eqref{eq:cascade_entry}: on dimensions where early low-cost stages rarely pass, adaptive entry keeps accuracy fixed and cuts cascade cost by $20.0\%$ relative to a fixed waterfall.

\begin{table}[tb]
\centering\small
\setlength{\tabcolsep}{4.5pt}\renewcommand{\arraystretch}{1.12}
\caption{Cascade verifier and entry-skip stress. The verifier stress uses $400{,}000$ generated tasks; the entry-skip stress uses another $400{,}000$ tasks with a hard dimension where cheap and middle stages are unreliable.}
\label{tab:cascade_stress}
\begin{tabular}{@{}>{\raggedright\arraybackslash}p{0.35\linewidth}cccc@{}}
\toprule
Condition & Accuracy & Cost & \shortstack{False\\accept} & \shortstack{Mean\\calls} \\
\midrule
Cheap only & $0.646$ & $\mathbf{0.150}$ & -- & $1.00$ \\
Strong only & $0.941$ & $2.500$ & -- & $1.00$ \\
Cascade, perfect verifier & $\mathbf{0.991}$ & $0.638$ & $\mathbf{0.000}$ & -- \\
Cascade, noisy verifier & $0.965$ & $0.596$ & $0.027$ & -- \\
\midrule
Fixed waterfall & $0.956$ & $2.055$ & -- & $2.20$ \\
Adaptive entry & $0.956$ & $\mathbf{1.643}$ & -- & $\mathbf{1.06}$ \\
\bottomrule
\end{tabular}
\end{table}

In Table~\ref{tab:flagship}, pass$^1$ denotes the benchmark's first-trial success criterion over repeated scenario variants, $\dagger$ marks the official $5$-shot MMLU-Pro protocol, and $^{\mathrm{j}}$ marks judge-supported safety scoring rather than deterministic correctness.

\begin{figure}[tb]
 \centering
 \includegraphics[width=0.9\linewidth]{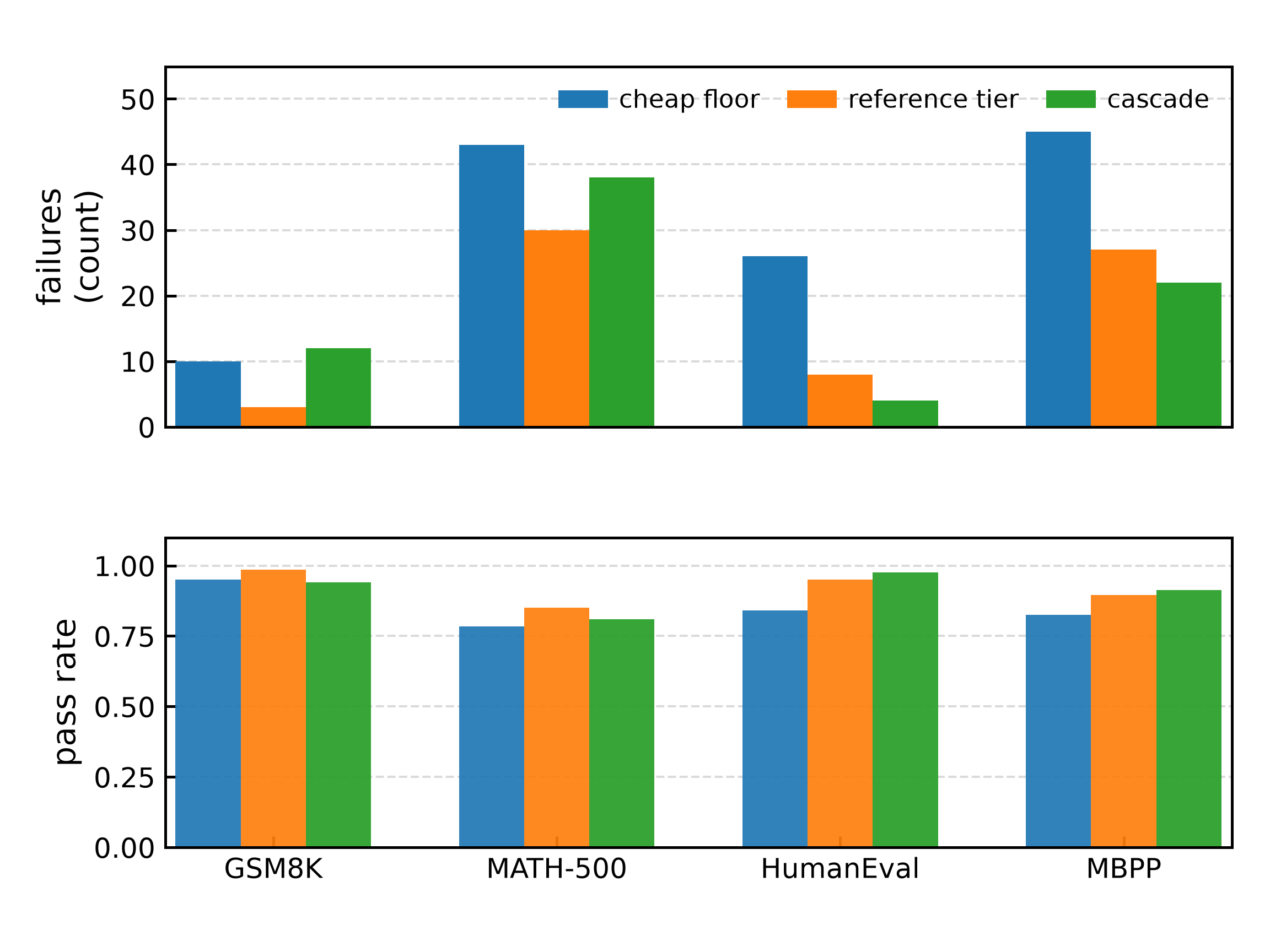}
\caption{Verification-gated cascade summary on matched subsets, with $n$ and paired tests in Appendix~\ref{sec:exp_appendix}. Execution-graded code is the positive regime; gains over the strong single model are not significant.}
 \label{fig:cascade}
\end{figure}

For deployment, the cascade is again a candidate behind the validated-route rule:
it is enabled when validation evidence shows a non-regressing accuracy or
utility trade-off, and otherwise the strong direct route remains the selected
solver. The reasoning-suite null results therefore remain useful diagnostics of
verifier coverage, while the deployed routing policy avoids treating an
unvalidated cascade as a universal replacement for the strong reference.
The Q7 accuracy rows are not equal-token or equal-latency comparisons. Cascade
runs may spend additional calls and verifier executions before accepting an
answer. We therefore interpret the result as a routing frontier under the
recorded harness, with normalized cost and mean-call summaries reported in the
stress and appendix tables, rather than as an equal-budget model comparison.

\subsection{Q8: Collective Routing}
\label{sec:exp_collective}

Q8 tests whether collective routing can expose a better cost--accuracy frontier than a cost-blind router. The held-out replay in Figure~\ref{fig:chorus_frontier} shows the positive regime: a learned head can preserve the strong model's mean accuracy at lower cost when the reward stream identifies cheap successes. To avoid over-interpreting a small replay, Table~\ref{tab:chorus_controlled_frontier} adds a larger controlled replay over $120$ seeds with $900$ training and $900$ held-out tasks per seed. It shows the trade-off directly: the quality-constrained cost-aware policy gives up $1.8$ points of accuracy relative to the always-strong policy while cutting cost by $47.0\%$. The tested $\lambda_{\mathrm{cost}}$ values lie in the same piecewise-constant decision region of the quality-frontier guard, so Table~\ref{tab:chorus_controlled_frontier} reports them as one policy. Routing claims therefore require an applicable verifier or reward stream, and the chosen cost tolerance is a deployment decision.

\begin{table}[tb]
\centering\small
\setlength{\tabcolsep}{5.0pt}\renewcommand{\arraystretch}{1.12}
\caption{Collective-router controlled cost--accuracy replay. Costs are normalized to the same cheap/strong units as the cascade study; oracle is the per-query best worker after observing held-out outcomes.}
\label{tab:chorus_controlled_frontier}
\begin{tabular}{@{}>{\raggedright\arraybackslash}p{0.36\linewidth}cccc@{}}
\toprule
Policy & Accuracy & Cost & \shortstack{Cheap\\pick} & \shortstack{Strong\\pick} \\
\midrule
Single cheap & $0.680$ & $\mathbf{0.150}$ & $1.00$ & $0.00$ \\
Single strong & $0.937$ & $2.500$ & $0.00$ & $1.00$ \\
Collective, $\lambda_{\mathrm{cost}}\in\{0.1,0.3,0.6\}$ & $0.920$ & $1.325$ & $0.50$ & $0.50$ \\
Oracle upper bound & $\mathbf{0.951}$ & -- & -- & -- \\
\bottomrule
\end{tabular}
\end{table}

\begin{figure}[tb]
 \centering
\includegraphics[width=0.85\linewidth]{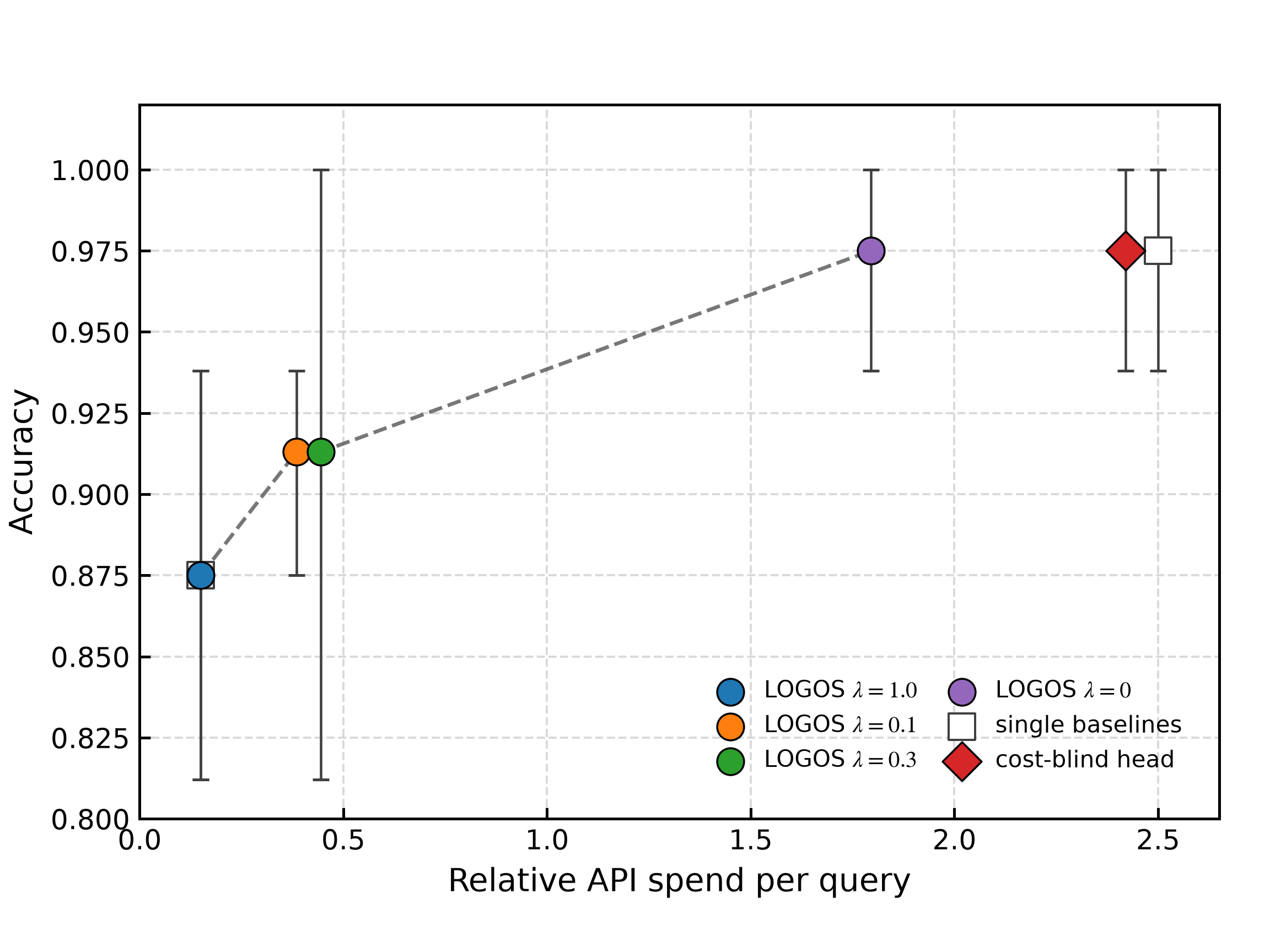}
\caption{Collective-router cost--accuracy frontier on fixed held-out replay.}
 \label{fig:chorus_frontier}
\end{figure}

\subsection{Q9--Q22: Contract and Fault-Injection Suite}
\label{sec:exp_operational}

Q9--Q22 are grouped as a contract and fault-injection suite rather than fourteen independent benchmark claims. Each row uses a negative-control policy that omits the relevant contract rule, then compares it with the \logos policy on the same generated ground truth. The studies cover document and API drift, evidence search, impact analysis, release control, target-side re-checking, root-policy precedence, grounding, tool safety, tenant isolation, memory invalidation, rollback, audit, feedback handling, and workflow composition. These rows are contract-checkable regression tests, not competitive system comparisons, and the raw case counts should be read together with the template families, perturbation rules, seed ranges, and evaluators or oracles that generated and labeled them.

\begin{table}[tb]
\centering\small
\setlength{\tabcolsep}{3.2pt}\renewcommand{\arraystretch}{1.2}
\caption{Operational control studies for Q9--Q22. The fourth column contrasts a negative-control policy with \logos on the primary contract metric; the final column reports a secondary cost, safety, or utility metric. Arrows do not imply that higher is always better: the metric name states the direction, for example harmful adoption, leakage, stale use, fabricated pass, and incident should decrease.}
\label{tab:operational_frontier}
\begin{tabular}{@{}>{\raggedright\arraybackslash}p{0.06\linewidth}
                >{\raggedright\arraybackslash}p{0.17\linewidth}
                >{\raggedright\arraybackslash}p{0.27\linewidth}
                >{\raggedright\arraybackslash}p{0.22\linewidth}
                >{\raggedright\arraybackslash}p{0.18\linewidth}@{}}
\toprule
Q & Surface & Primary metric & Neg. control $\to$ \logos & Secondary metric \\
\midrule
Q9 & document drift & F$_1$ / over-invalidation & $0.857\to\mathbf{1.000}$ & $0.333\to0.000$ \\
Q10 & API drift & F$_1$ / stale-call block & $0.500\to\mathbf{1.000}$ & $0.333\to1.000$ \\
Q11 & evidence search & recall@3 / index work & $0.404\to\mathbf{0.960}$ & $0.000\to0.750$ \\
Q12 & impact analysis & impact F$_1$ / over-invalidation & $0.935\to\mathbf{0.995}$ & $1.000\to0.028$ \\
Q13 & longitudinal release & harmful adoption / utility & $0.400\to\mathbf{0.000}$ & $-0.012\to0.015$ \\
Q14 & target re-check & unsafe import / useful import & $1.000\to\mathbf{0.000}$ & $0.000\to1.000$ \\
Q15 & root policy & policy accuracy & $0.200\to\mathbf{1.000}$ & -- \\
Q16 & grounding & fabricated pass / supported pass & $1.000\to\mathbf{0.000}$ & $0.000\to1.000$ \\
Q17 & tool safety & unsafe execute / prompts per 100 & $0.250\to\mathbf{0.000}$ & $0.0\to21.2$ \\
Q18 & tenant isolation & leakage / recall & $1.000\to\mathbf{0.000}$ & $0.000\to1.000$ \\
Q19 & obsolete memory & stale use / adaptation steps & $0.800\to\mathbf{0.000}$ & $5.000\to1.000$ \\
Q20 & rollback/audit & rollback / tamper detect & $0.740\to\mathbf{1.000}$ & $0.000\to1.000$ \\
Q21 & HITL feedback & unsafe accept / clarify & $1.000\to\mathbf{0.000}$ & $0.000\to1.000$ \\
Q22 & governed workflow & success / incident & $0.824\to\mathbf{0.951}$ & $0.400\to0.000$ \\
\bottomrule
\end{tabular}
\end{table}

Appendix~\ref{sec:exp_operational_details} gives row-level details. These generated suites test contract composition: changed material should update the right evidence, attractive but unsafe releases should be blocked, external effects should be refused or escalated rather than treated as reversible, and human feedback should obey the same precedence rules as other proposals. A $1.000$ or $0.000$ cell means that the encoded rule passed or failed on the generated grid; it is not an independent proof of production safety.

\section{Related Work}
\label{sec:related}

\logos sits at the confluence of several lines of research; this section isolates the points of difference.

\paragraph{Release engineering, policy, and assurance.}
\logos deliberately borrows from continuous delivery, MLOps, policy-as-code,
runtime assurance, software-supply-chain provenance, capability security, and
assurance-case practice. Versioned artifacts, canaries, rollback, audit logs,
approval workflows, and least-privilege credentials are not new by themselves.
The difference is the object under control: an agent team can use tools,
remember outcomes, delegate work, and propose edits to the prompts, memories,
skills, tools, and workflows that shape future behavior. \logos adapts these
engineering traditions to that self-evolving setting by making every learned
artifact a release candidate under a portable trace, verifier, root-policy, and
promotion contract.

\paragraph{Mixed-initiative and adjustable autonomy.}
Mixed-initiative interaction and adjustable autonomy study how humans and
automated systems share control, when systems should ask, and how authority
should shift under uncertainty~\citep{horvitz1999mixed,allen1999mixed,scerri2002adjustable}. \logos inherits that concern but changes the controlled object:
the system is not only choosing an action during a task, it is proposing edits
to prompts, memories, tools, roles, and workflows that will shape future tasks.
The human--agent loop therefore becomes a release contract with scoped events,
root-policy precedence, promotion gates, and audit records, rather than only a
dialogue-management policy.

\paragraph{Agent interoperability and resource-aware infrastructure.}
Recent industrial work emphasizes complementary deployment axes. Google
Agent2Agent focuses on interoperable communication, task state, and capability
discovery across agents~\citep{google2025a2a}. NVIDIA's small-language-model
position paper argues that agentic systems should route work across
resource-appropriate models rather than defaulting to one large model for every
task~\citep{belcak2025slmAgents}. The HyperAgents study examines
self-referential improvement with safety precautions such as sandboxing and
human oversight~\citep{zhang2026hyperagents}. \logos complements these efforts
by treating interoperability, model selection, and self-improvement as
mechanisms that must still pass artifact-level release governance.

\paragraph{Multi-agent systems and frameworks.}
LLM multi-agent frameworks such as AutoGen, MetaGPT, ChatDev, GPTSwarm, OpenHands, and OpenAI Agents provide useful but incompatible abstractions, schemas, and traces~\citep{wu2023autogen,hong2024metagpt,qian2024chatdev,zhuge2024gptswarm,wang2024openhands,openai2025agents}. \logos places compatible execution backends behind the interface of Section~\ref{sec:kernel}. Trace normalization to Eq.~\eqref{eq:trace} supports portable evidence and artifact re-expression, while target-side verification remains mandatory.

\paragraph{Automated design of agentic systems.}
DSPy compiles LM pipelines, ADAS programs agents from an archive, AFlow searches workflows, and \maas samples an agentic supernet per query~\citep{khattab2024dspy,hu2024adas,zhang2024aflow,zhang2025maas}. \logos incorporates workflow search, but its compiler addresses the upstream production problem: synthesizing a probe-checked or fail-closed team, tools, skills, roles, and genome from bounded multimodal operational inputs under the fallback and finite-validation contract of Eqs.~\eqref{eq:fallback}--\eqref{eq:certify}.

\paragraph{Model routing and output verification.}
FrugalGPT, RouteLLM, and cascade analyses study cost-aware deferral~\citep{chen2023frugalgpt,ong2024routellm}. \logos casts its routing regimes as Eq.~\eqref{eq:bg_route_obj} and uses a verifier, not a learned confidence score, as the cascade escalation signal. On verification, LLM-as-judge and trained verifiers assess outputs in isolation~\citep{zheng2023llmjudge,cobbe2021gsm8k,zhang2024genrm}; \logos orders deterministic, learned, and judge checks by typed precedence, strengthens free-form judgment with evidence graphs, and records anytime-valid rate evidence shared by compilation, routing, and evolution.

\paragraph{Adaptive evaluation and reusable holdouts.}
Reusable-holdout and adaptive-data-analysis work shows that repeatedly inspecting held-out failures can invalidate ordinary evaluation~\citep{dwork2015reusable,dwork2015preserving}. \logos adopts the same warning operationally: proposal examples, gate examples, and report holdouts are separated; proposal-safe gate feedback contains aggregate counts and hashes rather than per-example failures; and the optional anytime-valid gate prevents stopping and monitoring from creating a hidden multiple-testing channel under its stated assumptions.

\paragraph{Learned orchestration behind a single interface.}
Graph-based agent search, workflow generation, and query-conditioned architecture search all treat orchestration as an optimization object~\citep{zhuge2024gptswarm,zhang2024aflow,shang2024agentsquare,zhang2025maas}. \logos adds a platform-native collective router: a cost-aware decision head, online bandit updates, routing-margin-gated deep workflows, replica diversity, and verification-gated explanations.

\paragraph{Self-evolving and self-improving agents.}
Self-evolving agents improve from experience through reflection, skill libraries, prompt evolution, evidence-supported curricula, or code self-improvement~\citep{gao2025survey,shinn2023reflexion,wang2024voyager,madaan2023selfrefine,agrawal2025gepa,arai2026eveagent,zhang2025dgm}. \logos differs in making deployment control the acceptance rule: the paired gate adopts only under execution-closed, held-out task-paired evidence with explicit credit assignment. Its textual skill optimization is related to TextGrad, OPRO, PromptBreeder, and GEPA~\citep{yuksekgonul2024textgrad,yang2024opro,fernando2024promptbreeder,agrawal2025gepa}, but \logos embeds that style of optimization in a memory- and HITL-aware, backend-portable, auditable loop.

\paragraph{Memory, human interaction, and oversight.}
Long-lived agents need memory and human oversight~\citep{park2023generative,packer2023memgpt}. \logos combines three-tier value-aware memory, gated promotion, and optional calibrated acquisition with the same release-governance boundary used for skills and workflows. Clarifying-questions and human-in-the-loop benchmarks study when agents should ask, including HiL-Bench's Ask-F1 precision/blocker-recall metric\citep{gan2024clarqllm,suri2025clarifybench,elfeki2026hilbench}; \logos connects asking to pre-run and mid-run selective escalation, directive synthesis, and risk-proportional approval, rollback, and audit. The contribution is the combination of these controls with MAS compilation and adapter-based operation.
\section{Conclusions}
\label{sec:conclusions}

This paper presents \logos, a pluggable governance layer for agent teams whose prompts, tools, memories, roles, and workflows evolve over time. Its central contribution is an evidence-governed release contract: multimodal inputs are compiled into probe-checked Agent Packs, execution is captured through normalized traces, and every proposed change remains isolated until evidence, policy, approval, and audit requirements permit promotion.

\logos makes multi-agent evolution observable and controllable. Agents can request missing information, credentials, or authority, while humans can asynchronously steer, constrain, approve, reject, or pause their actions. These interactions are recorded as governed events with explicit scope, provenance, precedence, expiry, and audit semantics.

The core principle is the separation of evidence from authority. Evaluation can show that a candidate works within a defined scope, but only human-owned policy can authorize its release. This boundary enables continuous improvement without allowing the model to become its own evaluator or release manager. \logos therefore treats adaptation as scoped candidate release: behavior may change after evidence, while authority remains explicit, traceable, and human-owned.

\bibliographystyle{unsrtnat}
\bibliography{ref}

\newpage
\appendix
\section{Appendix Guide and Notation}
\label{sec:notation}

The appendix supports two reading paths. Appendix~\ref{sec:worked_example}
first gives a concrete pack, so readers can see the object before returning to
formal details. Appendices~\ref{sec:bg_ext} and~\ref{sec:method_ext} then
collect derivations and statistical machinery. The remaining appendices record
operational contracts, advanced opt-in mechanisms, and experimental protocol.

\begin{table}[tb]
\centering\small
\setlength{\tabcolsep}{5pt}\renewcommand{\arraystretch}{1.16}
\caption{Appendix reading map.}
\label{tab:appendix_roadmap}
\begin{tabular}{@{}L{0.28\linewidth}L{0.62\linewidth}@{}}
\toprule
Appendix & Use it for \\
\midrule
Guide and notation & Essential symbols and grouped mechanism families. \\
Worked example & A concrete pack, team, genome, gate, and trace-oriented artifact. \\
Extended background & Formal definitions supporting the primer and lifecycle contract. \\
Extended method & Exact update rules, e-processes, and controller details. \\
Operational contract & Event, memory, scheduling, and execution semantics. \\
Advanced mechanisms and experiments & Optional evolution instruments and extended protocol/results. \\
\bottomrule
\end{tabular}
\end{table}

The notation below is deliberately selective. Symbols used only once are left
near their defining equations; Tables~\ref{tab:notation} and~\ref{tab:notation2}
keep only the symbols needed to navigate multiple sections. Bold lowercase
letters denote vectors, and a prime, as in $\mathfrak{D}'$ or $\Pi'$, marks a
candidate version.

\begin{table}[tb]
\centering\small
\setlength{\tabcolsep}{5pt}\renewcommand{\arraystretch}{1.18}
\caption{Core notation used across the paper.}
\label{tab:notation}
\begin{tabular}{@{}L{0.25\linewidth}L{0.67\linewidth}@{}}
\toprule
Symbol & Meaning \\
\midrule
$\Pi$ & Agent team: roles, policies, contexts, tools, shared state, and execution budget. \\
$\mathsf{Pack}$ & Versioned release artifact containing the team, genome, verifiers, probes, and manifest. \\
$G$ & Agent Genome: routing, memory, verification, promotion, safety, sandbox, tool, and attention policies. \\
$R_{\mathrm{root}}$ & Human-owned root policy: objective, evaluators, holdouts, permissions, approvals, effect policy, credentials, and audit sink. \\
$\mathfrak{D}$ & Governed deployment object: pack plus root policy, backend, credentials, external state, and audit sink. \\
$q,\mathcal{S}$ & Natural-language objective and bounded source material supplied to the compiler. \\
$\tau,P_q,\mathcal{H}$ & Task instance, deployment task distribution, and held-out gate set. \\
$\zeta,\chi_{\mathrm{eff}}$ & Normalized execution trace and external-effect record. \\
$s,u,m,J$ & Task score, task utility, gate metric, and expected deployment utility. \\
$\mathfrak{D}',\mathfrak{D}_{+}$ & Candidate deployment and accepted next version, if the gate promotes it. \\
$\nu,B$ & Verifier and its Boolean acceptance decision. \\
\bottomrule
\end{tabular}
\end{table}

\begin{table}[tb]
\centering\small
\setlength{\tabcolsep}{5pt}\renewcommand{\arraystretch}{1.18}
\caption{Mechanism notation used repeatedly in later sections.}
\label{tab:notation2}
\begin{tabular}{@{}L{0.25\linewidth}L{0.67\linewidth}@{}}
\toprule
Symbol & Meaning \\
\midrule
$\mathcal{W},\mathcal{K},B$ & Tool specs, skill specs, and the compiler blueprint. \\
$\rho,a_k,d$ & Routing policy, cascade stage, and task dimension. \\
$\mathcal{M}_{\mathrm{raw}},\mathcal{M}_{\mathrm{val}},\mathcal{M}_{\mathrm{skill}}$ & Raw, validated, and durable skill-memory tiers. \\
$\eta,Q,\mathbf{u}$ & Memory episode, learned value, and semantic embedding. \\
$\theta,P,\beta$ & Skill document, proposed textual patch, and edit budget. \\
$\kappa,\widehat{\Delta}_{\mathcal{H}},R_{\mathcal{H}}$ & Target component, held-out gain, and observed regressions in the paired gate. \\
$K_t,\alpha$ & Anytime-valid evidence wealth and error budget. \\
$\mathcal{Q}_{\mathrm{ask}},\pi_{\text{ask}},\askf$ & Candidate questions, ask policy, and Ask-F$_1$ interaction metric. \\
$\mathbb{W},\mathcal{L}$ & Resident World and its shared lesson store. \\
$\mathcal{K}_u=(\mathcal{D}_u,\Sigma_u,\mathcal{L}_u)$ & Platform knowledge state: design precedents, skill library, and shared lessons. \\
\bottomrule
\end{tabular}
\end{table}

Table~\ref{tab:components} groups the implementation mechanisms by the role
they play in the lifecycle. The main text defines the high-use mechanisms; the
appendix keeps only brief reminders for the longer tail.

\begin{table}[tb]
\centering\small
\setlength{\tabcolsep}{4.5pt}\renewcommand{\arraystretch}{1.16}
\caption{\logos mechanisms grouped by lifecycle role.}
\label{tab:components}
\begin{tabular}{@{}L{0.20\linewidth}L{0.36\linewidth}L{0.34\linewidth}@{}}
\toprule
Lifecycle role & Mechanism family & Purpose \\
\midrule
Build and certify & compiler and rate evidence & Compile source material into a pack and attach finite or anytime tool-call evidence. \\
Route and coordinate & verifier-gated and collective routing & Choose models, agents, and workflows under cost, uncertainty, and verifier feedback. \\
Remember and improve & gated memory, textual skill updates, workflow search & Retrieve useful experience, optimize skills, and search workflows while preserving gate discipline. \\
Gate releases & paired execution, attribution, and anytime-valid gates & Attribute failures, evaluate candidates by held-out execution, and optionally add anytime-valid evidence. \\
Human and workplace control & budgeted questions, directive synthesis, resident state & Ask budgeted or calibrated questions, fuse human directives, and keep resident World-scoped state. \\
Advanced evolution & optional retirement, memory editing, diversity, and tool repair & Retain proven artifacts, edit memory, preserve search diversity, and gate self-referential tool repair. \\
\bottomrule
\end{tabular}
\end{table}

\section{Worked Example: Security-Exception Review}
\label{sec:worked_example}

This appendix makes the abstract objects in Eqs.~\eqref{eq:compile}
and~\eqref{eq:genome} concrete through a security-governance example. It shows
how the compiler output forms a release bundle: a team description,
typed skills and tools, a phase plan, a genome, verifier declarations, and
audit evidence.

\paragraph{The description that creates the team.}
The compiler input is a short English objective plus bounded operational material:
a security-exception policy, a vendor intake form, a ticket export, a risk
matrix, and an OpenAPI description for an exception-management service. The
brief is deliberately operational rather than abstract: review vendor security
exception requests, normalize facts from the intake and ticket state, map
policy evidence, score residual risk, ask a human only for blocking authority
or approval gaps, draft a customer-safe recommendation, and verify schema,
evidence, and safety before any ticket update. The emitted artifact is more
than a prompt. The compiler reads the source types, infers which work should be
split across specialists, assigns tools and authority boundaries, and emits the
release components summarized below.

\paragraph{What an Agent Pack contains.}
An \pack is the portable release unit. Table~\ref{tab:worked_pack_components}
lists the concrete components in this example. The same division is what
lets the runtime inspect, verify, reroute, evolve, and roll back the team as a
software artifact rather than as an opaque conversation.

\begin{table}[tb]
\centering\footnotesize
\setlength{\tabcolsep}{3.5pt}\renewcommand{\arraystretch}{1.12}
\caption{Concrete \pack components in the security-exception example.}
\label{tab:worked_pack_components}
\begin{tabular}{@{}L{0.20\linewidth}L{0.31\linewidth}L{0.39\linewidth}@{}}
\toprule
Component & Example contents & Release-governance role \\
\midrule
Manifest & \texttt{pack.json}: name, version, domain, agents, skills, tools, certification, compiler metadata & makes the team versioned, inspectable, and schema-checkable \\
Source material & policy excerpt, intake form, ticket CSV, risk matrix, OpenAPI schema & bounds what the compiler may rely on and what evidence claims can cite \\
Agents & seven role files under \texttt{agents/} & materialize the initial MAS $\Pi_0$: who acts, verifies, writes, and asks \\
Skills & five progressive \texttt{SKILL.md} playbooks & provide procedural knowledge without forcing every detail into every prompt \\
Tools & \texttt{csv\_preview}, \texttt{safe\_sql}, \texttt{ask\_user}, \texttt{http\_request} & expose typed capabilities with read/write, network, and human-input policy \\
Execution plan & \texttt{execution\_plan.json} with intake, policy map, risk score, human review, drafting, verification & turns the task into an auditable phase DAG rather than an unstructured chat \\
Genome & routing, memory, verifier, promotion, safety, sandbox, tools, attention budget & governs runtime behavior and what may evolve \\
Goal state & \texttt{goal.json} & records the objective, status, and evolution trigger for resident operation \\
Evidence artifacts & provenance, trace excerpt, operator questions, gate evidence record, example output & provide audit records without leaking final-gate examples to future proposals \\
\bottomrule
\end{tabular}
\end{table}

Table~\ref{tab:worked_sources} shows how this single description becomes
typed pack material. The compiler emits a versioned release artifact, not only
a prompt: its manifest, agents, skills, tools, verifiers,
probes, and policies are inspectable.

\begin{table}[tb]
\centering\small
\setlength{\tabcolsep}{4pt}\renewcommand{\arraystretch}{1.12}
\caption{Worked example sources and the pack elements they induce.}
\label{tab:worked_sources}
\begin{tabular}{@{}>{\raggedright\arraybackslash}p{0.27\linewidth}
                >{\raggedright\arraybackslash}p{0.31\linewidth}
                >{\raggedright\arraybackslash}p{0.32\linewidth}@{}}
\toprule
Input material & Compiler interpretation & Emitted pack artifact \\
\midrule
Policy excerpt & required facts, decision rules, authority boundary & policy-mapping skill; verifier evidence protocol \\
Vendor intake form & concrete request facts and missing fields & intake phase; risk-score inputs \\
Ticket CSV & structured operational state & CSV/SQL-readable tools and intake cross-checks \\
Risk matrix & deterministic risk tier and downgrade rules & risk-scoring skill; semantic check on risk application \\
OpenAPI document & draft-vs-submit effect boundary & draft payload schema; submit approval is human-gated \\
\bottomrule
\end{tabular}
\end{table}

\paragraph{Which agents are created.}
The selected blueprint is a specialist team rather than a single reviewer
because the task combines extraction, policy interpretation, risk scoring,
human authority, drafting, and independent verification. Table~\ref{tab:worked_agents}
lists the concrete agents, the signal that caused the compiler to create them,
and the output each one is expected to produce.

\begin{table}[tb]
\centering\footnotesize
\setlength{\tabcolsep}{3.2pt}\renewcommand{\arraystretch}{1.12}
\caption{Agents emitted for the security-exception pack.}
\label{tab:worked_agents}
\begin{tabular}{@{}L{0.18\linewidth}L{0.22\linewidth}L{0.30\linewidth}L{0.20\linewidth}@{}}
\toprule
Agent & Creation signal & Responsibility & Output \\
\midrule
Team lead & multi-phase workflow with effect boundary & dispatches phases, tracks blockers, prevents unauthorized effects & phase assignments and final status \\
Intake analyst & intake form plus ticket CSV & normalizes vendor, system, data class, exception, dates, owners, and missing facts & structured request facts \\
Policy mapper & policy excerpt and citation requirement & maps facts to clauses, controls, and evidence spans & cited policy map \\
Risk assessor & risk matrix and compensating controls & computes inherent and residual risk and approval route & risk tier and rationale \\
Control owner & missing authority and human approval & asks only blocking authority or approval questions & scoped human question or approval record \\
Response writer & customer-facing recommendation & drafts a conditional decision and API draft payload & decision text and draft payload \\
Verifier & schema, evidence, and safety gates & checks payload schema, evidence support, and external-effect policy & pass/fail verdict and evidence fields \\
\bottomrule
\end{tabular}
\end{table}

\paragraph{Genome and other runtime features.}
The manifest's genome selects an orchestrator-subagent topology, a
verification-gated cascade over model tiers, value-aware memory, a budgeted ask
policy, a schema/evidence/security verifier stack, and a promotion rule that
requires human review and safety non-regression for release-sensitive changes.
If labeled calibration data are available, the ask policy can be replaced by a
calibrated questioning policy. The relevant genome fields declare an
orchestrator-subagent routing pattern, a verifier-gated model-tier cascade, a
memory write policy that waits for evaluation success, a human-input tool under
a budgeted ask policy, required payload/evidence/security verifiers, and a
safety rule that treats approval submission as a human-authorized effect.

The execution plan gives the team an auditable DAG. The \texttt{control\_review}
phase has \texttt{requires\_human=true}, so missing authority does not become a
model guess. A typical operator question in this example is: ``Who is the
authorized security owner for ticket SEC-1042, and may they approve a
time-bounded compensating-control exception?'' If the answer is absent, the
team can still draft a recommendation, but the external approval effect remains
blocked.

\begin{table}[tb]
\centering\footnotesize
\setlength{\tabcolsep}{3.5pt}\renewcommand{\arraystretch}{1.12}
\caption{Additional \logos features illustrated by the same pack.}
\label{tab:worked_features}
\begin{tabular}{@{}L{0.20\linewidth}L{0.35\linewidth}L{0.35\linewidth}@{}}
\toprule
Feature & Concrete example & Why it matters \\
\midrule
Routing cascade & extraction can start on a cheap tier; policy reasoning escalates when verification or uncertainty requires it & cost is saved only when evidence supports cheap-stage acceptance \\
Human questioning & the control-owner agent asks for missing approval authority, capped by the attention budget & the team asks for authority instead of fabricating it \\
Verifier stack & payload schema, evidence protocol, and security-regression checks all block release & schema conformance alone cannot approve the decision \\
Effect envelope & draft ticket update is allowed; approval submission prompts for human authorization & structural rollback is not confused with undoing an external effect \\
Trace artifact & route choices, observations, question events, verifier results, and final status are normalized & debugging and later evolution use portable evidence \\
Gate evidence record & the example record stores task-set hash, verifier family, decision, and safety outcome & future skill or workflow edits remain release candidates until gated \\
\bottomrule
\end{tabular}
\end{table}

\paragraph{Compile--operate--evolve walkthrough.}
The example is a minimal end-to-end contract trace. It shows which artifacts
are present when the same pack is exercised under the compile--operate--evolve
lifecycle.

\begin{table}[tb]
\centering\footnotesize
\setlength{\tabcolsep}{3.5pt}\renewcommand{\arraystretch}{1.12}
\caption{End-to-end release contract illustrated by the security-exception pack.}
\label{tab:worked_lifecycle}
\begin{tabular}{@{}L{0.16\linewidth}L{0.32\linewidth}L{0.42\linewidth}@{}}
\toprule
Step & Artifact or event & Governance check \\
\midrule
Compile & \texttt{pack.json}, role files, skills, tool declarations, execution plan & schemas validate, tools declare effects, verifier and probe metadata are present \\
Operate & normalized trace excerpt with route, observation, question, verifier, and outcome events & operator sees evidence-bearing events rather than hidden reasoning; missing authority becomes a question \\
Detect failure & trace records a rejected draft update after an API/schema mismatch & failure is attributed to a typed component, not immediately patched into the live pack \\
Propose & candidate tool-schema or skill edit with patch provenance & candidate is isolated under the same root policy and scoped credentials \\
Gate & paired baseline/candidate evaluation and \texttt{gate\_evidence.json} & promotion requires held-out gain, regression budget, safety check, and approval where required \\
Promote or reject & accepted pack version or unchanged baseline plus audit event & rollback restores structural artifacts; external effects remain governed separately \\
\bottomrule
\end{tabular}
\end{table}

This worked example instantiates the paper's central invariant. The team may
draft, ask, route, remember, and propose changes, but the live pack, approval
policy, credentials, evaluator, final holdout, and external effects remain
governed by evidence and human-owned authority.

\section{Extended Background Formulations}
\label{sec:bg_ext}

This appendix collects the formal material deferred from Section~\ref{sec:background}: definitions and derivations that sharpen the toolkit of that section but are not required to follow the main text.

\paragraph{Trace space.}
Each event of a trace, Eq.~\eqref{eq:trace}, is a tagged record rather than a full product tuple. The event universe is the tagged union
\begin{equation}
 \begin{aligned}
 \mathcal{E}_{\mathrm{tagged}} ={}&
 \mathcal{E}_{\mathrm{task}}\cup\mathcal{E}_{\mathrm{plan}}\cup
 \mathcal{E}_{\mathrm{action}}\cup\mathcal{E}_{\mathrm{observation}}\cup
 \mathcal{E}_{\mathrm{route}}\cup\mathcal{E}_{\mathrm{verdict}}\cup{}\\
 &\mathcal{E}_{\mathrm{human}}\cup\mathcal{E}_{\mathrm{outcome}}\cup
 \mathcal{E}_{\mathrm{cost}}\cup\mathcal{E}_{\mathrm{risk}}.
 \end{aligned}
 \label{eq:event_space}
\end{equation}
where each member carries its own schema, timestamp, actor/backend, correlation id, and optional hashes. The trace space of Section~\ref{sec:bg_mas} is the set of all finite event sequences, $\mathcal{Z}=\bigcup_{K=1}^{K_{\max}}\mathcal{E}_{\mathrm{tagged}}^{K}$, with declared maximum trace length $K_{\max}$.

\paragraph{Bandit regret.}
The standard yardstick for a router solving the contextual bandit of Eq.~\eqref{eq:bg_route_obj} is its \emph{regret} after $T$ rounds,
\begin{equation}
 \mathrm{Reg}_T = \sum_{t=1}^{T}\Bigl(\max_{a\in\mathcal{A}_{\mathrm{route}}}\mathbb{E}\bigl[U(\tau_t,a)\bigr] - \mathbb{E}\bigl[U(\tau_t,a_t)\bigr]\Bigr),
 \label{eq:bg_regret}
\end{equation}
the cumulative utility it forgoes relative to always playing the per-context best action; an algorithm learns in the bandit sense when $\mathrm{Reg}_T$ grows sublinearly in $T$, so its per-round loss vanishes.

\paragraph{Cascade cost condition and false-accept bound.}
Consider the verification-gated cascade of Section~\ref{sec:bg_routing} with stages $a_1\prec\cdots\prec a_K$ and per-stage costs $c_j(\tau)$. Let $J(\tau)$ be the first stage whose output the verifier accepts, setting $J(\tau)=K$ if every cheaper stage is rejected, and let $s(d_\tau)$ be the learned entry stage. The realized cost is
\begin{equation}
 C_{\mathrm{cascade}}(\tau)=\sum_{\ell=s(d_\tau)}^{J(\tau)}c_\ell(\tau).
 \label{eq:cascade_realized_cost}
\end{equation}
Expected cost improves over always running $a_K$ exactly when
\begin{equation}
 \mathbb{E}\left[\sum_{\ell=s(d_\tau)}^{J(\tau)}c_\ell(\tau)\right]
 < \mathbb{E}\left[c_K(\tau)\right].
 \label{eq:cascade_cost}
\end{equation}
The probability-times-cost expression sometimes used for intuition is valid only when per-stage costs are constant or replaced by the corresponding conditional expectations. A cascade is therefore a conditional improvement over always running the fallback $a_K$: at least as good on accuracy under verifier soundness, state isolation, and fresh-fallback semantics, and strictly better on expected cost precisely when Eq.~\eqref{eq:cascade_cost} holds. A broader Pareto claim over any fixed single action requires that $a_K$ weakly dominate that action on accuracy under the same task distribution. The entry rule of Section~\ref{sec:cascade} shrinks overhead by skipping stages whose posterior success probability is too low to justify attempting.

\begin{proposition}[Cascade false-accept bound under stated assumptions]
Under the assumptions in Section~\ref{sec:cascade}, the verification-gated cascade has accuracy at least that of always executing $a_K$. With an imperfect verifier, the excess wrong-answer probability is bounded by
\begin{equation}
 \Pr(\text{wrong early accept})
 =
 \sum_{j<K}\Pr(\operatorname{reach}_j,\neg\mathrm{ok}_j,\nu_j=1)
 \le
 \sum_{j<K}\Pr(\operatorname{reach}_j,\neg\mathrm{ok}_j)
 \alpha^{\mathrm{FA}}_{j,\mathrm{reach}},
 \label{eq:cascade_false_accept_bound}
\end{equation}
where
$\alpha^{\mathrm{FA}}_{j,\mathrm{reach}}=
\Pr(\nu_j=1\mid \operatorname{reach}_j,\neg\mathrm{ok}_j)$.
\end{proposition}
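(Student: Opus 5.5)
The plan is to couple the cascade with the always-$a_K$ policy on a common probability space and then decompose outcomes by the stage at which the cascade stops. For each task $\tau$, let $\operatorname{reach}_j$ be the event that stage $j$ is attempted, meaning $j\ge s(d_\tau)$ and every earlier attempted stage was rejected. Let $\mathrm{ok}_j$ denote correctness of stage $j$'s output and $\nu_j=1$ its verifier acceptance. The stopping events $\{\operatorname{reach}_j,\nu_j=1\}$ for $j<K$, together with $\operatorname{reach}_K$, partition the sample space, because at most one stage is the first acceptance and the cascade otherwise terminates at $a_K$.

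\textbf{Step 1: equality for the false-accept mass.} The cascade returns a wrong early answer exactly when, for some $j<K$, it reaches $j$, the output is incorrect, and the verifier accepts it. These events are disjoint across $j$ because they are distinct first-acceptance stages. Finite additivity then gives the first equality in Eq.~\eqref{eq:cascade_false_accept_bound}.

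\textbf{Step 2: the inequality.} Factor each term as $\Pr(\operatorname{reach}_j,\neg\mathrm{ok}_j)\,\Pr(\nu_j=1\mid\operatorname{reach}_j,\neg\mathrm{ok}_j)$ by the definition of conditional probability. Terms whose conditioning event is null are set to zero. This yields the right-hand side, with equality. I would note explicitly that the display's ``$\le$'' is in fact an equality. It becomes a genuine bound once $\alpha^{\mathrm{FA}}_{j,\mathrm{reach}}$ is replaced by any uniform upper bound on the reach-conditioned false-accept rate.

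\textbf{Step 3: accuracy comparison.} Write $\Pr(\text{cascade correct})=\sum_{j<K}\Pr(\operatorname{reach}_j,\nu_j=1,\mathrm{ok}_j)+\Pr(\operatorname{reach}_K,\mathrm{ok}_K)$. By state isolation and fresh-fallback semantics, conditional on $\operatorname{reach}_K$, the output of $a_K$ has the same distribution as an independent always-$a_K$ run. It is also independent of the earlier-stage history, so $\Pr(\operatorname{reach}_K,\mathrm{ok}_K)=\Pr(\operatorname{reach}_K)\,p_K$, where $p_K$ is the accuracy of always running $a_K$.

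Under soundness, meaning every early accept is correct, each early term equals $\Pr(\operatorname{reach}_j,\nu_j=1)$. Hence
\[
\Pr(\text{cascade correct})=(1-\Pr(\operatorname{reach}_K))+\Pr(\operatorname{reach}_K)\,p_K\ge p_K,
\]
since $p_K\le1$. Without soundness, the same computation shows that the cascade's accuracy is at least $p_K$ minus the wrong-early-accept mass from Step 2. This is the claimed loss bound.

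The main obstacle I expect is Step 3's independence claim. The argument needs $\mathrm{ok}_K$ to be conditionally independent of $\operatorname{reach}_K$, which holds only because the fallback is freshly sampled with the same distribution as policy $a_K$ and does not reuse the rejected stages' context or budget. Getting this right requires conditioning on $\tau$ as well. Within a task the fresh fallback is independent of the earlier stages, but across tasks $\operatorname{reach}_K$ correlates with task hardness. I would therefore carry out the inequality pointwise in $\tau$ first, as $1-r(\tau)+r(\tau)p_K(\tau)\ge p_K(\tau)$, and only then integrate over $P_q$. That ordering avoids any spurious independence between reach and difficulty.
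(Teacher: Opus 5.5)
Your proposal is correct and follows the same route as the paper's proof: soundness makes every early accept correct, the fresh-fallback assumption makes the stage-$K$ output distributed exactly as $a_K$ on the same task, and the wrong-early-accept mass is the sum of disjoint first-accept events. Your version is more explicit: it adds the pointwise inequality $(1-r(\tau))(1-p_K(\tau))\ge 0$, it correctly notes that the displayed ``$\le$'' is an equality given the definition of $\alpha^{\mathrm{FA}}_{j,\mathrm{reach}}$, and in the write-up you should keep only the task-conditional factorization $\Pr(\operatorname{reach}_K,\mathrm{ok}_K\mid\tau)=r(\tau)p_K(\tau)$, since the unconditional identity you first wrote in Step~3 is false in general, as you yourself observe.
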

\begin{proof}
If an early stage is accepted, soundness gives a correct answer. If no early stage is accepted, state isolation and the fresh-fallback assumption make the returned output distributed exactly as $a_K$ on the same task, so the cascade cannot lose accuracy relative to $a_K$. For an imperfect verifier, the only additional error path is reaching an early stage, producing an incorrect output, and accepting it; summing those disjoint first-accept events over $j<K$ gives the displayed bound.
\end{proof}

\paragraph{Coverage proxy and platform value.}
Actual transfer gain can be negative, so the platform does not model Eq.~\eqref{eq:bg_transfer_gain} as a non-negative monotone function. Instead it uses a separate coverage proxy for roadmap planning,
\begin{equation}
\begin{aligned}
 V_{\mathrm{cov}}(\mathcal{K})
 &= \mathbb{E}_{q\sim P_{\mathcal{Q}}}\bigl[h\bigl(\mathrm{cov}(q;\mathcal{K})\bigr)\bigr],\\
 \mathrm{cov}(q;\mathcal{K})
 &=
 \begin{cases}
 0, & \mathcal{D}=\varnothing,\\
 \displaystyle\max_{(q_i,\cdot,\cdot)\in\mathcal{D}} \simop\bigl(\mathbf{e}(q),\mathbf{e}(q_i)\bigr), & \text{otherwise.}
 \end{cases}
\end{aligned}
 \label{eq:bg_coverage}
\end{equation}
where $\mathcal{D}$ is the precedent component of $\mathcal{K}=(\mathcal{D},\Sigma,\mathcal{L})$, $\mathbf{e}(\cdot)$ embeds a brief into the similarity space of Section~\ref{sec:bg_memory}, $P_{\mathcal{Q}}$ is a planning distribution over future briefs, and $h:[0,1]\to\mathbb{R}_{\ge0}$ is non-decreasing with $h(0)=0$. This $V_{\mathrm{cov}}$ is not the expected actual transfer gain; it is an option-value proxy for covering more of the use-case space. Monotonicity and submodularity follow for the proxy because adding a precedent can only raise the maximum in $\mathrm{cov}$. Actual transfer remains gated target-side and may be negative.

\paragraph{Roadmap selection as an acquisition problem.}
The same objects turn a passive archive into an active planner. Given a candidate set $\mathcal{Q}_{\mathrm{cand}}$ of prospective use cases, the natural next-deployment rule maximizes immediate utility plus the marginal platform value the deployment would add,
\begin{equation}
 q^{(u+1)} = \argmax_{q\in\mathcal{Q}_{\mathrm{cand}}}
 \hat{J}\bigl(q;\mathcal{K}_u\bigr)
 +
 \lambda_V\Bigl[V_{\mathrm{cov}}\bigl(\mathcal{K}_u \oplus q\bigr) - V_{\mathrm{cov}}\bigl(\mathcal{K}_u\bigr)\Bigr],
 \label{eq:bg_roadmap}
\end{equation}
where $\hat{J}(q;\mathcal{K}_u)$ predicts the deployment utility of $q$ from its nearest precedents, namely the realized-quality records $p_i$ in $\mathcal{D}_u$, $\mathcal{K}_u\oplus q$ denotes the knowledge state after deploying $q$, and $\lambda_V\ge0$ prices coverage option value against immediate return. If $\hat J$ is flat, deployment costs are equal, a cardinality constraint is fixed, and exact marginal values of the monotone submodular proxy are available, greedy selection inherits the classical $(1-\nicefrac{1}{e})$ coverage approximation~\citep{nemhauser1978submodular}. Those are conditions for the proxy objective, not a guarantee of actual transfer utility. The ledger stores the triples $(q_i,B_i,p_i)$ that $\hat J$ and $\mathrm{cov}$ require, while target-side gates decide whether the resulting candidates are useful.

\paragraph{Confidence sequences.}
Dual to a sequential test is a \emph{confidence sequence}, abbreviated CS: a sequence of intervals $[\mathrm{LCB}_t,\mathrm{UCB}_t]$ that covers an unknown parameter $\mu$ uniformly over time~\citep{howard2021confidence,waudbysmith2024betting},
\begin{equation}
 \Pr\bigl(\exists t: \mu \notin [\mathrm{LCB}_t,\mathrm{UCB}_t]\bigr) \le \alpha.
 \label{eq:cs}
\end{equation}
Unlike a fixed-$n$ confidence interval, a CS may be inspected at every $t$ and still enjoys simultaneous $(1-\alpha)$ coverage. For bounded observations $X_i\in[0,1]$, one can obtain a CS by a union bound over a Hoeffding tail with a summable per-step level $\sum_t\alpha_t=\alpha$, or, more tightly, by inverting a betting e-process.

\paragraph{Online false-adoption accounting.}
A long-lived deployment makes not one but many adoption decisions, and even a per-decision error $\le\alpha$ can leave many false adoptions over a lifetime. LORD-family procedures, named for Levels based On Recent Discovery~\citep{javanmard2018online,ramdas2018saffron}, treat the significance budget as renewable \emph{alpha-wealth}. \logos uses the ledger as conservative online accounting unless the stronger conditions for formal LORD++ control are explicitly satisfied. Let $\tau_1<\tau_2<\cdots$ be the past rejection/adoption times. One standard LORD++ spending level for the $k$-th decision is
\begin{equation}
 a_k =
 w_0\gamma_k
 +(\alpha-w_0)\gamma_{k-\tau_1}\indic{\tau_1<k}
 +\alpha\sum_{\substack{j\ge2:\\ \tau_j<k}}\gamma_{k-\tau_j},
 \label{eq:lord_bg}
\end{equation}
where $w_0\in(0,\alpha)$, $(\gamma_j)_{j\ge1}$ is non-negative with $\sum_j\gamma_j=1$, and terms with non-positive indices or unavailable $\tau_j$ are omitted. Because each rejected null recycles testing wealth, the spendable level does not collapse to zero over a long loop. The statistical interpretation is only as strong as the candidate-level p-values and the dependence assumptions; exposed holdouts, adaptive reuse, or invalid verifier p-values make the ledger an audit heuristic rather than a formal FDR statement.

\section{Extended Method Formulations}
\label{sec:method_ext}

This appendix collects the detailed constructions deferred from the method sections: the exact bandit and portfolio formulas whose qualitative behavior the main text describes.

\paragraph{Cascade stage value.}
The optimistic verifier-pass estimate used by the entry rule of Eq.~\eqref{eq:cascade_entry} is the Beta posterior mean of Eq.~\eqref{eq:cascade_post} plus $c_{\mathrm{ucb}}$ posterior standard deviations, clipped to $[0,1]$, with exploration constant $c_{\mathrm{ucb}}>0$:
\begin{equation}
 \UCB(d,k) =
 \min\left\{1,
 \frac{\alpha_{d,k}}{\alpha_{d,k}+\beta_{d,k}} + c_{\mathrm{ucb}}\sqrt{\frac{\alpha_{d,k}\beta_{d,k}}{(\alpha_{d,k}+\beta_{d,k})^2(\alpha_{d,k}+\beta_{d,k}+1)}}\right\}.
 \label{eq:cascade_ucb}
\end{equation}
This posterior estimates future verifier-pass probability under the same dimensioning rule and verifier, not correctness itself. Accuracy claims therefore still rely on the verifier soundness or measured false-accept assumptions in the cascade lemma.
With an imperfect verifier the cascade's accuracy guarantee degrades by the reach-conditioned false-accept mass:
\begin{equation}
 \Pr(\text{wrong early accept})
 =
 \sum_{j<K}\Pr(\operatorname{reach}_j,\neg\mathrm{ok}_j,\nu_j=1)
 \le
 \sum_{j<K}\Pr(\operatorname{reach}_j,\neg\mathrm{ok}_j)
 \alpha^{\mathrm{FA}}_{j,\mathrm{reach}},
 \label{eq:cascade_false_accept_bound_app}
\end{equation}
where $\alpha^{\mathrm{FA}}_{j,\mathrm{reach}}=\Pr(\nu_j=1\mid\operatorname{reach}_j,\neg\mathrm{ok}_j)$. This is why the verifier stack is ordered deterministic-first: the dominant reached stages should have measured, preferably near-zero, false-accept mass.

\paragraph{Collective-router online update.}
Each deployed outcome $(x,w_j,r)$ updates the selection head by the same variance-reduced score-function rule as Eq.~\eqref{eq:onlinerouter}, applied to the context row $\kappa(x)$ with a guarded utility. Let $\widehat r_{\max,\kappa}$ be the best running reward estimate in the context and let $g_j=\indic{\widehat r_j\ge \widehat r_{\max,\kappa}-\epsilon_q}$ mark workers still on the quality frontier. The cost-adjusted utility is used only for those workers; outside the frontier, the update applies a quality penalty rather than allowing price alone to increase selection:
\begin{equation}
 u_j =
 \begin{cases}
 r-\lambda_{\mathrm{cost}}\tilde c_j, & g_j=1,\\
 r-\lambda_{\mathrm{cost}}\tilde c_j-(\widehat r_{\max,\kappa}-\epsilon_q-\widehat r_j), & g_j=0.
 \end{cases}
 \label{eq:chorus_guarded_utility}
\end{equation}
With per-context running-mean baseline $b_\kappa$, and a per-update clip that bounds how far one noisy outcome can move the policy, the update is
\begin{equation}
 z_i \leftarrow z_i + \operatorname{clip}\Bigl(\eta_{\mathrm{lr}}(u_j-b_\kappa)\bigl(\indic{i=j}-\pi_i\bigr)/T_{\mathrm{soft}}, \pm s_{\max}\Bigr).
 \label{eq:chorus_online}
\end{equation}
Here $T_{\mathrm{soft}}>0$ is the selection temperature from Eq.~\eqref{eq:chorus_policy}, and $s_{\max}>0$ is the configured per-update step clip.

\paragraph{Calibrated memory acquisition.}
When the calibrated-acquisition memory mode is enabled, Phase~B of gated-memory recall replaces the $z$-score blend with a calibrated Bayesian acquisition. A logistic calibration maps cosine similarity to a prior mean $\rho_\eta=\sigma\bigl((\simop(\mathbf{q},\mathbf{u}_\eta)-c_0)/T\bigr)$ with midpoint $c_0$ and temperature $T$; a conjugate update then forms the value posterior of episode $\eta$ from its $n_\eta$ observed rewards, and episodes are ranked by the upper-confidence acquisition score
\begin{equation}
\begin{aligned}
 \mathrm{acq}(\eta)
 &= \rho_\eta\bigl(\mu_\eta^{\text{post}} + \beta_{\mathrm{acq}}\sigma_\eta^{\text{post}}\bigr),\\
 \mu_\eta^{\text{post}}
 &= \frac{\kappa\rho_\eta + n_\eta\hat r_\eta/\sigma_\varepsilon^2}{\kappa + n_\eta/\sigma_\varepsilon^2},\\
 \sigma_\eta^{\text{post}2}
 &= \nicefrac{1}{(\kappa + n_\eta/\sigma_\varepsilon^2)}.
\end{aligned}
 \label{eq:calm_acq}
\end{equation}
with prior precision $\kappa$, noise variance $\sigma_\varepsilon^2$, normalized historical reward $\hat r_\eta\in[0,1]$, and exploration weight $\beta_{\mathrm{acq}}$. Selecting the top-$k_2$ by Eq.~\eqref{eq:calm_acq} is deterministic, eliminating the $\varepsilon$-greedy noise of the baseline while directing exploration toward high-uncertainty, high-relevance memories.

\paragraph{Trust-region controller.}
The edit budget $\beta_t$ is governed by a trust-region radius $\delta_t \in [\beta_{\min}, \beta_{\max}]$ adapted from the realized-vs-predicted improvement ratio
\begin{equation}
\begin{aligned}
\widehat g_t
&= \hat{g}(\theta_{t-1}, \tilde P_t),\\
\varrho_t
&=
\frac{s(\theta_t) - s(\theta_{t-1})}{\max\{\widehat g_t,g_{\min}\}},\\
 \delta_{t+1} =
 \begin{cases}
 \min(\delta_t \cdot c_{\text{exp}}, \beta_{\max}), & \varrho_t \ge \eta_{\text{hi}} \\
 \max(\delta_t \cdot c_{\text{shr}}, \beta_{\min}), & \varrho_t \le \eta_{\text{lo}}~\text{or}~\widehat g_t\le0 \\
 \delta_t, & \text{otherwise},
 \end{cases}
\end{aligned}
 \label{eq:storm}
\end{equation}
where $g_{\min}>0$ prevents division by zero and non-positive predicted gain forces shrinkage. The validation score $s(\theta_t)$ is computed under the declared scorer. The predicted gain $\hat{g}(\theta_{t-1},\tilde P_t)=(\bar g/\bar m)m_t$ is a first-order surrogate: the exponential moving averages $\bar g,\bar m$ of past realized gains and step sizes scaled by the current step size $m_t$. The step size must be in the same edit-budget units as $\delta_t$: it is an edit count by default, or an embedding displacement mapped to calibrated budget units when a live embedding model is available. The constants $c_{\text{exp}}>1$ and $c_{\text{shr}}<1$ are expand/shrink factors with ratio thresholds $\eta_{\text{hi}},\eta_{\text{lo}}$. A proximal term guards against semantic drift: writing $D_t=\sum_{s\le t} m_s$ for the cumulative displacement from the original skill embedding, the budget actually issued shrinks with accumulated drift,
\begin{equation}
 \beta_{t+1} = \operatorname{clip}\Bigl(\Bigl\lfloor \tfrac{\delta_{t+1}}{1+c_{\text{drift}}D_t}\Bigr\rceil, \beta_{\min}, \beta_{\max}\Bigr),
 \label{eq:storm_prox}
\end{equation}
with drift coefficient $c_{\text{drift}}\ge 0$, so the controller becomes conservative as the skill drifts far from its origin even when recent ratios stay favorable.

\subsection{Formal Guarantee for the Optional Anytime-Valid Gate and LORD}
\label{sec:sage_lord_formal}

\paragraph{Guarantee statement.}
The optional anytime-valid layer supplies an adoption test only under its stated experimental contract. The proposal and any candidate-selection rule must be fixed before the final gate stream begins; final-gate examples and per-example failures must remain hidden from the proposal generator; paired diffs must be bounded or normalized; and bets must be predictable with respect to the filtration $\mathcal{F}_{t-1}$ generated by previous gate examples, outputs, verifier results, and bets.

The gate stream must be fresh i.i.d. samples from the declared gate distribution, samples drawn with replacement from a fixed gate pool, or a fixed-sample evaluation with no sequential claim. Reading a fixed finite holdout without replacement does not by itself satisfy the i.i.d. conditional-mean null used below. Under the valid streaming assumptions, each gain or regression wealth process below is a non-negative supermartingale under its null. Ville's inequality gives $\Pr(\sup_t K_t\ge 1/\alpha)\le\alpha$ at any stopping time, so the e-value $K_t$ is converted to a valid p-value as $p_t=\min\{1,1/K_t\}$.

The adoption decision uses an intersection-union rule: the composite p-value is the maximum of the gain, regression, and optional safety p-values; simultaneous contradictory stream crossings produce \textsc{conflict} and require diagnostic review. Across candidates, the LORD-style ledger spends alpha-wealth on these composite p-values. Formal LORD++ error control requires conditionally super-uniform candidate-level p-values, predictable spending, and the standard dependence assumptions; otherwise the ledger is an audit account rather than an FDR guarantee. This guarantee does not cover verifier false accepts, exposed or adaptively reused holdout examples, adaptive overfitting to repeated aggregate feedback, distribution shift beyond the sampled gate, or unbounded utility scales.

\paragraph{Portfolio e-process for compiler and adoption gates.}
The building block is a plug-in-anchored portfolio e-process for a one-sided mean null $H_0{:} \mathbb{E}[Y]\le m_0$ with $Y\in[0,1]$. Rather than committing to a single bet fraction, it mixes an adaptive aGRAPA plug-in~\citep{waudbysmith2024betting} as an anchor with a grid of fixed fractions and reports the weighted-average wealth
\begin{equation}
 K_t = w\underbrace{\prod_{s\le t}\bigl(1+\lambda_s^{\text{ag}}(Y_s-m_0)\bigr)}_{\text{anchor}} + \nicefrac{(1-w)}{|\mathcal{G}|}\sum_{j\in\mathcal{G}}\prod_{s\le t}\bigl(1+\lambda^{(j)}(Y_s-m_0)\bigr),
 \label{eq:portfolio}
\end{equation}
where $w\in[0,1]$ is the anchor weight and $\mathcal{G}$ is the grid of fixed bet fractions $\lambda^{(j)}\in[0,c/m_0]$, capped by a margin $c\in(0,1)$ below the wealth-bankruptcy fraction $\nicefrac{1}{m_0}$. The adaptive anchor bet is
\begin{equation}
 \lambda_s^{\text{ag}}=
 \operatorname{clip}\left(
 \frac{\bar Y_{s-1}-m_0}{\max\{\widehat{\mathrm{var}}_{s-1},v_{\min}\}},
 0,\frac{c}{m_0}
 \right),
 \label{eq:agrapa_bet}
\end{equation}
formed only from past data, with variance floor $v_{\min}>0$ and a zero bet before the warm-up count $n_{\min}^{\mathrm{bet}}$. The wealth $K_t$ is a non-negative supermartingale under $H_0$, being a convex mixture of supermartingales, so $\Pr_{H_0}(\sup_t K_t\ge \nicefrac{1}{\alpha})\le\alpha$ by Ville's inequality; by the same mixture principle underlying universal portfolios~\citep{cover1991universal}, the fixed-fraction component competes with the best declared grid fraction up to the grid and mixture penalty, making it more robust than a single plug-in bet. A lower-tail test of $H_0{:}\mathbb{E}[Y]\ge m_0$ is implemented by applying the same construction to $\widetilde Y=1-Y$ with null threshold $1-m_0$, which is the convention used for compiler-callability refutation.

The GAIN stream of the optional adoption gate applies the affine rescaling $Y_t=(d_t+1)/2\in[0,1]$ of the bounded per-task diff $d_t\in[-1,1]$, with null mean $m_0=(\delta_{\min}+1)/2$. The benchmark gates use score diffs, which already live in this range; a full-utility deployment must declare a bounded normalization before using this process. Because the map is linear rather than clipped, it preserves the conditional null exactly, $\mathbb{E}[d_t\mid\mathcal{F}_{t-1}]\le\delta_{\min}\iff\mathbb{E}[Y_t\mid\mathcal{F}_{t-1}]\le m_0$; a clipped rescaling would break this equivalence since $\mathbb{E}[\operatorname{clip}(X)]\ne\operatorname{clip}(\mathbb{E}[X])$ in general. The REG stream bets on the non-regression indicator $Z_t=\indic{d_t\ge -\epsilon_{\text{reg}}}$ with null mean $1-\rho_{\max}$, where $0<\rho_{\max}<1$ is the tolerated regression rate, the streaming analogue of the integer cap $R_{\max}$ in the fixed-set gate of Eq.~\eqref{eq:echo_gate}. A strict sample gate with $R_{\max}=0$ is therefore distinct from a population claim of zero regression probability. Both deployed processes are the portfolio wealths of Eq.~\eqref{eq:portfolio} instantiated on their respective streams:
\begin{equation}
 \begin{aligned}
 \textbf{GAIN:}
 H_{0,t}^{\mathrm{gain}}{:} &\mathbb{E}[d_t\mid\mathcal{F}_{t-1}] \le \delta_{\min},\\
 K_t^{\mathrm{gain}} ={}& K_t^{\mathrm{portfolio}}(Y_{1:t},m_0).
 \end{aligned}
 \label{eq:sage_gain}
\end{equation}
\begin{equation}
 \begin{aligned}
 \textbf{REG:}
 H_{0,t}^{\mathrm{reg}}{:} &\Pr(d_t<-\epsilon_{\mathrm{reg}}\mid\mathcal{F}_{t-1}) \ge \rho_{\max},\\
 K_t^{\mathrm{reg}} ={}& K_t^{\mathrm{portfolio}}(Z_{1:t},1-\rho_{\max}).
 \end{aligned}
 \label{eq:sage_reg}
\end{equation}
Here $K_t^{\mathrm{portfolio}}(\cdot,\cdot)$ denotes Eq.~\eqref{eq:portfolio} with its anchor, grid, clipping, variance floor, and warm-up conventions.

\paragraph{Growth-loop stopping rule.}
Let $N_t$ be the number of counted cycles up to cycle $t$, counting only cycles with a recorded binary outcome, and write $N_u$ analogously for prefixes ending at cycle $u$. Let $\mathcal{U}^{\mathrm{win}}_t$ be the index set of the last $\min(N_{\mathrm{win}},N_t)$ counted cycles for a window length $N_{\mathrm{win}}$, and define the rolling pass-rate $\hat p_t=\nicefrac{1}{|\mathcal{U}^{\mathrm{win}}_t|}\sum_{u\in\mathcal{U}^{\mathrm{win}}_t}X_u$, evaluated only after at least one counted cycle. Let $C_t$ denote the consumed budget, wall-clock minutes within the session, or, for the lifetime variant, spend-incurring cycles accumulated in the ledger across sessions, with cap $C_{\max}$. The loop halts at the stopping time
\begin{equation}
 T_{\mathrm{stop}} = \min\bigl\{T_{\mathrm{done}}, T_{\mathrm{plateau}}, T_{\mathrm{budget}}, T_{\mathrm{human}}\bigr\},
 \label{eq:growth_stop}
\end{equation}
whose components are, in priority order,
\begin{align}
 T_{\mathrm{done}}
 &= \inf\bigl\{t: N_t \ge n_{\min} \wedge \hat p_t \ge p^{\mathrm{tgt}}\bigr\},
 \label{eq:tau_done}\\
 T_{\mathrm{budget}}
 &= \inf\bigl\{t: C_t \ge C_{\max}\bigr\},
 \label{eq:tau_budget}\\
 T_{\mathrm{plateau}}
 &= \inf\Bigl\{t:
 \max_{u\in\mathcal{U}^{\mathrm{rec}}_t}\hat p_u
 \le
 \max_{u\in\mathcal{U}^{\mathrm{pre}}_t}\hat p_u + \delta_{\mathrm{pl}}
 \wedge
 \text{no promotion in } \mathcal{U}^{\mathrm{rec}}_t
 \Bigr\}.
 \label{eq:tau_plateau}
\end{align}
where $\mathcal{U}^{\mathrm{rec}}_t=\{u: t-N_{\mathrm{pl}}<u\le t, N_u\ge n_{\min}\}$ is the trailing span of $N_{\mathrm{pl}}$ counted cycles and $\mathcal{U}^{\mathrm{pre}}_t=\{u: u\le t-N_{\mathrm{pl}}, N_u\ge n_{\min}\}$ its non-empty history. Both sets are restricted to cycles past the evidence floor $n_{\min}$; otherwise one or two early outcomes can distort the historical best and trigger the plateau rule against genuine later improvement. $T_{\mathrm{human}}$ is a human stop signal checked at cycle boundaries. With all stopping rules disabled, the loop reduces to a budget cap plus human stop signal.

\paragraph{Calibrated escalation and question selection.}
The calibrated questioning policy recalibrates raw confidence $c$ into $\hat P(\text{correct}\mid c)$ with a monotone isotonic fit via pool-adjacent-violators on labeled calibration data, then selects the smallest confidence threshold whose accepted region has an estimated error bound below the target,
\begin{equation}
 \tau_{\mathrm{ask}} = \min\Bigl\{c: \mathrm{kl\text{-}UCB}\bigl(\hat e(c),n(c);\delta\bigr) \le \alpha \Bigr\},
  \pi_{\text{ask}}=\text{resolve alone} \iff c(x) \ge \tau_{\mathrm{ask}},
 \label{eq:clara}
\end{equation}
where $c(x)$ is the query's recalibrated confidence, $\hat e(c)$ is the empirical error among the $n(c)$ calibration points with confidence $\ge c$, and $\mathrm{kl\text{-}UCB}$ is the tight Chernoff upper bound for a fixed threshold. Because the isotonic fit and threshold are selected from data, this is a threshold-selection heuristic unless calibration fit, threshold search, and final certification are split, or a simultaneous confidence/conformal-risk method is used under exchangeability and no post-selection model update. When a question is warranted, the policy chooses it by model-estimated expected information gain over the design posterior,
\begin{equation}
 \mathrm{EIG}(\beta) = I(\mathrm{Design};\mathrm{Answer}) = H(D) - \mathbb{E}_{a}\bigl[H(D\mid a)\bigr],
 \label{eq:clara_eig}
\end{equation}
computed exactly only when the candidate-design prior $P(D)$ and answer likelihoods $P(a\mid D)$ are finite and known. With LLM-estimated likelihoods, Eq.~\eqref{eq:clara_eig} is a model-estimated EIG score rather than an exact information-theoretic quantity.

\paragraph{The LORD ledger.}
At the $k$-th adoption decision the spendable level is
\begin{align}
a_k &=
w_0\gamma_k
+(\alpha-w_0)\gamma_{k-\tau_1}\indic{\tau_1<k}
+\alpha\sum_{\substack{j\ge2:\\ \tau_j<k}}\gamma_{k-\tau_j},\\
\textstyle\sum_{j\ge1}\gamma_j &= 1,
 \label{eq:lord}
\end{align}
capped at $\alpha$, where $(\gamma_j)_{j\ge1}$ is a summable, decaying spending sequence, $\tau_1<\tau_2<\cdots$ are the past adoption times, $w_0\in(0,\alpha)$ is the initial wealth, and unavailable or non-positive-index terms are omitted. The ledger spends this level on the intersection-union p-value
\begin{equation}
 p_k =
 \begin{cases}
 1, & B_{\mathrm{safety}}(\mathsf{Pack}')=0,\\
 \max\{p_k^\text{gain},p_k^\text{reg},p_k^\text{safety}\}, & \text{safety stream enabled},\\
 \max\{p_k^\text{gain},p_k^\text{reg}\}, & \text{otherwise},
 \end{cases}
 \label{eq:evalue_composite}
\end{equation}
where $B_{\mathrm{safety}}$ is the hard safety predicate declared by root policy, $p_k^\text{gain}=\min\{1,1/K_t^\text{gain}\}$ and $p_k^\text{reg}=\min\{1,1/K_t^\text{reg}\}$ are the e-process p-values from Eqs.~\eqref{eq:sage_gain}--\eqref{eq:sage_reg}, and $p_k^\text{safety}$ is the corresponding p-value or hard veto from the safety stream. The optional anytime-valid gate accepts a candidate iff $p_k\le a_k$; promotion still requires Eq.~\eqref{eq:promotion_contract}. Each rejected null recycles wealth in the spirit of LORD++~\citep{javanmard2018online,ramdas2018saffron}, so the level does not collapse to zero over a long evolution loop. Formal online false-discovery control requires conditionally super-uniform candidate-level p-values, predictable spending, the standard LORD++ dependence assumptions, fresh valid candidate-level evaluation data, and no invalid adaptive holdout reuse; otherwise the same ledger is only conservative audit accounting.

\section{Operational Contract Details}
\label{sec:impl_appendix}

\paragraph{Resident scheduling mechanics.}
A schedule is a tuple $\varsigma_{\mathrm{sch}}=(\text{kind},\Delta_{\mathrm{sch}},t_{\mathrm{daily}})$ with an interval recurrence $\Delta_{\mathrm{sch}}$ and an optional daily wall-clock anchor $t_{\mathrm{daily}}$. When both interval and daily anchors are set, the daily anchor takes precedence; the next firing time is
\begin{equation}
 t_{\mathrm{next}} =
 \begin{cases}
 t + \Delta_{\mathrm{sch}}, & \text{interval schedule } (\Delta_{\mathrm{sch}}>0) \\
 \min\{t' > t: t' \equiv t_{\mathrm{daily}} \pmod{24\mathrm{h}}\}, & \text{daily schedule.}
 \end{cases}
 \label{eq:agora_next}
\end{equation}
A newly created schedule is first armed by computing its initial slot rather than fired retroactively; a failed dispatch is recorded and re-armed rather than retried immediately; and a \textsc{reconcile} tick advances the World's ticket/campaign state machine.

\paragraph{Working memory.}
The per-task working memory of Section~\ref{sec:memory} is a sequence of tagged text blocks $(b_1,\ldots,b_m)$ under a character budget $B_{\mathrm{wm}}$, where the subscript avoids conflict with the blueprint symbol $B$: when $\sum_i|b_i|>B_{\mathrm{wm}}$, the lowest-salience block, by a salience score $\sigma(b_i)$ that weights block type, recency, and reference count, is evicted rather than the tail truncated, and the surviving blocks are injected as a structured \texttt{<working-memory>} block that the agent reads each turn.

\section{Advanced Self-Evolution Instruments}
\label{sec:advanced_appendix}
These four opt-in instruments extend the acceptance machinery of Section~\ref{sec:echo} across extended operation. They govern how a proven capability is retained, how memory is edited, how search explores, and whether a tool may rewrite its own code. Each instrument reuses the gate rather than bypassing it; Table~\ref{tab:advanced} reports controlled harness tests.

\paragraph{Anytime-valid retirement with bounded capacity.} Accepted artifacts, such as skills, guidelines, and workflow templates, accumulate without limit, and a short run of adverse outcomes can silently down-rank a genuinely useful artifact. The retirement guard gives each artifact a time-uniform confidence sequence for its success rate. For $s$ successes over $n$ bounded outcomes $X_i\in[0,1]$, a union-bound-over-time Hoeffding sequence sets a per-step level $\alpha_n=\alpha\cdot\nicefrac{6}{(\pi^2 n^2)}$, so that $\sum_n\alpha_n=\alpha$ by the Basel-problem identity $\sum_n\nicefrac{1}{n^2}=\nicefrac{\pi^2}{6}$, and radius
\begin{equation}
 r_n=\sqrt{\nicefrac{1}{2n}\log\nicefrac{2}{\alpha_n}},
 [\mathrm{LCB}_n,\mathrm{UCB}_n]=\bigl[\bar X_n-r_n, \bar X_n+r_n\bigr]\cap[0,1],
 \label{eq:ratchet_cs}
\end{equation}
where $\bar X_n=n^{-1}\sum_{i\le n}X_i$. By the union bound, the sequence satisfies $\Pr(\exists n: \mu\notin[\mathrm{LCB}_n,\mathrm{UCB}_n])\le\alpha$, valid at every stopping time, unlike a fixed-$n$ interval. An artifact becomes proven when its cumulative $\mathrm{LCB}_n$ first clears a threshold; its proven floor is the running maximum $\phi_t=\max_{s\le t}\mathrm{LCB}_s$, which only ratchets up with evidence. A proven artifact is retired only when a recent-window $\mathrm{UCB}$ falls below $\phi_t-\eta_{\mathrm{drift}}$ for a margin $\eta_{\mathrm{drift}}$.

The cumulative CS supports the proven floor, but a sliding-window retirement rule needs separate window-valid confidence accounting to carry the same formal false-retirement guarantee. In this paper the retirement column is therefore an empirical stress result. Real drift still requires fresh evidence and can be missed under weak or delayed signals. A bounded capacity $C$ evicts the weakest, unproven, low-$\mathrm{LCB}$ artifacts first. Empirically, over $100$ seeded streams of $400$ outcomes from a noisy skill with true success rate $0.6$, as Table~\ref{tab:advanced} reports, the sequence never lets the true mean escape its band and falsely retires the skill in $0\%$ of streams. The fixed-window majority gate falsely retires it in $96\%$ of streams because its per-window false-alarm probability accumulates over time.

\paragraph{Learnable memory-editing policy.} The episodic store, Eq.~\eqref{eq:prune}, admits every experience and prunes by a fixed value blend that is blind to \emph{coverage}: it fills the budget with high-quality near-duplicates and loses the diversity needed for varied queries, a failure mode we call \emph{redundancy collapse}. The memory-edit policy recasts memory management as a memory-as-action decision: for each write it selects an action $a$ from $\{\textsc{noop},\textsc{add},\textsc{merge},\textsc{replace},\textsc{drop}\}$ given context features $\mathbf{x}$ such as max-similarity, novelty, candidate and neighbor quality, fullness, and redundancy under a linear softmax $\pi_\theta(a\mid\mathbf{x})\propto\exp(\theta_a^\top\mathbf{x})$, trained online by REINFORCE with the same EWMA baseline $b$ as the variance-reduced workflow update:
\begin{equation}
 \theta_a \leftarrow \theta_a + \kappa(r-b)\bigl(\indic{a=a_t}-\pi_\theta(a\mid\mathbf{x})\bigr)\mathbf{x},
  b\leftarrow b+\beta_0(r-b),
 \label{eq:remedy}
\end{equation}
where $\kappa>0$ is the memory-policy learning rate and $r$ is the resulting retrieval utility or a delayed task outcome. Under redundancy collapse the policy learns the interpretable rule ``\textsc{add} novel, \textsc{merge} redundant'' and lifts held-out retrieval utility $32.7\%$ over the fixed value-pruner and $34.8\%$ over a raw append store in $30$ held-out worlds with capacity $8$, as Table~\ref{tab:advanced} reports; with a strong pruner already present it collapses to \textsc{add} and shows no observed regression in this controlled study.

\paragraph{Quality-diversity illumination.} Population search, such as \textsc{AFlow}, and the supernet, Eq.~\eqref{eq:maas}, maintain a single line of descent that a deceptive landscape can trap. The diversity-preserving search adds a MAP-Elites archive over a behavior descriptor $\mathbf{b}(g)$, such as a workflow's phase-count and tool-diversity, discretized into cells; each cell keeps its fittest genome, $\mathcal{A}[\mathrm{cell}(\mathbf{b})]=\arg\max f$. Variation samples a parent elite, uniformly, by fitness, or by descriptor-space novelty $\rho(\mathbf{b})=\nicefrac{1}{k}\sum_{j\in\mathrm{kNN}}\lVert\mathbf{b}-\mathbf{b}_j\rVert$, so diverse stepping-stones are retained and reported via \emph{coverage}, defined as filled cells, and \emph{QD-score} $\sum_\text{cells}f$. A surfaced genome remains a candidate gated by the paired or optional anytime-valid gate, so illumination widens exploration without weakening safety. On a deceptive two-basin task the method reaches the narrow global optimum in $21/30$ runs versus $8/30$ for an elitist hill-climber with equal evaluation budget, at $76\%$ mean behavior-space coverage, as Table~\ref{tab:advanced} records.

\paragraph{Gated self-referential tool evolution.} Skill and prompt evolution stop at natural-language artifacts; the self-referential repair mode allows a candidate edit to the source of one typed tool, following the self-improvement setting studied by the Darwin--G\"odel Machine and SICA~\citep{zhang2025dgm,robeyns2025sica}. The release rule is deliberately narrow: a candidate cannot replace the live tool unless it passes static policy checks, isolated execution checks, and a paired no-regression gate against a tool test suite $\mathcal{T}_{\mathrm{tool}}$ assembled from prior uses of that tool. Let $c_0$ be the incumbent tool, let $c$ be the candidate, and let $\mathcal{T}_{\mathrm{pass}}(c)=\{\tau\in\mathcal{T}_{\mathrm{tool}}:\mathrm{pass}(c,\tau)\}$:
\begin{equation}
 \text{adopt}(c)\iff
 \mathcal{T}_{\mathrm{pass}}(c)\supseteq \mathcal{T}_{\mathrm{pass}}(c_0)
 \wedge
 |\mathcal{T}_{\mathrm{pass}}(c)|>|\mathcal{T}_{\mathrm{pass}}(c_0)|,
 \label{eq:forge}
\end{equation}
that is, it must keep every test the incumbent passed and pass strictly more. Higher-risk write-back remains subject to human approval. Across a battery of buggy tools the repair mode fixes $3/3$ through the full gate while blocking all four tested adversarial proposal patterns at the guard or sandbox layers, as Table~\ref{tab:advanced} reports. The evidence is scoped to the guarded rewrite mechanism and the tested adversarial patterns.

\begin{table}[tb]
\centering
\footnotesize
\setlength{\tabcolsep}{3.8pt}\renewcommand{\arraystretch}{1.2}
\caption{Advanced self-evolution instruments under controlled harness tests.}
\label{tab:advanced}
\begin{tabular}{@{}>{\raggedright\arraybackslash}p{0.12\linewidth}
                    >{\raggedright\arraybackslash}p{0.46\linewidth}
                    >{\raggedright\arraybackslash}p{0.19\linewidth}
                    >{\raggedright\arraybackslash}p{0.15\linewidth}@{}}
\toprule
Instrument & Metric & Baseline & \logos \\
\midrule
Retirement guard & false retirement of a useful noisy skill ($100$ streams) & $0.96$ (fixed window) & $\mathbf{0.00}$ \\
 & CS coverage (escape rate, $\alpha{=}0.1$) & \textemdash & $0.00\le\alpha$ \\
Memory-edit policy & held-out retrieval utility @cap$=8$ ($30$ worlds) & $0.353$ (value-prune) & $\mathbf{0.468}$ \\
 & vs. raw FIFO append store & $0.347$ & $\mathbf{+34.8\%}$ \\
Diversity search & deceptive global-optimum reach (30 seeds) & $8/30$ (elitist) & $\mathbf{21/30}$ \\
 & behavior-space coverage & \textemdash & $0.76$ \\
Tool repair & buggy tools repaired through the gate & \textemdash & $\mathbf{3/3}$ \\
 & adversarial rewrites blocked & \textemdash & $\mathbf{4/4}$ \\
\bottomrule
\end{tabular}
\end{table}

\section{Experimental Protocol and Extended Results}
\label{sec:exp_appendix}

\subsection{Protocol}
\label{sec:exp_protocol}

For every quantitative cell, the unit of analysis is a task, candidate decision, or generated policy case as stated in the table or surrounding text. Hosted-model evaluations use temperature zero unless a method explicitly requires multiple samples. Task identifiers, prompts, scorer versions, model identifiers, decoding settings, and candidate hashes are recorded with each run. Tables report concrete model identifiers where model choice affects the result. The evaluation records additionally retain provider, snapshot or run date, decoding settings, scorer harness, prompt bundle, and price index needed for an audit. Runtime connection and deployment-environment details are not part of the paper. Self-evolution uses disjoint proposal, optional selection, final-gate, and report sets; the proposal process cannot inspect final-gate examples. Paired comparisons reuse task identifiers and, for gate-only replay, the same candidate records.

\paragraph{Uncertainty convention.}
Binomial rates use Wilson $95\%$ intervals when displayed; paired task outcomes use a paired test or seed-level range as stated in the relevant table or surrounding text. Generated stress suites report the number of seeds and cases and should be interpreted as controlled mechanism evidence, not as samples from an unknown production population. For generated rows, the evidence unit is the disclosed template family, perturbation class, seed rule, and independent evaluator or oracle used to label it; raw case counts mainly show coverage of that generated grid. Case counts created from the same contract templates are not independent safety situations, so zero observed violations mean no violations of the encoded rule under that grid.

\paragraph{Gate convention.}
The default paired gate requires positive mean held-out gain, a declared minimum margin, and the stated regression budget. Table~\ref{tab:gate_defaults} gives the defaults used unless a table states otherwise. The optional anytime-valid gate uses evidence for gain and regression under the assumptions in Appendix~\ref{sec:method_ext}. Safety vetoes and root-policy conflicts override utility evidence. Hyper-parameters that materially affect a table are stated in that table or its surrounding text.

\begin{table}[tb]
\centering\small
\setlength{\tabcolsep}{4.0pt}\renewcommand{\arraystretch}{1.1}
\caption{Default promotion-gate settings. These are empirical release-rule
defaults, not statistical error guarantees.}
\label{tab:gate_defaults}
\begin{tabular}{@{}L{0.24\linewidth}L{0.24\linewidth}L{0.42\linewidth}@{}}
\toprule
Setting & Default & Interpretation \\
\midrule
$m$ & score $s$ for benchmark tables; utility $u$ only when declared & tables isolate accuracy unless cost, latency, or risk are explicitly priced \\
$\delta_{\min}$ & binary score gates: one task-equivalent improvement, $\nicefrac{1}{|\mathcal{H}_{\mathrm{gate}}|}$; continuous gates: a pre-declared minimum practical effect & prevents promotion on a pure tie or a metric-dependent numerical fluctuation \\
$\epsilon_{\text{reg}}$ & $0$ for score gates & any task-level score drop counts as a sample regression unless a tolerance is declared \\
$R_{\max}$ & $0$ for high-risk and safety-surface changes; $1$ for low-risk offline skill/memory studies with $|\mathcal{H}_{\mathrm{gate}}|\ge20$ & zero observed regressions by default for authority-sensitive changes; small empirical budget for low-risk mechanism checks \\
Safety veto & enabled for write-capable tools, approval policy, verifier changes, and safety prompts & blocks promotion even when task score improves \\
\bottomrule
\end{tabular}
\end{table}

\paragraph{Scope of measurement.}
External benchmark results are included where the task split and scorer are specified. Heavier interactive environments are treated as separate validation targets rather than represented by proxy substitutes. Controlled suites with known ground truth isolate compiler, verifier, routing, safe-delegation, and gate behavior; benchmark, user-study, and security evidence are reported as separate evidence classes when used.

\subsection{External Benchmark Protocol Details (Q1--Q4)}
\label{sec:exp_flagship}
Table~\ref{tab:flagship} reports direct single-model reference results on the stated external task families, including tool-use, mathematical reasoning, code generation, question answering, long-memory, and safety benchmarks~\citep{patil2025bfcl,yao2024taubench,barres2025tau2,shi2026tauknowledge,cobbe2021gsm8k,hendrycks2021math,chen2021codex,austin2021mbpp,yang2018hotpotqa,dua2019drop,wang2024mmlupro,wu2025longmemeval,maharana2024locomo,zhang2024agentsafetybench}. These cells validate the scorer and provide matched baselines for the memory and cascade studies; they are not, by themselves, evidence that compilation or self-evolution improves the task. The table and surrounding notes state the model roster, task count, and any restricted protocol scope.

\begin{table}[tb]
\centering\small
\setlength{\tabcolsep}{3.5pt}\renewcommand{\arraystretch}{1.1}
\caption{Official-scale direct-mode sweep across three model identifiers.}
\label{tab:flagship}
\begin{tabular}{lcccc}
\toprule
Suite & $n$ & \texttt{gpt-4o-mini} & \texttt{gpt-4.1} & \texttt{gpt-5.1} \\
\midrule
BFCL (accuracy) & $1281$ & $0.869$ & $0.872$ & $0.852$ \\
$\tau^2$-bench retail (pass$^1$) & $114\times4$ & $0.428$ & $0.787$ & $0.638$ \\
$\tau^2$-bench airline (pass$^1$)& $50\times4$ & $0.240$ & $0.560$ & $0.520$ \\
$\tau^3$-bench banking (pass$^1$) & $97\times3$ & $0.055$ & $0.065$ & $0.045$ \\
GSM8K (pass@1) & $1319$ & $0.937$ & $0.952$ & $0.966$ \\
MATH-500 (pass@1) & $500$ & $0.748$ & $0.836$ & $0.866$ \\
HumanEval (pass@1) & $164$ & $0.848$ & $0.939$ & $0.957$ \\
MBPP (pass@1) & $257$ & $0.829$ & $0.899$ & $0.903$ \\
HotpotQA (EM / F$_1$) & $7405$ & $0.607/0.745$ & $0.646/0.797$ & $0.640/0.785$ \\
DROP (EM / F$_1$) & $9535$ & $0.614/0.635$ & $0.674/0.738$ & $0.624/0.676$ \\
MMLU-Pro (accuracy)$\dagger$ & $12032$ & $0.630$ & $0.655$ & $0.805$ \\
LongMemEval, oracle (pass rate) & $500$ & $0.744$ & $0.906$ & $0.890$ \\
LoCoMo (mean F$_1$) & $1540$ & $0.516$ & $0.591$ & $0.596$ \\
Agent-SafetyBench (safety rate)$^{\mathrm{j}}$ & $2000$ & $0.670$ & $0.761$ & $0.864$ \\
\bottomrule
\end{tabular}
\end{table}

\begin{figure}[tb]
 \centering
 \includegraphics[width=0.9\linewidth]{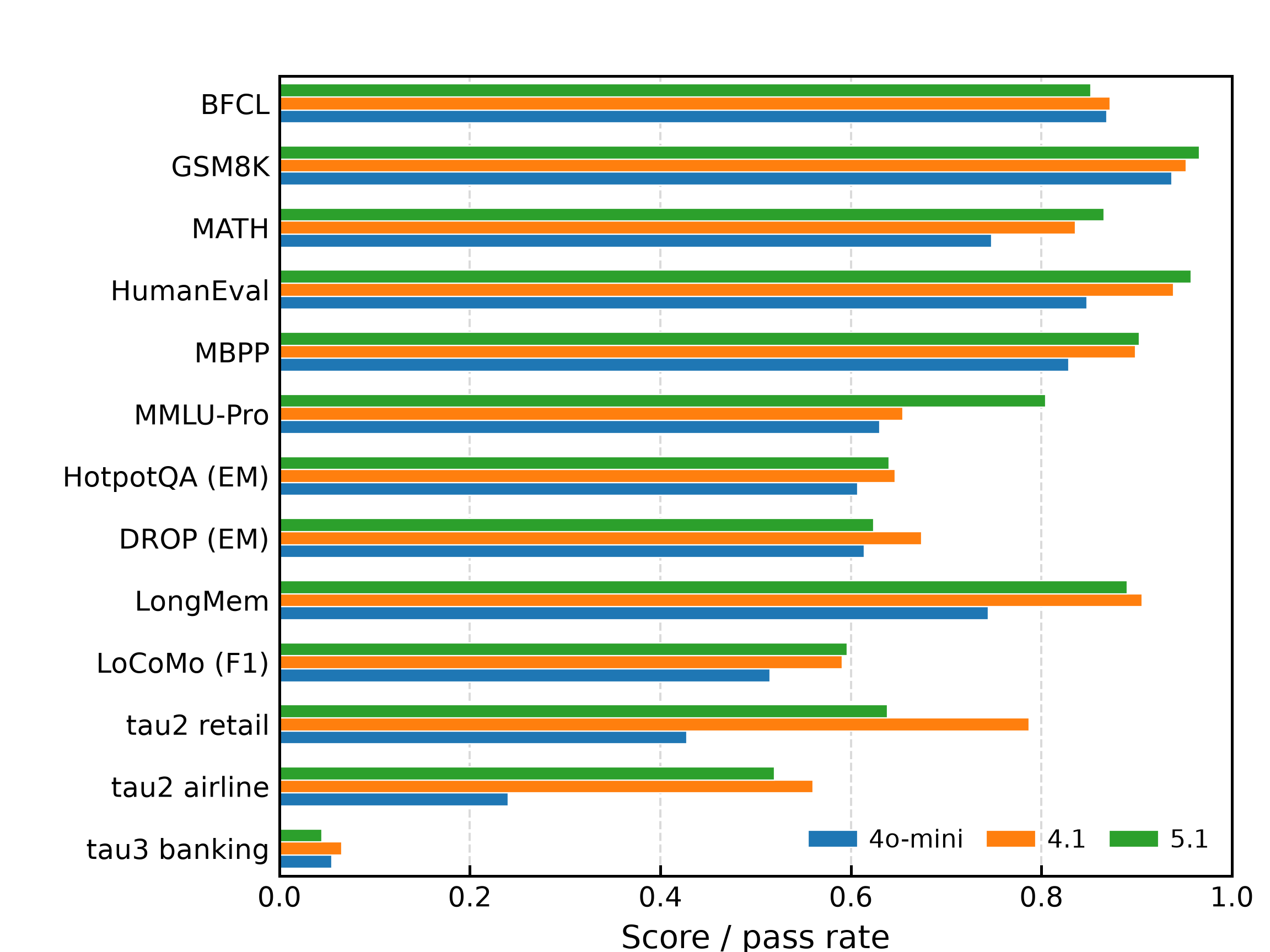}
 \caption{Grouped view of the official-scale direct-mode sweep.}
 \label{fig:flagship_sweep}
\end{figure}

\begin{figure}[tb]
 \centering
 \includegraphics[width=0.9\linewidth]{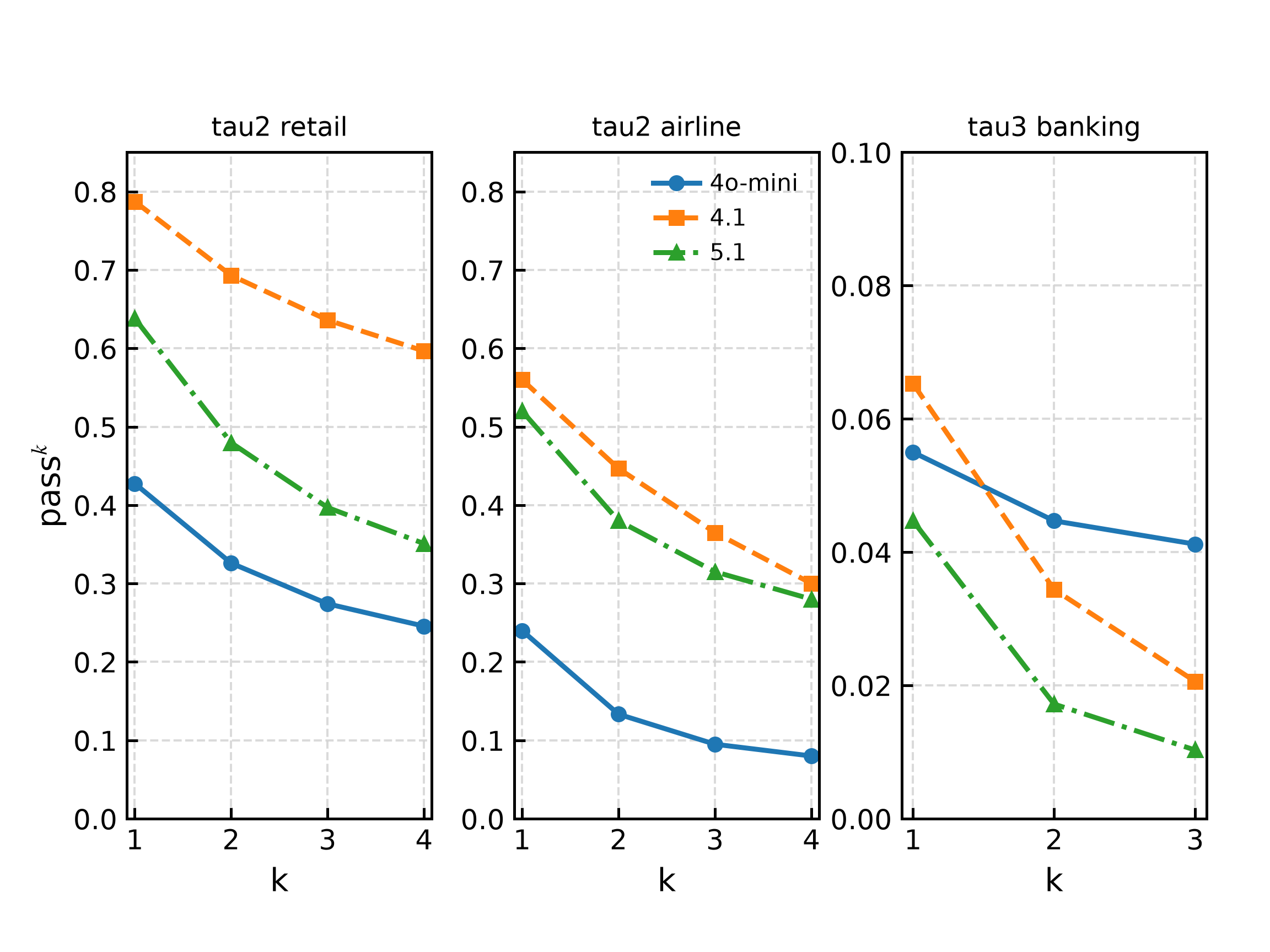}
 \caption{Reliability decay on repeated $\tau^2$ and $\tau^3$ trials.}
 \label{fig:tau_passk}
\end{figure}

Figures~\ref{fig:flagship_sweep} and~\ref{fig:tau_passk} visualize the same
direct-mode sweep: the former groups task-family scores across model
identifiers, while the latter shows pass-rate decay under repeated trials.

Three patterns are noteworthy. First, \texttt{gpt-4.1} outperforms \texttt{gpt-5.1} in several cases, including BFCL, $\tau^2$-bench, and LongMemEval, a reminder that model order is not monotone across task families. This motivates the model-agnostic, per-query routing of Section~\ref{sec:routing}. Second, the HotpotQA, DROP, and MMLU-Pro rows use the official distractor context, official DROP span-bag scoring, and the official $5$-shot MMLU-Pro prompt format. The HotpotQA and DROP runs are complete at their official sizes of $7405$ and $9535$, with official EM and token-F$_1$ in Table~\ref{tab:flagship}, and all three MMLU-Pro models use the full official $12032$-task protocol. Third, the $\tau^3$-bench \texttt{banking\_knowledge} row is a knowledge-grounded action setting: support tasks must be grounded in a policy corpus before action. The resulting pass$^1$ values remain low and nearly flat across models, consistent with the published difficulty of this domain; the row measures direct single-model inference rather than compilation, memory, or self-evolution.

\subsection{Code and SWE (Q2, extended)}
\label{sec:exp_code}
Table~\ref{tab:code} reports the code-generation rows on \texttt{gpt-5.1}. The code rows are system-level success under the compiled harness; because the compiled arm may use additional calls and verifier-visible tests, they should be compared to leaderboard single-call pass@1 only with hidden-test separation. Dedicated coding-agent benchmarks such as SWE-bench remain complementary evaluation targets~\citep{jimenez2024swebench,yang2024sweagent}.

\begin{table}[tb]
\centering\small
\setlength{\tabcolsep}{5pt}\renewcommand{\arraystretch}{1.1}
\caption{Code-generation rows under the compiled harness.}
\label{tab:code}
\begin{tabular}{@{}lccc@{}}
\toprule
Suite & direct & compiled & $\Delta$ \\
\midrule
HumanEval & $92.7$ & $96.3$ & $+3.6$ \\
MBPP & $90.0$ & $92.5$ & $+2.5$ \\
\bottomrule
\end{tabular}
\end{table}

\subsection{Self-Evolution: Aggregate Gate Ablation and Non-Saturated Cells (Q3)}
\label{sec:exp_evolve_full}
Table~\ref{tab:evolve} gives the full numeric aggregate behind Figure~\ref{fig:gate_ablation}.

\begin{table}[tb]
\centering\small
\setlength{\tabcolsep}{3.4pt}\renewcommand{\arraystretch}{1.12}
\caption{Aggregate self-evolution gate ablation on non-saturated suites.}
\label{tab:evolve}
\begin{threeparttable}
\begin{tabular}{lccccccc}
\toprule
Gate & $K$ & before & after & $\Delta$ & Regress ($\downarrow$) & Misevol. ($\downarrow$) & Rej. \\
\midrule
No gate & $1$ & $86.8$ & $85.2$ & $-1.6$ & $1.11$ & $0.56$ & $0.22$ \\
Proxy gate & $1$ & $87.3$ & $86.3$ & $-0.9$ & $1.22$ & $0.56$ & $0.11$ \\
Paired gate & $1$ & $86.8$ & $86.6$ & $-0.2$ & $0.22$ & $0.11$ & $0.89$ \\
Paired gate & $3$ & $87.5$ & $86.8$ & $-0.7$ & $0.67$ & $0.33$ & $0.56$ \\
\midrule
Anytime-valid gate & $1$ & $86.1$ & $86.1$ & $\mathbf{0.0}$ & $\mathbf{0.00}$ & $\mathbf{0.00}$ & $1.00$ \\
\bottomrule
\end{tabular}
\begin{tablenotes}\footnotesize
\item The ``before'' column differs slightly across rules because every cell re-evaluates its own baseline in a separate run, so rows are internally consistent but not baseline-matched to each other. The anytime-valid gate's zero misevolution here comes with full rejection: it refused to support gains the streaming evidence could not distinguish from noise. This is conservative error-accounted behavior under Appendix~\ref{sec:sage_lord_formal}, not evidence of high adoption power. The four advanced instruments are isolated separately in Table~\ref{tab:advanced}.
\end{tablenotes}
\end{threeparttable}
\end{table}

All before/after report regressions are counted fail-closed: if a task passed in the before run but is missing from the after run, the report treats it as a regression rather than an omitted datum. Gate means follow the same rule: a candidate missing a baseline gate task receives score $0$ on that task, and the task remains in the denominator. This prevents a harness timeout or dropped task from making an evolved candidate appear safer or more accurate than it is.

\begin{table}[tb]
\centering\small
\setlength{\tabcolsep}{4pt}\renewcommand{\arraystretch}{1.1}
\caption{API targeted strengthening rerun on \texttt{gpt-4o-mini}.}
\label{tab:api_strengthening}
\begin{tabular}{llcccc}
\toprule
Suite & Gate & Cells & $\bar\Delta$ & Regr. & Adopt \\
\midrule
HotpotQA & off & $5$ & $-0.6$ & $11$ & $5/5$ \\
HotpotQA & proxy & $5$ & $-1.2$ & $12$ & $5/5$ \\
HotpotQA & execution & $5$ & $-0.2$ & $3$ & $1/5$ \\
HotpotQA & anytime-valid & $5$ & $0.0$ & $0$ & $0/5$ \\
AIME & off & $1$ & $+16.7$ & $0$ & $1/1$ \\
AIME & proxy & $1$ & $0.0$ & $1$ & $1/1$ \\
AIME & paired, $K{=}3$ & $1$ & $0.0$ & $0$ & $0/1$ \\
AIME & anytime-valid & $1$ & $0.0$ & $0$ & $0/1$ \\
\bottomrule
\end{tabular}
\end{table}

The aggregate of Table~\ref{tab:evolve} averages over multi-seed cells on \texttt{gpt-4.1}; Table~\ref{tab:evolve_cells} reports a complementary \texttt{gpt-4o-mini} per-decision view with an $n{=}20$ report holdout, proposal/gate budget $28$, and seed $0$. At this scale, the selected \texttt{gpt-4o-mini} candidates do not regress on the report holdout: every $K{=}1$ rule adopts a candidate that either helps MATH-500 by $+5.0$ points or is neutral on HotpotQA. The acceptance rules are therefore indistinguishable when the candidate stream is harmless. The one visible discipline event is beam-by-gate with $K{=}3$ on MATH-500, where the gate rejects the selected candidate rather than adopt an unproven change. The primary gate evidence remains the non-saturated \texttt{gpt-4.1} matrix of Table~\ref{tab:evolve_direct}, where off/proxy rules adopt harmful changes on most seeds.

\begin{table}[tb]
\centering\small
\setlength{\tabcolsep}{4pt}\renewcommand{\arraystretch}{1.1}
\caption{\texttt{gpt-4o-mini} evolution cells ($n{=}20$ report holdout, proposal/gate budget $28$, seed $0$). ``adopt'' = the rule accepted the candidate. Pass@1 [\%].}
\label{tab:evolve_cells}
\begin{tabular}{@{}llcccc@{}}
\toprule
Setting & Suite & before & after & Regress & Adopt? \\
\midrule
No gate ($K{=}1$) & MATH-500 & $80.0$ & $85.0$ & $0$ & adopt \\
Proxy ($K{=}1$) & MATH-500 & $80.0$ & $85.0$ & $0$ & adopt \\
Paired ($K{=}1$) & MATH-500 & $80.0$ & $85.0$ & $0$ & adopt \\
Paired ($K{=}3$) & MATH-500 & $70.0$ & $70.0$ & $0$ & reject \\
No gate ($K{=}1$) & HotpotQA & $65.0$ & $65.0$ & $0$ & adopt \\
Proxy ($K{=}1$) & HotpotQA & $65.0$ & $65.0$ & $0$ & adopt \\
Paired ($K{=}1$) & HotpotQA & $65.0$ & $65.0$ & $0$ & adopt \\
Paired ($K{=}3$) & HotpotQA & $65.0$ & $65.0$ & $0$ & adopt \\
\bottomrule
\end{tabular}
\end{table}

\subsection{Target-Side Re-Verification Boundary (Q6)}
\label{sec:exp_transfer_full}

Cross-deployment reuse is evaluated as a release-control property rather than an unconditional accuracy claim. A recalled skill, design precedent, or workplace lesson is imported as an isolated candidate, assigned target-side provenance, and evaluated under the target pack's verifier, permissions, and holdout. The conformance suite verifies that no imported candidate is promoted directly and that unsafe imports are rejected or escalated. The result is authority-boundary evidence: transferred artifacts must pass target-side checks before they can affect the target deployment.

\subsection{Routing and Cascade Protocol (Q7)}
\label{sec:exp_cascade_full}
Table~\ref{tab:cascade} gives the numeric values behind Figure~\ref{fig:cascade}.

\begin{table}[tb]
\centering\small
\setlength{\tabcolsep}{4.0pt}\renewcommand{\arraystretch}{1.12}
\caption{Verification-gated cascade accuracy frontier.}
\label{tab:cascade}
\begin{tabular}{lccccc}
\toprule
Suite & Floor & Ref. & Cascade & vs Floor & vs Ref. \\
\midrule
GSM8K ($n{=}200$) & $0.950$ & $0.985$ & $0.940$ & $-0.010$ & $-0.045$ \\
MATH-500 ($n{=}200$) & $0.785$ & $0.850$ & $0.810$ & $+0.025$ & $-0.040$ \\
HumanEval ($n{=}164$) & $0.841$ & $0.951$ & $\mathbf{0.976}$ & $\mathbf{+0.134}$ & $+0.024$ n.s. \\
MBPP ($n{=}257$) & $0.825$ & $0.895$ & $\mathbf{0.914}$ & $\mathbf{+0.089}$ & $+0.019$ n.s. \\
\bottomrule
\end{tabular}
\end{table}

\paragraph{Measured protocol behind Table~\ref{tab:cascade}.}
The measured Q7 study uses matched task IDs under three conditions: Floor, Reference, and Cascade. Paired tests are Holm-corrected within each suite for floor-vs-cascade, reference-vs-cascade, and floor-vs-reference. On HumanEval, the execution verifier cuts cheap-floor failures from $26$ to $4$, with $22$ improvements, $0$ regressions, and Holm-adjusted $p=1.43{\times}10^{-6}$. It finishes $86.0\%$ of tasks at the cheap stage. MBPP gives an independent code-suite replication: failures fall from $45$ to $22$, with $23$ improvements, $0$ regressions, and Holm-adjusted $p=7.15{\times}10^{-7}$. Against the direct \texttt{gpt-5.1} reference, the point estimates are positive but not significant. The paper, therefore, claims significant failure reduction versus the cheap floor, while GSM8K and MATH-500 show the boundary where non-code verifiers are not strong enough to dominate direct \texttt{gpt-5.1}.

\subsection{Collective Router (Q8): Numeric Table}
\label{sec:exp_collective_full}
Table~\ref{tab:collective} provides the numeric values behind the frontier of Figure~\ref{fig:chorus_frontier}, along with the replica-diversity study of Section~\ref{sec:exp_collective}.

\begin{table}[tb]
\centering\small
\setlength{\tabcolsep}{4.0pt}\renewcommand{\arraystretch}{1.12}
\caption{Collective-router numeric results.}
\label{tab:collective}
\begin{tabular}{@{}llcc@{}}
\toprule
Study & Policy & Accuracy ($\uparrow$) & Cost ($\downarrow$) \\
\midrule
Frontier & single \texttt{gpt-5.1} & $0.975$ [$0.938,1.000$] & $2.500$ \\
 & single \texttt{gpt-4o-mini} & $0.875$ [$0.812,0.938$] & $0.150$ \\
 & oracle (per-query best) & $0.975$ [$0.938,1.000$] & \textemdash \\
 & default preset (table head, cost-blind) & $0.975$ [$0.938,1.000$] & $2.500$ \\
 & \logos preset ($\lambda_{\mathrm{cost}}{=}0$) & $\mathbf{0.975}$ [$0.938,1.000$] & $\mathbf{1.795}$ \\
 & \logos preset ($\lambda_{\mathrm{cost}}{=}0.1$) & $0.913$ [$0.875,0.938$] & $0.385$ \\
 & \logos preset ($\lambda_{\mathrm{cost}}{=}0.3$) & $0.913$ [$0.812,1.000$] & $0.444$ \\
 & \logos preset ($\lambda_{\mathrm{cost}}{=}1.0$) & $0.875$ [$0.812,0.938$] & $0.150$ \\
\midrule
Replicas & single call & $0.800$ & $1\times$ \\
 & $3\times$ replicas + consensus, Eq.~\eqref{eq:chorus_medoid}, & $\mathbf{0.900}$ & $3\times$ \\
 & $3\times$ replicas + deep aggregator & $\mathbf{0.900}$ & $4\times$ \\
\bottomrule
\end{tabular}
\end{table}

\subsection{Operational-Control Suite Details (Q9--Q22)}
\label{sec:exp_operational_details}

Q9--Q12 test whether changed operational material updates downstream evidence without forcing a full rebuild. Document drift is measured on $8192$ chunks, API drift on $3072$ schema events, and search/impact on $8192$ generated queries or changes. \logos improves document-drift F$_1$ from $0.857$ to $1.000$, API-drift F$_1$ from $0.500$ to $1.000$, stale-index recall@3 from $0.404$ to $0.960$, and impact F$_1$ from $0.935$ to $0.995$, while reducing unnecessary invalidation.

Q13--Q16 test the release and authority boundary. The candidate stream mixes beneficial, neutral, harmful, task-specific regression, and safety-tradeoff families; transfer, governance, and grounding suites each use $8192$ labeled cases. A proxy release rule adopts harmful candidates in $40.0\%$ of decisions, while no harmful adoption is observed for \logos in this suite. Direct transfer accepts all unsafe imports in the generated unsafe-import family; target-side re-verification blocks them while retaining useful imports. Root-policy precedence reaches $1.000$ policy accuracy on labeled cases, and evidence checking rejects fabricated citations that schema-only grounding accepts.

Q17--Q20 measure side-effect control, isolation, stale-memory invalidation, and rollback integrity. Each action, tenant, and memory family uses $8192$ cases; rollback/audit uses $4096$ updates. A static allow-list executes $25.0\%$ unsafe actions; no unsafe \logos execution is observed in this generated suite, at a cost of $21.2$ human prompts per $100$ tasks. Scoped recall prevents the intentionally global-memory leakage baseline, provenance invalidation removes stale-memory use with one-step adaptation, and signed audit plus structural rollback reaches $1.000$ on rollback and tamper-detection metrics.

Q21–Q22 test whether human directions and governed workflows adhere to the same precedence rules. Q21 contains $8192$ helpful, ambiguous, contradictory, and unsafe feedback turns; Q22 runs $8192$ workflows that combine retrieval, evidence checking, tool action, memory write, escalation, and audit. An append-only baseline accepts all unsafe feedback in the generated unsafe feedback family without providing clarification; \logos shows no acceptance of unsafe-feedback and clarifies all labeled ambiguous cases in this suite. Ungoverned workflows succeed on $0.824$ tasks but incur incidents on $0.400$; \logos raises success to $0.951$ with no observed incidents.

\subsection{Portability Boundary}
\label{sec:exp_portability}

The normalized trace and emitter contracts support re-expression across compatible backends, but portability does not imply behavioral equivalence. Model behavior, tool semantics, latency, pricing, and verifier coverage can change after migration. Every migrated artifact, therefore, re-enters target-side validation and promotion; the paper makes no claim of bit-exact replay or automatic cross-provider accuracy preservation.
\end{document}